\DeclareMathOperator*{\argmax}{\arg\!\max}
\newtheorem{theorem}{Theorem}
\newtheorem{proposition}[theorem]{Proposition}
\newtheorem{definition}[theorem]{Definition}
\theoremstyle{remark}
\theoremstyle{definition}
\renewcommand{\thetheorem}{\arabic{theorem}}
\newcommand\numberthis{\addtocounter{equation}{1}\tag{\theequation}}
\definecolor{ulesred}{rgb}{0.65,0.18,0.21} % Uni Tuebingen Rot
\newcommand{\ulesred}{\color{ulesred}}
\definecolor{darkgreen}{rgb}{0,0.5,0.2}
\definecolor{verylightgray}{gray}{0.85} 
\definecolor{mediumgray}{gray}{0.5} 
\definecolor{darkgray}{gray}{0.4} 
\definecolor{lavender}{cmyk}{0,0.5,0,0}
\definecolor{darkblue}{rgb}{0,0,0.5}
\definecolor{orange}{rgb}{1,0.5,0}
\newcommand{\black}{\color{black}}
\newcommand{\blue}{\color{blue}}
\newcommand{\red}{\color{red}}
\newcommand{\green}{\color{green}}
\newcommand{\white}{\color{white}}
\newcommand{\magenta}{\color{magenta}}
\newcommand{\Xcal}{\mathcal{X}}
\newcommand{\Ycal}{\mathcal{Y}}
\newcommand{\accuracy}[1]{\text{acc}_{#1}}
\newcommand{\information}[1]{\text{I}_{#1}}
\title{Specialists Outperform Generalists in Ensemble Classification}
\author{ {\bf Sascha Meyen}$^1$ \\
\And
{\bf Frieder Göppert}$^1$  \\
\And
{\bf Helen Alber}$^1$   \\ \\
$^1$ Department of Computer Science, University of Tübingen, Tübingen, Germany \\
$^2$ Max Planck Institute for Intelligent Systems, Tübingen, Germany \\
\And
{\bf Ulrike von Luxburg}$^{1, 2}$   \\
\And
{\bf Volker H. Franz}$^1$   \\ \\
}
\begin{document}

\maketitle

\begin{abstract}
% -----------------------------------------------------------------------------

Consider an ensemble of $k$ individual classifiers whose accuracies are known. Upon receiving a test point, each of the classifiers outputs a predicted label and a confidence in its prediction for this particular test point. 
In this paper, we address the question of whether we can determine the accuracy of the ensemble. 
Surprisingly, even when classifiers are combined in the statistically optimal way in this setting, the accuracy of the resulting ensemble classifier cannot be computed from the accuracies of the individual classifiers---as would be the case in the standard setting of confidence weighted majority voting. 
We prove tight upper and lower bounds on the ensemble accuracy. We explicitly construct the individual classifiers that attain the upper and lower bounds: specialists and generalists. Our theoretical results have very practical consequences: 
(1) If we use ensemble methods and have the choice to construct our individual (independent) classifiers from scratch, then we should aim for specialist classifiers rather than generalists. (2) Our bounds can be used to determine how many classifiers are at least required to achieve a desired ensemble accuracy. Finally, we improve our bounds by considering the mutual information between the true label and the individual classifier's output.

% For example, combining three classifiers with individual accuracies of 70\% each can yield an ensemble accuracy between 78\%--89\%. 

% \emph{Specialists} performing good on some but bad on other instances attain the upper bound; \emph{generalists} performing moderately on all instances attain the lower bound.

% -----------------------------------------------------------------------------

% 200 words

\end{abstract}

% #### ##    ## ######## ########   #######  
%  ##  ###   ##    ##    ##     ## ##     ## 
%  ##  ####  ##    ##    ##     ## ##     ## 
%  ##  ## ## ##    ##    ########  ##     ## 
%  ##  ##  ####    ##    ##   ##   ##     ## 
%  ##  ##   ###    ##    ##    ##  ##     ## 
% #### ##    ##    ##    ##     ##  #######  

\section{Introduction}
Suppose a black-box classifier returns a prediction along with a confidence value indicating the probability that this prediction is correct. For example, a deep neural network may take an image of a patient's retina and predict whether the patient suffers from a retinal disease (example taken from \citealp{ayhan2020expert,leibig2017leveraging,ayhan2018test}). It also outputs a confidence in this prediction based on the particular retina image. Suppose we apply $k$ black-box classifiers each receiving its own retina image as input, observe their individual prediction-confidence output pairs and combine them into an ensemble classifier. In this paper, we investigate the question: {\bf Given that the individual black-box classifiers output predicted labels together with confidences, what can we say about the accuracy achieved by the ensemble classifier? And can we characterize which type of individual classifier leads to a better vs. worse ensemble performance?}

At first glance, this problem seems trivial. If we know that all individual classifiers have the same accuracy, then the best we can do is a \emph{majority vote} (MV; \citealp{grofman1983thirteen,de2014essai}). This is common practice in many ensemble approaches in machine learning, for example in random forests \citep{breiman2001random}. If some of the individual classifiers are known to have a higher accuracy than others, they should receive a higher weight. Based on this knowledge, the best we can do is \emph{confidence weighted majority voting} (CWMV; see \citealp{nitzan1982optimal,einhorn1977quality}), where the confidence in a classifier is derived from its overall accuracy. In both cases, the accuracy of the ensemble classifier is well known and can be computed from the accuracies of the individual classifiers (under mild assumptions such as conditional independence).

However, these approaches do not fully capture the retina example from the beginning because they do not consider the classifier's ``local confidences'': For each image, the classifier produces a confidence in its prediction for this particular image.
And this is where it gets interesting: Instead of using CWMV, where the confidence is based on the overall accuracy of the classifier, we get better classification results by using the local confidences for each prediction. Somewhat surprisingly, in this setting we can no longer compute exactly what the resulting ensemble accuracy is going to be. On the contrary. We prove in Section~\ref{sec:IndAccGroupAcc} that, depending on the distribution of confidence values, there is a whole range of ensemble accuracies that can occur. 
\textbf{Our contribution} is to derive lower and upper bounds on the ensemble accuracy in this setting. This is interesting if we want to determine how many classifiers (each requiring an independently drawn, potentially costly retina image) are needed to guarantee a certain ensemble accuracy. From our proofs, we derive guiding principles on how to construct ideal classifiers for an ensemble: We will show that it is better to include ``specialist'' classifiers that are particularly good on some instances and close to random guessing on others rather than to include ``generalist'' classifiers that are moderately good on all instances. This is true for independent specialists that did not coordinate to specialize on distinct subsets of the input space. 

In the second part (Section~\ref{sec:Information}), we additionally look into the mutual information as an indicator for the effectiveness of a classifier in ensembles. We provide better bounds on the possible ensemble accuracies. Even when classifiers have the same accuracy, they can differ in how much information they provide about the true label and therefore differ in their contribution to the ensemble.

% ########  ######## ##          ###    ######## ######## ########  
% ##     ## ##       ##         ## ##      ##    ##       ##     ## 
% ##     ## ##       ##        ##   ##     ##    ##       ##     ## 
% ########  ######   ##       ##     ##    ##    ######   ##     ## 
% ##   ##   ##       ##       #########    ##    ##       ##     ## 
% ##    ##  ##       ##       ##     ##    ##    ##       ##     ## 
% ##     ## ######## ######## ##     ##    ##    ######## ########  

%\section*{Related work}
%label{sec:relatedWork}

Of course, ensemble methods are abundant in machine learning and statistics, just consider random forests, bagging and boosting as examples. Compared to these lines of work, our approach starts from the other end. Rather than explicitly training certain ensembles, we are looking for generic building principles for ensembles. We build on the setting of probability elicitation \citep{degroot1983comparison,masnadi2013refinement} and ask the question: Which possible ensemble accuracies can be achieved by a set of individual classifiers with known accuracies, and which kind of individual classifiers produce the best- and worst-case ensembles?

% ##    ##  #######  ########    ###    ######## ####  #######  ##    ## 
% ###   ## ##     ##    ##      ## ##      ##     ##  ##     ## ###   ## 
% ####  ## ##     ##    ##     ##   ##     ##     ##  ##     ## ####  ## 
% ## ## ## ##     ##    ##    ##     ##    ##     ##  ##     ## ## ## ## 
% ##  #### ##     ##    ##    #########    ##     ##  ##     ## ##  #### 
% ##   ### ##     ##    ##    ##     ##    ##     ##  ##     ## ##   ### 
% ##    ##  #######     ##    ##     ##    ##    ####  #######  ##    ## 

\section{Setup, notation, and background}
\label{sec:notation}

\subsection{Individual classifiers and confidences}

\begin{figure}[!ht]
\includegraphics[width=\linewidth]{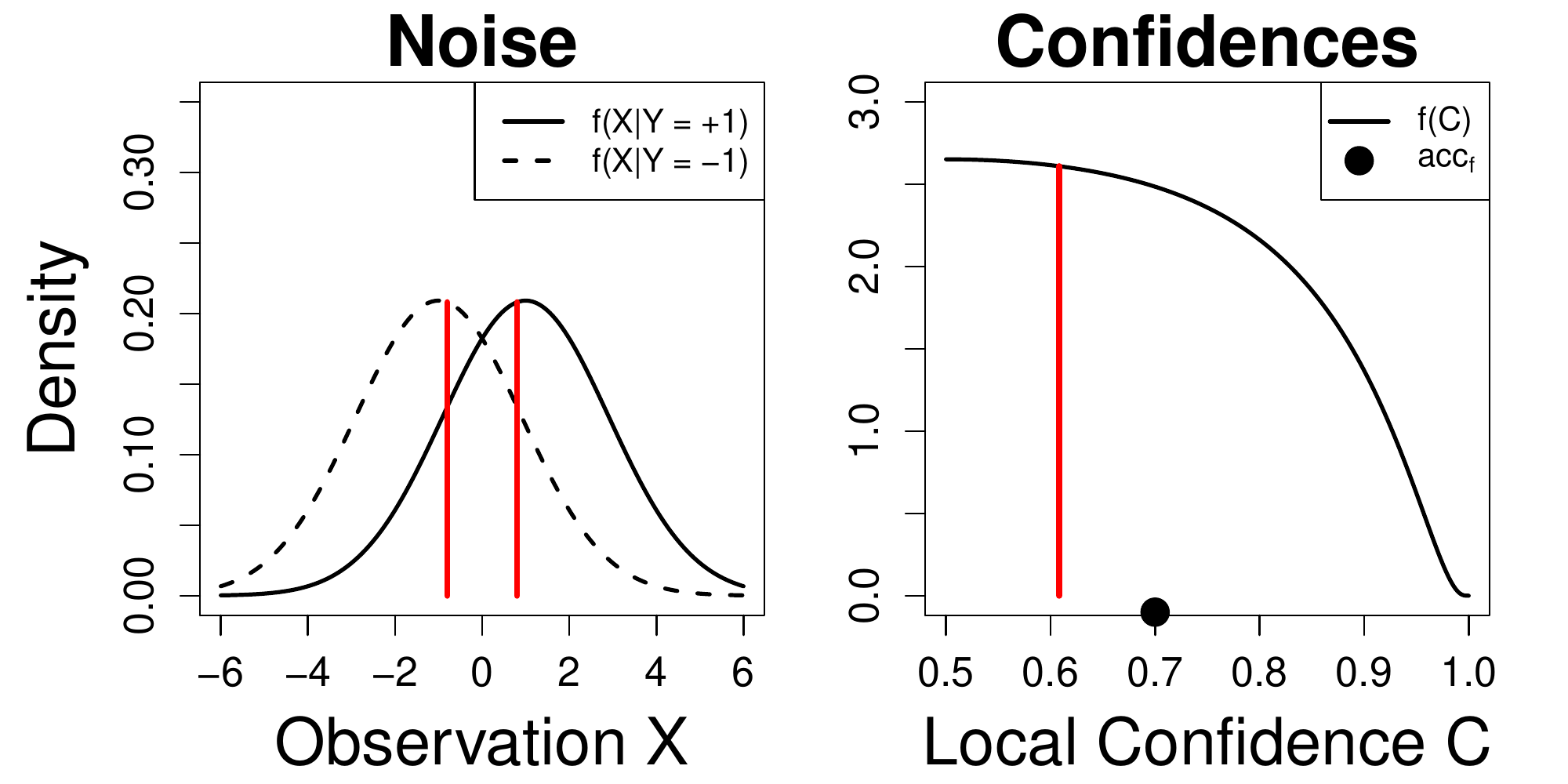}
\caption{
\textbf{Example of a confidence distribution.} 
A classification setting with normal noise distribution (left) is mapped to a confidence distribution (right). For normally distributed noise with $\sigma = 2.1$, an observation of $X = 0.8$ (indicated by the red bar) corresponds to a local confidence of $C(X) = 61\%$ such that an individual classifier outputs $(\hat{Y} = +1, C = 61\%)$. Therefore, the density at $X=0.8$ (plus that at $X=-0.8$, because it produces the same confidence albeit predicting $\hat{Y}=-1$) corresponds to the density at confidence $C=61\%$ on the right. 
The overall accuracy of the black-box classifier in this setting is $\accuracy{f} = 70\%$, indicated by the black dot. 
}
\label{fig:normalExample}
\end{figure}

We work in a standard classification setting with input points $X$ in some abstract input space $\Xcal$, binary labels $Y \in \{-1,1\}$, and a joint probability distribution $P$ on $\Xcal \times \Ycal$. We assume that both labels have the same probability, $P(Y=+1) = P(Y=-1) = 0.5$, meaning that, in our retina example, patients equally often have the disease as they do not have the disease (we make this assumption to keep the notation simple but our results may be generalized to a setting with unequal probabilities). A black-box classifier, upon observing a test point $X \in \Xcal$ (a retina image), outputs two quantities: the predicted label $\hat{Y} \in \{-1,1\}$ and the pointwise, or local, confidence $C \in [0.5,1]$. We assume Bayes classifiers so that the predictions are optimal, $\hat{Y} = \argmax_{y\in\{-1,+1\}}P(Y|X)$. Furthermore, we assume the classifiers to be perfectly calibrated. That is, the local confidence is exactly the probability of that particular prediction to be correct, $C(X) = P(Y = \hat{Y}|X)$, or short, $C$. (The range of possible ensemble accuracies would be even larger if we dropped this calibration assumption.) We call $C$ a \emph{local} confidence to stress that it is different for each input point $X$ whereas we use the term \emph{accuracy} to refer to the overall probability of a classifier making a correct prediction across the input space.

In the following, we will describe a black-box classifier by its \textbf{confidence distribution} $f(C)$. A confidence distribution $f$ is a probability distribution on $[0.5,1]$ that describes how often each local confidence $C$ is sampled. In Figure~\ref{fig:normalExample}, we show how a confidence distribution is related to a classification setting with normal noise. In our retina example, a classifier's confidence distribution describes how often we get retina images of a certain quality such that a classifier can make a prediction with confidence $C$. We assume the confidences to be independent of the true label, $f(C|Y) = f(C)$ meaning that the quality of a retina image is independent of whether the patient has the disease or not. 

If the confidence distribution $f$ of a classifier is known, its classification accuracy can be computed from $f$:
\begin{align*}
    \accuracy{f} := P(Y = \hat{Y}) = \int_{0.5}^1 f(c) \cdot c~dc \text{~.} \label{eqn:PiFromC} \numberthis
\end{align*}
However, in this paper, we deal with the more realistic scenario where the underlying local confidence distribution~$f$ is unknown and we only know the accuracy $\accuracy{f}$ of an individual classifier.

\subsection{Ensemble classifiers}
\label{sec:notationEnsemble}

We obtain an ensemble prediction by optimally combining the outputs of $k$ individual classifiers. The individual classifiers have unknown confidence distributions $f_1$, $f_2$, ..., $f_k$. We will make the important assumption that these confidence distributions are pairwise independent, $\forall i\neq j: f_i \bot f_j$. In our example, this means that the quality of retina images is independently drawn for each classifier (from its unknown confidence distributions). Under this assumption, the individual confidence distributions combine into the ensemble confidence distribution, denoted by $f_e$ (see Section~\ref{subsec:lCWMV}). Since we do not know the individual confidence distributions, $f_1$, $f_2$, ... $f_k$, we also do not know the exact ensemble confidence distribution, $f_e$.

The \textbf{goal of this paper} is to determine the accuracy of that ensemble classifier, $\accuracy{f_e}$, given that we only know the accuracies of the individual classifiers $\accuracy{f_1}$, $\accuracy{f_2}$, ... $\accuracy{f_k}$ but not their exact confidence distributions, and to characterize which type of individual classifier leads to a better / worse ensemble performance.

\section{Confidence Weighted Majority Voting}
\label{subsec:cwmv}

In this section, we first recap the traditional approach of CWMV and then introduce our modification based on local confidences, which we call $l$CWMV. 

\subsection{Traditional approach: CWMV}
\label{subsec:gCWMV}

In the traditional setting of CWMV \citep{grofman1983thirteen,nitzan1982optimal}, upon receiving input, a classifier outputs a prediction $\hat{Y}$, but not the local confidence for the particular test point. All we know is the (global) accuracy of the black-box classifier. In an ensemble, we observe a set of $k$ predictions $\hat{Y}_1$, $\hat{Y}_2$, ..., $\hat{Y}_k$ from classifiers with accuracies $\accuracy{1}$, $\accuracy{2}$, ..., $\accuracy{k}$. It has been proven \citep{grofman1983thirteen} that the optimal way to form an ensemble prediction in this scenario is to weight the individual classifiers' votes based on their accuracies, $W_i = \log (\accuracy{i}/(1-\accuracy{i}))$. These weights are therefore based on the overall accuracies of the individual classifiers. Traditional CWMV then produces the optimal ensemble prediction $\hat{Y}_e$ and the ensemble confidence in that prediction $C_e$ as 
\vspace{-0.2cm}
\begin{align*}
    &\hat{Y}_e = \text{sign} \left( \sum_{i=1}^k W_i \hat{Y}_i \right)\text{, and} \numberthis \label{eqn:CWMV_Y} \\
    &C_e = \left( 1 + \exp \left(- \Biggl| \sum_{i=1}^k W_i \hat{Y}_i \Biggl| \right) \right)^{-1}\text{.} \numberthis \label{eqn:CWMV_C}
\end{align*}
Note that when $\accuracy{i} = 1$ for any $i$, the weight $W_i$ is undefined and therefore $\hat{Y}_e$ and $C_e$ are set to $\hat{Y}_e = \hat{Y}_i$ and $C_e = 1$ by convention because classifier $i$ is always correct in its prediction. % FROM ULRIKE'S COMMENT: KANN BEIM KÜRZEN RAUSFLIEGEN

\subsection{Modification with local confidences: $l$CWMV} 
\label{subsec:lCWMV}

We modify the traditional setting such that, upon receiving input point $X$, a classifier outputs its prediction $\hat{Y}$ together with a local confidence $C(X)$. 
It is straightforward to see that the optimal combination of the outputs of $k$ classifiers, $i \in \{1..k\}: \left(\hat{Y}_i, C(X_i)\right)$, will base the weights not on the accuracies of the individual classifiers but on their local confidences for their individual input points: $W(X_i) = \log (C(X_i)/(1-C(X_i)))$. The ensemble prediction $\hat{Y}_e$ and confidence $C_e$ are then computed analogously to Equations~\eqref{eqn:CWMV_Y} and \eqref{eqn:CWMV_C}, using the local weights $W_i = W(X_i)$. While the weights were constant in traditional CWMV, they can differ from prediction to prediction in $l$CWMV.
 
In contrast to the traditional approach, we can no longer compute the ensemble accuracy, $\accuracy{f_e}$, based on the accuracies, $\accuracy{1}$, $\accuracy{2}$, ..., $\accuracy{k}$. Only when the exact distributions over the local confidences, $f_1$, $f_2$, ..., $f_k$, are known, we can derive the confidence distribution of the ensemble, $f_e$, and thereupon the ensemble accuracy $\accuracy{f_e}$. In the following, we denote the operation of combining individual confidence distributions in the $l$CWMV setting by $\otimes$ (formally defined in the Supplementary Material Section \ref{sup:convolution}) so that the ensemble confidence distribution is denoted by $f_e := \otimes_{i=1}^k f_i$ . 

In many practical examples, the confidence distribution of individual classifiers will not be known. Especially in cases where there is a high cost for obtaining predictions, as in our retina example, estimating confidence distributions is expensive. Not knowing the individual classifier's confidence distribution but only their overall accuracies entails some uncertainty about the ensemble confidence distribution $f_e$. Consequently, there are different possible values for the ensemble accuracy, $\accuracy{f_e}$. The question we now answer is: What are the best and worst ensemble accuracies that can be achieved? And which individual confidence distributions contribute more to the ensemble accuracy than others?

%    ###     ######   ######             ##          ###     ######   ######  
%   ## ##   ##    ## ##    ##             ##        ## ##   ##    ## ##    ## 
%  ##   ##  ##       ##                    ##      ##   ##  ##       ##       
% ##     ## ##       ##          #######    ##    ##     ## ##       ##       
% ######### ##       ##                    ##     ######### ##       ##       
% ##     ## ##    ## ##    ##             ##      ##     ## ##    ## ##    ## 
% ##     ##  ######   ######             ##       ##     ##  ######   ######  

\section{Individual accuracies do not uniquely determine ensemble accuracy}
\label{sec:IndAccGroupAcc}

In this section, we provide bounds on the ensemble accuracy when only the accuracies of the individual classifiers are known. Numerical examples can be found in \url{https://osf.io/mvsgh/}. The relevant aspect of a classifier will be its confidence distribution, $f(C)$, which is only constraint by the given individual accuracy. The classifiers that produce the best- and worst-case ensemble accuracies will be called specialists and generalists. A specialist and a generalist, even when they have the same accuracy, behave very differently in ensembles due to their different confidence distributions, see Figure~\ref{fig:classifierDistributions} (top). Because these two extreme classifiers produce only a discrete amount of confidence levels, we will denote their probability distributions as weighted sums of Dirac probability measures $\delta_c$ that have point mass 1 at point $c$. 

Intuitively, a classifier is a specialist if it achieves high confidence on some parts of the input space while it is close to random guessing on the rest. Formally, it outputs predictions with confidence either $C=50\%$ (random guessing) or $C=100\%$ (absolute certainty), see Figure~\ref{fig:classifierDistributions} (top left). The proportion of these two cases determines the overall accuracy of the specialist. 
\begin{definition}
    \label{def:specialist}
    \textbf{(Specialist)}
    A binary black-box classifier with accuracy $\accuracy{}$ is called specialist if its confidence distribution is given by 
    \begin{align*}
        f^\text{specialist}_{\accuracy{}} = w_{0.5}\delta_{0.5} + w_{1}\delta_{1} \text{,}
    \end{align*}
    with constants $w_{0.5} = 2(1-\accuracy{})$ and $w_{1} = 2(\accuracy{}-0.5)$.
\end{definition}

\begin{figure}[!t]
\includegraphics[width=\linewidth]{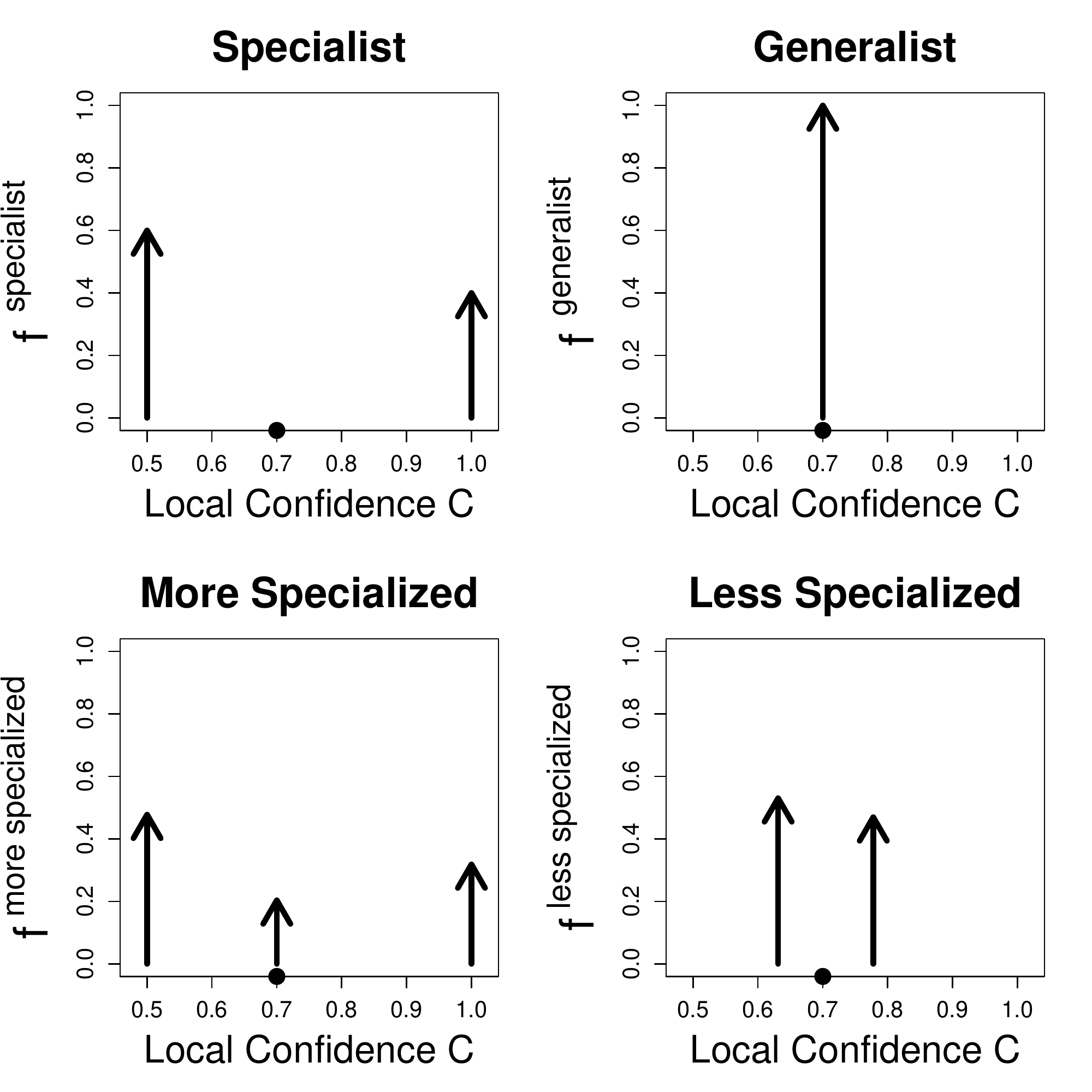}
\caption{{\bf Classifiers' confidence distributions that provide best- and worst-case ensemble accuracies.} Top row: Confidence distribution of a corresponding specialist and generalist classifier with $\accuracy{f} = 70\%$ (black dot). The confidence distributions consist of point masses as indicated by the arrows . Bottom row: Confidence distributions of more and less specialized classifiers with $\accuracy{f} = 70\%$ and $\information{f} = 0.25$~bit. }
\label{fig:classifierDistributions}
\end{figure}

\begin{figure*}[!ht] 
\includegraphics[width=\linewidth]{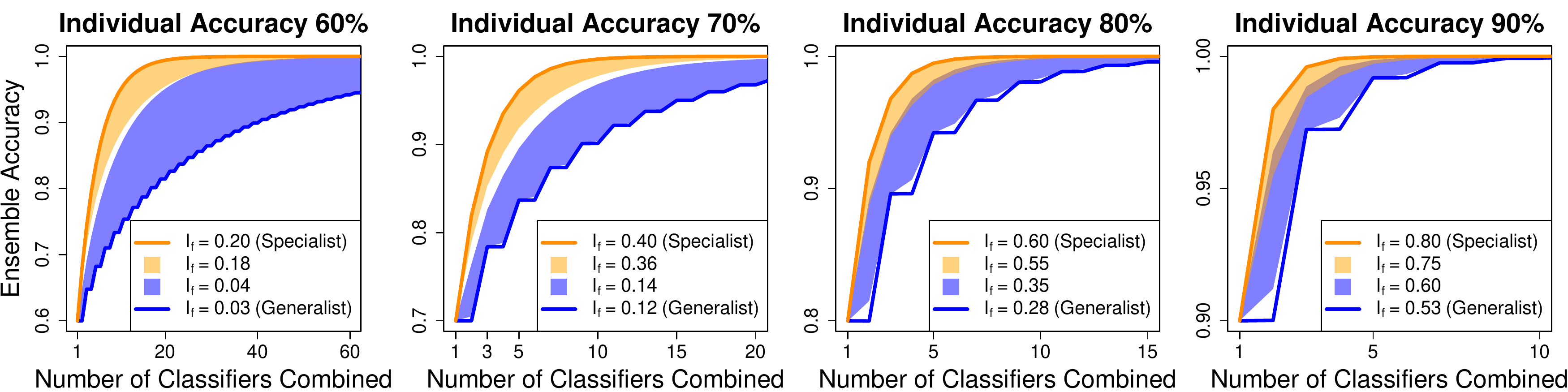}
\caption{ {\bf Illustrations of the bounds in Theorems \ref{theorem:IndAccGroupAcc} and \ref{theorem:IndAccIndInfoGroupAcc}}. 
Each plot shows the ensemble accuracy $\accuracy{f_e}$ as a function of the number $k$ of individual classifiers. Within each subplot, all individual classifiers have the same individual accuracy $\accuracy{i}$ as indicated in the title of the subplot. Best- and worst-case ensemble accuracies according to Theorem~\ref{theorem:IndAccGroupAcc} are shown as solid orange and blue lines, achieved by the two extreme cases, specialists and generalists, which also have the highest resp. lowest information (Theorem~\ref{prop:IndAccIndInfo}). 
When in addition to the accuracy the information $I_f$ of the individual classifiers is known, the range of possible ensemble accuracies gets smaller (Theorem~\ref{theorem:IndAccIndInfoGroupAcc}): For example, individual classifiers with a relatively high information produce a range of possible ensemble accuracies (light orange shaded areas) exceeding that of classifiers with low information (light blue shaded areas). 
The steps in the lower bound result from the problem of conducting majority votes in ensembles of even size (no tiebreakers).}
\label{fig:panel1}
\end{figure*}

Generalists, on the other hand, work equally well on all of the input space. Their confidence is constant with $C=\accuracy{}$, see Figure~\ref{fig:classifierDistributions} (top right).
\begin{definition}
    \label{def:generalist}
    \textbf{(Generalist)}
    A binary black-box classifier with accuracy $\accuracy{}$ is called generalist if its confidence distribution is given by
    \begin{align*}
        f^\text{generalist}_{\accuracy{}} = \delta_{\accuracy{}} \text{.}
    \end{align*}
\end{definition}

The following theorem states that generalists and specialists are the worst and best case classifiers when used in an ensemble. 
\begin{theorem}
    \label{theorem:IndAccGroupAcc}
    \textbf{(Specialists and generalists bound the ensemble accuracy)}
    Consider $k$ classifiers with individual accuracies $\accuracy{i}$ and confidence distributions $f_i$ ($i \in \{1..k\}$). For each $i$, let 
    $f^\text{generalist}_{\accuracy{i}}$ and $f^\text{specialist}_{\accuracy{i}}$ be a generalist resp. specialist classifier that has the same accuracy as classifier $i$.
    Now consider the ensemble classifier based on the original classifiers with ensemble confidence distribution $f_e~=~\bigotimes_{i = 1}^k\,f_i$ according to $l$CWMV as well as the ensemble of generalists and ensemble of specialists with ensemble confidence distributions $f^\text{generalist}_e~=~\bigotimes_{i = 1}^k\,f^\text{generalist}_{\accuracy{i}}$ and $f^\text{specialist}_e~=~\bigotimes_{i = 1}^k\,f^\text{specialist}_{\accuracy{i}}$. Then the accuracy of the original ensemble is lower and upper bounded by the accuracies of the generalist and specialist ensembles: 
    $$\accuracy{f^\text{generalist}_e} \leq \accuracy{f_e} \leq \accuracy{f^\text{specialist}_e}~.$$
\end{theorem}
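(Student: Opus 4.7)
The plan is to prove both bounds by a classifier-by-classifier substitution argument: with classifiers $2, \ldots, k$ held fixed, I will show that among all valid $f_1$ with individual accuracy $\accuracy{1}$, the ensemble accuracy is minimized by the generalist $f^{\text{generalist}}_{\accuracy{1}}$ and maximized by the specialist $f^{\text{specialist}}_{\accuracy{1}}$. Since this statement does not rely on the specific form of classifiers $2, \ldots, k$, iterating the substitution for $i = 1, 2, \ldots, k$ converts the original ensemble into an all-generalist ensemble (lower bound) or an all-specialist ensemble (upper bound), yielding the theorem.

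Let $A(c)$ denote the ensemble accuracy when classifier~1 is a generalist with accuracy $c$ and classifiers $2, \ldots, k$ remain as $f_2, \ldots, f_k$. By independence of the classifiers and the tower property, the ensemble accuracy with an arbitrary $f_1$ equals $\mathbb{E}_{C_1 \sim f_1}[A(C_1)]$, a linear functional of $f_1$. If $A$ is convex on $[0.5, 1]$, then Jensen's inequality gives $\mathbb{E}[A(C_1)] \geq A(\accuracy{1})$, which is exactly the generalist substitution. For the upper bound, convexity also supplies the chord estimate $A(c) \leq (2-2c)\, A(0.5) + (2c-1)\, A(1)$ on $[0.5, 1]$; integrating against $f_1$ yields $\mathbb{E}[A(C_1)] \leq 2(1-\accuracy{1})\, A(0.5) + (2\accuracy{1}-1)\, A(1)$, which by the weights in Definition~\ref{def:specialist} is exactly the specialist substitution. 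Both bounds therefore reduce to showing that $A$ is convex.

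Proving the convexity of $A$ is the main obstacle. The idea is to condition on $|S_-|$, the magnitude of the signed log-odds $S_- = \sum_{i=2}^k W_i \hat{Y}_i$ produced by the other classifiers. The general calibration identity $P(S_- = s \mid Y = +1)/P(S_- = -s \mid Y = +1) = e^s$ implies that, given $Y = +1$ and $|S_-| = t$, $S_- = +t$ with probability $\sigma(t) := (1+e^{-t})^{-1}$ and $S_- = -t$ with probability $\sigma(-t)$. Hence the aggregated evidence from classifiers $2, \ldots, k$ is conditionally indistinguishable from the output of a single generalist with accuracy $\sigma(t)$, and the conditional $l$CWMV problem reduces to a two-generalist ensemble with accuracies $c$ and $\sigma(t)$ and weights $w(c) = \log\bigl(c/(1-c)\bigr)$ and $t$. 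A direct case analysis of the four joint prediction configurations shows that the more heavily weighted classifier always determines the hard ensemble prediction (with ties contributing the common value $c = \sigma(t)$), so that the conditional ensemble accuracy equals $\max\bigl(c, \sigma(t)\bigr)$. Averaging yields $A(c) = \mathbb{E}_{|S_-|}\bigl[\max(c, \sigma(|S_-|))\bigr]$, which is convex in $c$ as an expectation of maxima of two affine functions. The subtleties to watch for are the tie case $w(c) = t$ and the limit $c = 1$ (infinite log-odds for specialists), both handled by the conventions stated after equation~\eqref{eqn:CWMV_C}.
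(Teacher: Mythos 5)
Your proposal is correct and follows essentially the same route as the paper's proof: both fix the remaining classifiers, observe that the ensemble accuracy is linear in $f_1$ with conditional accuracy $\max\{c_1,\,\cdot\,\}$ --- hence convex in $c_1$ --- and then extremize over $f_1$ with fixed mean $\accuracy{1}$ via Jensen's inequality (generalist lower bound) and the chord/endpoint estimate (specialist upper bound). The only cosmetic difference is that you aggregate classifiers $2,\ldots,k$ directly into the log-odds sum $S_-$, viewed as a single calibrated pseudo-classifier with accuracy $\sigma(|S_-|)$, whereas the paper proves the two-classifier identity $P(\hat{y}_e \text{ correct}\mid c_1,c_2)=\max\{c_1,c_2\}$, extends to $k$ classifiers by induction with the $\otimes$ operator, and invokes its refinement corollary (Corollary~\ref{corollary:genAndSpec}) in place of your explicit Jensen-plus-chord step.
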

The formal proof of this theorem is in the Supplementary Materials, Section~\ref{sup:IndAccEnsembleAcc}. The proof idea is that merging confidence distributions makes the ensemble accuracy worse: When a classifier does not distinguish between high vs. low confidence cases (Figure~\ref{fig:classifierDistributions} top left) and instead always outputs an average confidence (Figure~\ref{fig:classifierDistributions} top right), the ensemble is less effective in weighing that classifier's predictions. It helps to know which predictions should be taken into account (high confidence cases) and which should be disregarded (low confidence cases). Distinguishing between high and low confidence cases is related to the concept of refinement \citep{degroot1983comparison,masnadi2013refinement}, see Supplementary Materials, Section~\ref{sup:refinement}. In consequence, the best ensemble accuracy comes from the most refined confidence distributions (specialists); and the worst ensemble accuracy comes from the least refined confidence distributions (generalists). Even though confidences can vary from prediction to prediction in our $l$CWMV setting, generalists do not make use of this possibility and always output the same confidence. They receive a constant weight as in the traditional CWMV setting. Therefore, our lower bound for the ensemble accuracy corresponds to the behavior of traditional CWMV. 

To get an intuition for the meaning of the theorem, consider again the retina example. Assume we have $k=3$ classifiers that take independently drawn retina images from a patient and return predictions as well as confidences. Let their predictions be correct with accuracies $\accuracy{1} = \accuracy{2} = \accuracy{3} = 70\%$ (in Figure~\ref{fig:panel1}, second plot). Then, if these classifiers are generalists, their ensemble accuracy will be 78\% (blue lower bound). But if they are specialists, their ensemble accuracy will be 89\% (orange upper bound)---a large range that makes a crucial difference in practice.
If the three classifiers' confidence distributions are not that of specialists or generalists (as in Figure~\ref{fig:classifierDistributions}) but an intermediate case as in our normal noise example (Figure~\ref{fig:normalExample}, right) the ensemble accuracy is in between the bounds, here, at 82\%.
See Figure~\ref{fig:panel1} for more numerical examples. We only show cases in which the individual accuracies are equal but our theorems can be applied to classifiers with different individual accuracies.

Theorem \ref{theorem:IndAccGroupAcc} carries two important messages: (1) Even when we know the accuracies of the individual classifiers and we combine their output in the statistically optimal way (with $l$CWMV), we are far from being able to predict the ensemble accuracy (unless we know the confidence distributions). (2) When we use ensemble methods and have the choice to construct our individual classifiers from scratch, then we should aim for specialist classifiers rather than generalists.

Crucially, \textbf{even without coordination between the classifiers, specialization is advantageous.} Specialists' confidence distributions are, by assumption, independent. Specialists do not divide the input space by specializing on separate regions. In our retina example, it is \textit{not} the case that one specialist is trained on one subtype of retinal disease while a different specialist is trained on another subtype. This would contradict our assumption that individual confidences are independently drawn ($\forall i,j\in\{1..k\}: f_i \bot f_j$, introduced in Section~\ref{sec:notationEnsemble}). When one specialist classifier achieves a high confidence it is \emph{not} more likely that the other specialists produce a low confidence as it would be the case when they had separate specializations. This highlights the effectiveness of specialists even in independent ensembles.

% #### ##    ## ########  #######  
%  ##  ###   ## ##       ##     ## 
%  ##  ####  ## ##       ##     ## 
%  ##  ## ## ## ######   ##     ## 
%  ##  ##  #### ##       ##     ## 
%  ##  ##   ### ##       ##     ## 
% #### ##    ## ##        #######  

\section{Better bounds for ensemble accuracy with mutual information}
\label{sec:Information}

As shown, the range of possible accuracies of ensemble classifiers outlined in  Theorem~\ref{theorem:IndAccGroupAcc} can be large. In this section, we improve the bounds to better predict what the ensemble accuracy will be. We will assume that another performance measure next to the individual classifier's accuracy is known: the mutual information between the true label and the individual classifier's output \citep{shannon1948mathematical,cover2006elements,mackay2003information}. This is just one alternative quality measure of the classifier, and many more such scoring functions exist (see \citealp{masnadi2013refinement,masnadi2017combining}). We choose the mutual information for its natural properties but our results can be transferred to other convex scoring functions.

\subsection{Mutual information measures effectiveness in ensembles}
\label{sec:IndAccIndInf}

In addition to the accuracy of a classifier, we consider the mutual information $I$ between the true label $Y$ and the classifier's output $O = (\hat{Y}, C)$, which is $I(Y;O) = H(Y) - H(Y|O)$, where $H$ denotes the (conditional) entropy of a random variable. With some simple rearrangement (see Supplementary Material, Section \ref{sup:rearrangingMutualInformation}), the mutual information can be shown to only depend on the classifiers' confidence distribution $f$:
\begin{align*}
    \information{f} := I\left(Y; O \right) = \int_{0.5}^1 f(c) \cdot \left(1 - H_2(c)\right)~dc \text{,}  \label{eqn:IotaFromC} \numberthis
\end{align*}
where $H_2$ is the binary entropy, $H_2(c) = c\log_2\left(\frac{1}{c}\right) + (1-c)\log_2\left(\frac{1}{1-c}\right)$ for $c\in [0.5,1]$. In the following, we will denote a classifier's \emph{information} by $\information{f}$, analogously to its accuracy $\accuracy{f}$, as a performance measure based on a classifier's confidence distribution $f$.
For classifiers with fixed accuracy $\accuracy{f}$, specialists have the highest possible information and generalists have the lowest possible information.
\begin{proposition}
    \label{prop:IndAccIndInfo}
    \textbf{(Specialists and generalists bounds the individual information)}
    A classifier with confidence distribution $f$ and accuracy $\accuracy{}$ has an information between
    \begin{equation*}
        \information{f^\text{generalist}_{\accuracy{}}} \leq \information{f} \leq \information{f^\text{specialist}_{\accuracy{}}}.
    \end{equation*}
\end{proposition}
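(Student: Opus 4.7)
The plan is to reduce the bound to convexity properties of the function $g(c) = 1 - H_2(c)$ on $[0.5, 1]$, since the accuracy and the information both take the form $\int_{0.5}^1 f(c)\,\phi(c)\,dc$ for appropriate $\phi$.

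First, I would compute the two endpoint quantities explicitly from the definitions. Plugging $f^{\text{generalist}}_{\accuracy{}} = \delta_{\accuracy{}}$ into \eqref{eqn:IotaFromC} gives $\information{f^{\text{generalist}}_{\accuracy{}}} = 1 - H_2(\accuracy{})$. Plugging in $f^{\text{specialist}}_{\accuracy{}} = 2(1-\accuracy{})\delta_{0.5} + 2(\accuracy{}-0.5)\delta_1$ and using $H_2(0.5)=1$, $H_2(1)=0$ gives $\information{f^{\text{specialist}}_{\accuracy{}}} = 2\accuracy{} - 1$. So the claim reduces to
\begin{equation*}
1 - H_2(\accuracy{f}) \;\leq\; \int_{0.5}^1 f(c)\,(1 - H_2(c))\,dc \;\leq\; 2\accuracy{f} - 1 .
\end{equation*}

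Second, I would verify that $g(c) = 1 - H_2(c)$ is convex on $[0.5,1]$ (direct second-derivative check, since $H_2$ is concave on $[0,1]$) and that $g(0.5)=0$, $g(1)=1$. Both bounds then follow from convexity. For the upper bound, convexity implies that the chord lies above the graph, i.e.\ $g(c) \leq 2c - 1$ for $c\in[0.5,1]$; integrating against $f$ and using $\int f(c)\,c\,dc = \accuracy{f}$ and $\int f(c)\,dc = 1$ yields $\information{f} \leq 2\accuracy{f} - 1 = \information{f^{\text{specialist}}_{\accuracy{f}}}$. For the lower bound, Jensen's inequality applied to the convex function $g$ and the probability distribution $f$ on $[0.5,1]$ gives
\begin{equation*}
\int f(c)\,g(c)\,dc \;\geq\; g\!\left(\int f(c)\,c\,dc\right) = 1 - H_2(\accuracy{f}) = \information{f^{\text{generalist}}_{\accuracy{f}}} .
\end{equation*}

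There is essentially no main obstacle beyond the convexity check of $1-H_2$, which is standard. The only thing to be careful about is that $f$ is genuinely a probability distribution on $[0.5,1]$ (which is part of the setup) and that the specialist/generalist constructions indeed have accuracy $\accuracy{f}$, which is immediate from Definitions~\ref{def:specialist} and \ref{def:generalist} via \eqref{eqn:PiFromC}. The geometric picture is simply that, among all distributions on $[0.5,1]$ with a prescribed mean $\accuracy{f}$, the convex functional $\int f\,g$ is maximized by pushing mass to the endpoints $\{0.5, 1\}$ (the specialist) and minimized by concentrating all mass at the mean (the generalist).
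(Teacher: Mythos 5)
Your proof is correct, and it takes a somewhat different route than the paper. The shared core is the same: both arguments hinge on the convexity of $\phi(c)=1-H_2(c)$ on $[0.5,1]$, which the paper also verifies by the second-derivative computation. But after that point the paper invokes its general machinery: Corollary~\ref{corollary:genAndSpec}, which says that any convex scoring function is minimized by the generalist and maximized by the specialist among distributions with fixed accuracy, and which is itself proved via the refinement partial order and an induction using Lemma~\ref{lemma:jensens}. You instead finish directly: Jensen's inequality applied to the convex $\phi$ and the distribution $f$ (whose mean is $\accuracy{f}$ by \eqref{eqn:PiFromC}) gives the generalist lower bound $1-H_2(\accuracy{f})$, and the secant-line inequality $1-H_2(c)\leq 2c-1$ on $[0.5,1]$ (chord through $(0.5,0)$ and $(1,1)$) integrated against $f$ gives the specialist upper bound $2\accuracy{f}-1$. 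Both steps are valid, and you also correctly verify that the specialist and generalist attain these values. What each approach buys: the paper's route is modular---the same corollary is reused for Theorem~\ref{theorem:IndAccGroupAcc} and the ensemble-information bounds, and it makes the refinement ordering between specialists and generalists explicit---whereas your argument is self-contained, sidesteps the induction over refinement steps (and the paper's restriction to discrete distributions in the supplement, since Jensen and the chord bound apply to arbitrary probability measures on $[0.5,1]$), and has the added benefit of producing the closed-form values $1-H_2(\accuracy{})$ and $2\accuracy{}-1$ for the two bounds, which the paper's proof never states explicitly.
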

The proof is in the Supplementary Materials, Section~\ref{sup:IndAccBoundsIndInfo}. In our example, when an individual classifier has an accuracy of $\accuracy{} = 70\%$, its transmitted information lies between $0.12$--$0.4$~bit, depending on its confidence distribution. With this, all classifiers can be described by two values, their accuracy $\accuracy{}$ and information $\information{}$, and these values lie in the shaded area in Figure~\ref{fig:proofIdea} (middle): Higher accuracy (along the x-axis) loosely coincides with higher information (y-axis) but this is no one-to-one relation.

\begin{figure*}[t]
\includegraphics[width=\linewidth]{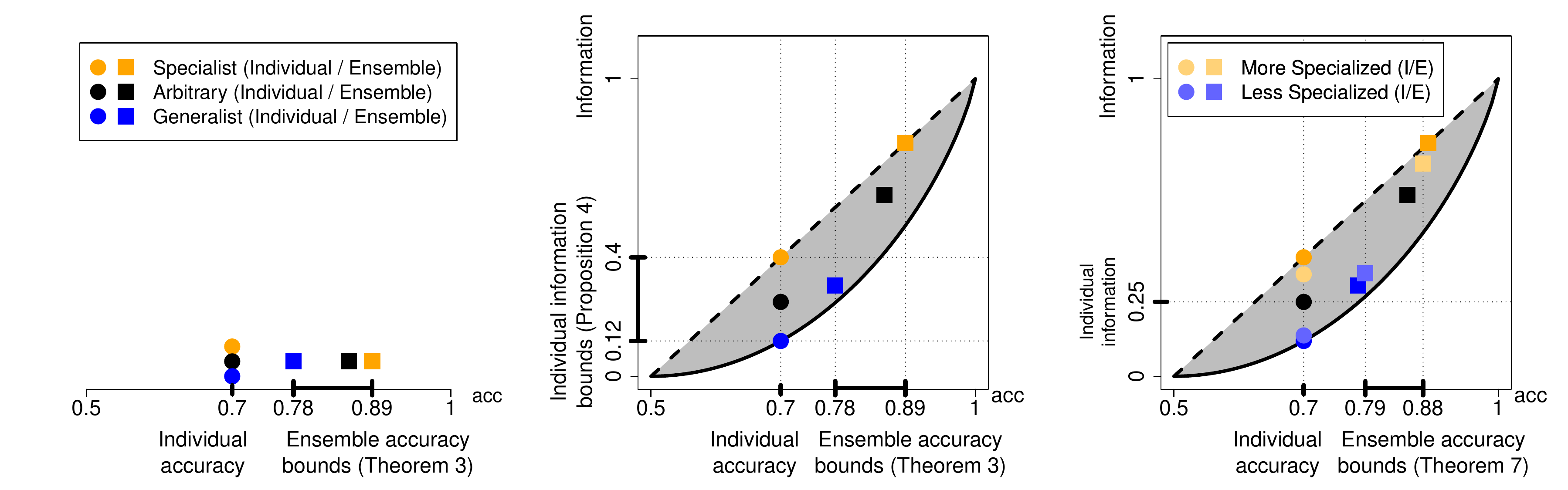}
\caption{\textbf{Bounding ensemble accuracy with Theorem~\ref{theorem:IndAccGroupAcc} and \ref{theorem:IndAccIndInfoGroupAcc}.} 
Left: Consider an individual classifier with an arbitrary and unknown confidence distribution $f$. We depict its known individual accuracy by a black dot (in this example, $\accuracy{f} =70\%$). We construct corresponding specialist and generalist with accuracies marked by blue and orange dots. Together, $k=3$ arbitrary classifier with $\accuracy{f} =70\%$ form an ensemble with ensemble accuracy (depicted by the black square) bounded by that of a generalist ensemble (blue square) and specialist ensemble (orange square), see Theorem~\ref{theorem:IndAccGroupAcc}. 
Middle: In addition to the accuracy of classifiers (x-axis) we consider their information (y-axis). The information losely depends on the accuracy $\accuracy{f}$ as marked by the grey area, see Proposition~\ref{prop:IndAccIndInfo}. Generalists lie on the lower solid line and specialists lie on the upper dashed line. Thus, any arbitrary classifier's accuracy-information pair, ($\accuracy{f}$, $\information{f}$), lies in the grey crescent shape. Again we depict the three individual classifiers of the left figure by dots and the corresponding ensemble classifiers by squares. 
Right: When the individual classifiers' information is known (here, $0.25$~bit), Theorem~\ref{theorem:IndAccIndInfoGroupAcc} provides better bounds. Less specialized (light blue dot) and more specialized classifiers (light orange dot) form ensembles (same colored squares) whose accuracy bounds the ensemble accuracy of arbitrary classifiers. In this example, bounds from Theorem~\ref{theorem:IndAccGroupAcc} improve only slightly but see Figure~\ref{fig:panel1}.
}
\label{fig:proofIdea}
\end{figure*}

\subsection{Improved ensemble accuracy bounds}
\label{sec:IndAccInfGrouPAcc}

We will now assume that we know both, the accuracy and the information of the individual classifiers. Given these two measures, we can provide better bounds on the ensemble accuracy. These two measures still do not uniquely determine the confidence distribution of a classifier so that different ensemble accuracies are possible. As before (with specialists and generalists), we construct two confidence distributions: the more specialized classifier and the less specialized classifier. They will provide the new bounds.

The more specialized classifier (to a given accuracy $\accuracy{}$ and information $\information{}$) is a mixture of specialist and generalist producing confidences at $C=0.5$, $C=\accuracy{}$ and $C=1$, see Figure~\ref{fig:classifierDistributions} (bottom left). The weights are such that the more specialized classifier can be shown to improve the ensemble accuracy.
\begin{definition}
    \label{def:moreSpecialized}
    \textbf{(More specialized classifier)}
    A binary black-box classifier to the accuracy $\accuracy{}$ and information $\information{}$ is called more specialized if its confidence distribution is given by
    \begin{align*}
        f^\uparrow_{\accuracy{},\information{}} = w_{0.5}\delta_{0.5} + w_{\accuracy{}}\delta_{\accuracy{}} + w_{1}\delta_{1} \text{,}
    \end{align*}
    with constants 
    $w_{0.5}~=~ \frac{2(1-\accuracy{})(\information{}+g - 1 + H_2(\accuracy{}))}{2\accuracy{} - 2 + H_2(\accuracy{})}$, 
    $w_{\accuracy{}}~=~\frac{2\accuracy{} - 1 - (\information{}+g)}{2\accuracy{} - 2 + H_2(\accuracy{})}$ and 
    $w_{1}~=~\frac{2(\accuracy{}-0.5)(\information{}+g - 1 + H_2(\accuracy{}))}{2\accuracy{} - 2 + H_2(\accuracy{})}$. Constant $g$ is defined in the Supplementary Material, Section~\ref{sup:IndAccIndInfoEnsembleAcc}.\\
\end{definition}

The less specialized classifier is similar to a generalist but it can distinguish between slightly below average ($C=c^\text{l}$) and slightly above average confidences ($C=c^\text{r}$), see Figure~\ref{fig:classifierDistributions} (bottom right).
\begin{definition}
    \label{def:lessSpecialized}
    \textbf{(Less specialized classifier)}
    A binary black-box classifier to the accuracy $\accuracy{}$ and information $\information{}$ is called more specialized if its confidence distribution is given by
    \begin{align*}
        f^\downarrow_{\accuracy{},\information{}} = w_{c^\text{l}}\delta_{c^\text{l}} + w_{c^\text{r}}\delta_{c^\text{r}} \text{,}
    \end{align*}
    with constants 
    $c^\text{l}~=~\frac{2(\accuracy{}-0.5)(\accuracy{}-\information{}) - (1-\accuracy{})(1-H_2(\accuracy{}))}{2(\accuracy{}-0.5)(1-\information{}) - 2(1-\accuracy{})(1-H_2(\accuracy{}))}$,
    $c^\text{r}~=~\frac{2(\accuracy{}-0.5)(\accuracy{}-1+H_2(\accuracy{}))-(1-\accuracy{})\information{} }{2(\accuracy{}-0.5)H_2(\accuracy{}) - 2(1-\accuracy{})\information{}}$, 
    as well as $w_{c^\text{l}} = \frac{c^\text{r} - \accuracy{}}{c^\text{r} - c^\text{l}}$ and $w_{c^\text{r}} = \frac{\accuracy{} - c^\text{l}}{c^\text{r} - c^\text{l}}$.
\end{definition}

We can bound the ensemble accuracy of classifiers with known accuracies and information by the ensemble accuracies of more resp. less specialized classifiers.
\begin{theorem}
    \label{theorem:IndAccIndInfoGroupAcc}
    \textbf{(More and less specialized classifiers bound the ensemble accuracy)}
    Consider $k$ classifiers with individual accuracies $\accuracy{i}$, individual information $\information{i}$ and confidence distributions $f_i$ ($i \in \{1..k\}$). For each $i$, let 
    $f^\downarrow_{\accuracy{i},\information{i}}$ and $f^\uparrow_{\accuracy{i},\information{i}}$ be the less resp. more specialized classifier constructed to the accuracy and information of classifier $i$. 
    Now consider the ensemble classifier based on the original classifiers with ensemble confidence distribution $f_e~=~\bigotimes_{i = 1}^k\,f_i$ according to $l$CWMV as well as the ensemble of less and more specialized classifiers with ensemble confidence distributions $f^\downarrow_e~=~\bigotimes_{i = 1}^k\,f^\downarrow_{\accuracy{i},\information{i}}$ and $f^\uparrow_e~=~\bigotimes_{i = 1}^k\,f^\uparrow_{\accuracy{i},\information{i}}$. Then the accuracy of the original ensemble is lower and upper bounded by the accuracies of the less and more specialized ensembles: 
    $$\accuracy{f^\text{generalist}_e} \leq \accuracy{f^\downarrow_e} \leq \accuracy{f_e} \leq \accuracy{f^\uparrow_e} \leq \accuracy{f^\text{specialist}_e}.$$    
\end{theorem}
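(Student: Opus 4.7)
The outer inequalities $\accuracy{f^\text{generalist}_e} \leq \accuracy{f^\downarrow_e}$ and $\accuracy{f^\uparrow_e} \leq \accuracy{f^\text{specialist}_e}$ follow immediately from Theorem~\ref{theorem:IndAccGroupAcc} applied with $f_i = f^\downarrow_{\accuracy{i},\information{i}}$ and $f_i = f^\uparrow_{\accuracy{i},\information{i}}$ respectively, since by construction these auxiliary classifiers have individual accuracy $\accuracy{i}$. So the heart of the theorem is the inner pair of inequalities, which I plan to prove by a one-classifier-at-a-time replacement argument.

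Fix an index $j$ and condition on the other $k-1$ classifiers. Writing the signed log-odds $W_i = \hat{Y}_i \log(C_i/(1-C_i))$, let $G$ denote the joint law of the label $Y$ and the partial sum $S_j = \sum_{i \neq j} W_i$. The ensemble is correct iff $\text{sign}(S_j + W_j) = Y$, so conditioning on the $j$-th classifier producing a local confidence $c \in [0.5,1]$ and averaging over its prediction (which is correct with probability $c$ by calibration, independently of the other classifiers) yields a function $\varphi_G(c)$ such that
\[
\accuracy{f_e} \;=\; \int_{0.5}^{1} f_j(c)\,\varphi_G(c)\, dc .
\]
Together with the two linear constraints imposed on $f_j$ by Equations~\eqref{eqn:PiFromC} and \eqref{eqn:IotaFromC} (fixing accuracy $\accuracy{j}$ and information $\information{j}$) plus normalization, this reduces the single-classifier replacement problem to a three-constraint linear program over probability measures on $[0.5,1]$. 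By Bauer's maximum principle / a Carath\'eodory argument, both the maximum and the minimum of this linear functional are attained at distributions supported on at most three points.

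The main obstacle is showing that for \emph{every} admissible $G$ the maximizer is exactly $f^\uparrow_{\accuracy{j},\information{j}}$ and the minimizer is exactly $f^\downarrow_{\accuracy{j},\information{j}}$. The structural property I would exploit — the same one underlying Theorem~\ref{theorem:IndAccGroupAcc} — is that $\varphi_G$ is convex in the log-odds variable $w = \log(c/(1-c))$, uniformly in $G$, because $\varphi_G(c)$ is an average of $\max(\sigma(w + s),1-\sigma(w+s))$-type terms against $G$ and convexity is preserved under mixtures. Given the two linear moment constraints, such convexity forces the maximizer to push mass to the extremes of the log-odds axis, i.e.\ to $c \in \{0.5, 1\}$ with a single interior support point absorbing the information slack — this is precisely the three-point support $\{0.5, \accuracy{}, 1\}$ of $f^\uparrow$. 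Symmetrically, the same convexity forces the minimizer to be the unique two-point distribution with interior support satisfying both moment constraints, which is $f^\downarrow$ with its explicit support $\{c^l, c^r\}$. The prescribed weights in Definitions~\ref{def:moreSpecialized} and \ref{def:lessSpecialized} are then forced by solving the three linear equations (normalization, accuracy, information) on the chosen supports, and I would verify this by direct substitution in a lemma preceding the main proof.

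Finally, iterating: having established that replacing any single $f_j$ by $f^\uparrow_{\accuracy{j},\information{j}}$ can only increase $\accuracy{f_e}$, and by $f^\downarrow_{\accuracy{j},\information{j}}$ can only decrease it, I apply the replacement to $j = 1, 2, \ldots, k$ in succession. At each step the monotonicity requires the replacement to preserve the marginal confidence distribution seen by the other $k-1$ classifiers, which it does since $\otimes$ depends on each factor only through its own distribution and independence is maintained. This chain of inequalities yields $\accuracy{f^\downarrow_e} \leq \accuracy{f_e} \leq \accuracy{f^\uparrow_e}$, completing the proof. The substantive technical work is the uniform-in-$G$ convexity of $\varphi_G$ in log-odds coordinates; everything else is either a direct appeal to Theorem~\ref{theorem:IndAccGroupAcc} or a routine LP/algebraic verification.
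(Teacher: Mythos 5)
Your scaffolding (outer inequalities from Theorem~\ref{theorem:IndAccGroupAcc}, a one-classifier-at-a-time replacement, reduction to a scoring function $\varphi_G(c)$ against the other classifiers' joint law) is sound and broadly parallels the paper's per-classifier Jensen/refinement framework. But the heart of your argument has a genuine gap: you assert that for \emph{every} admissible $G$ the constrained extremizers of $\int f_j(c)\varphi_G(c)\,dc$ subject to normalization, accuracy $\accuracy{j}$ and information $\information{j}$ are \emph{exactly} $f^\uparrow_{\accuracy{j},\information{j}}$ and $f^\downarrow_{\accuracy{j},\information{j}}$. That claim is never proved (the Bauer/Carath\'eodory step only gives \emph{some} two- or three-point extremizer, whose support location generically depends on $\varphi_G$, hence on $G$), and it is in fact strictly stronger than the theorem: it would imply that the bounds are tight, which the paper explicitly leaves open. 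Worse, it cannot be literally correct as stated, because the paper's $f^\uparrow_{\accuracy{},\information{}}$ is \emph{not} in your constraint set: its weights involve the constant $g$, the worst-case information gain, so its information is $\information{}+g>\information{}$; similarly $f^\downarrow$ is built from the extreme admissible conditional accuracies $c^{\text{l}},c^{\text{r}}$ and need not have information exactly $\information{}$. These distributions are designed to dominate (resp.\ be dominated by) every admissible $f$ in the refinement order $f^\downarrow \prec f^\searrow \prec f \prec f^\nearrow \prec f^\uparrow$, not to solve your linear program. Your supporting structural claim is also shaky: the relevant kernel (e.g.\ $\max\{c_1,c_2\}$ in the two-classifier case) is convex in $c_1$, but after substituting $c_1=\sigma(w_1)$ with $\sigma$ concave on $w_1\ge 0$ it is not convex in the log-odds variable, so the ``uniform-in-$G$ convexity in $w$'' that your extremal argument hinges on is unsubstantiated — and even granting it, convexity plus two moment constraints does not force the interior support point to sit exactly at $c=\accuracy{}$ independently of $G$.

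The paper's route avoids this entirely: it stays in $c$-coordinates where the ensemble-accuracy score is convex (Theorem~\ref{theorem:IndAccGroupAcc}'s proof), introduces the intermediate distributions $f^\nearrow$ (split the below-average conditional mass onto $\{0.5,\accuracy{f}\}$ and the above-average mass onto $\{\accuracy{f},1\}$) and $f^\searrow$ (merge the two conditional parts to their conditional means $\accuracy{f}^{\text{left}},\accuracy{f}^{\text{right}}$), shows $f \prec f^\nearrow$ and $f^\searrow \prec f$ by construction, and then uses the extremal constant $g$ and the geometry of the accuracy--information crescent (Proposition~\ref{prop:IndAccIndInfo}) to show $f^\nearrow \prec f^\uparrow$ and $f^\downarrow \prec f^\searrow$ for \emph{all} admissible $f$; Lemma~\ref{lemma:jensens} then transfers the refinement chain to the ensemble accuracy. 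To repair your proof you would need either to supply the missing uniform identification of the constrained extremizers (which, if true, would be a stronger, tightness-type result) or to abandon the LP framing and argue, as the paper does, via a refinement chain that dominates every admissible $f$ by distributions that may leave the constraint set.
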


The proof is in the Supplementary Material, Section~\ref{sup:IndAccIndInfoEnsembleAcc}. The proof idea is visualized in Figure~\ref{fig:proofIdea}. Theorem~\ref{theorem:IndAccIndInfoGroupAcc} shows that additionally knowing the information of the individual classifiers allows to predict the ensemble performance better than when only their accuracy is known, see Figure~\ref{fig:panel1}. In the retina example, if we know that the $k=3$ classifiers in the ensemble have an accuracy of $\accuracy{1} = \accuracy{2} = \accuracy{3} = 70\%$ and also know that they provide in expectation $\information{1} = \information{2} = \information{3} = 0.36$~bit of information, we can improve the ensemble accuracy bounds from 78\%--89\% (with only known accuracies) to 85\%--89\%. This corresponds to Figure~\ref{fig:panel1}, second plot, orange shaded area for $k=3$. A lower information of $0.15$ bit would lead to bounds of 78\%--84\% (blue shaded area).
While the bounds in Theorem~\ref{theorem:IndAccGroupAcc} are tight, we do not know whether the bounds in Theorem~\ref{theorem:IndAccIndInfoGroupAcc} are tight.

One application of Theorem~\ref{theorem:IndAccIndInfoGroupAcc} is to determine how many classifiers are at least necessary to guarantee a target ensemble accuracy of, say, 95\%, see Figure~\ref{fig:panel3}. If the individual classifiers have an accuracy of $70\%$ and a high information of $0.36$~bit (light orange line) an ensemble size of $k=7$ classifiers is required. If their accuracy is the same but their information is lower ($0.26$~bit, light blue line), then $k=13$ classifiers are required to achieve the target ensemble accuracy.

\begin{figure}[t]
\includegraphics[width=.85\linewidth]{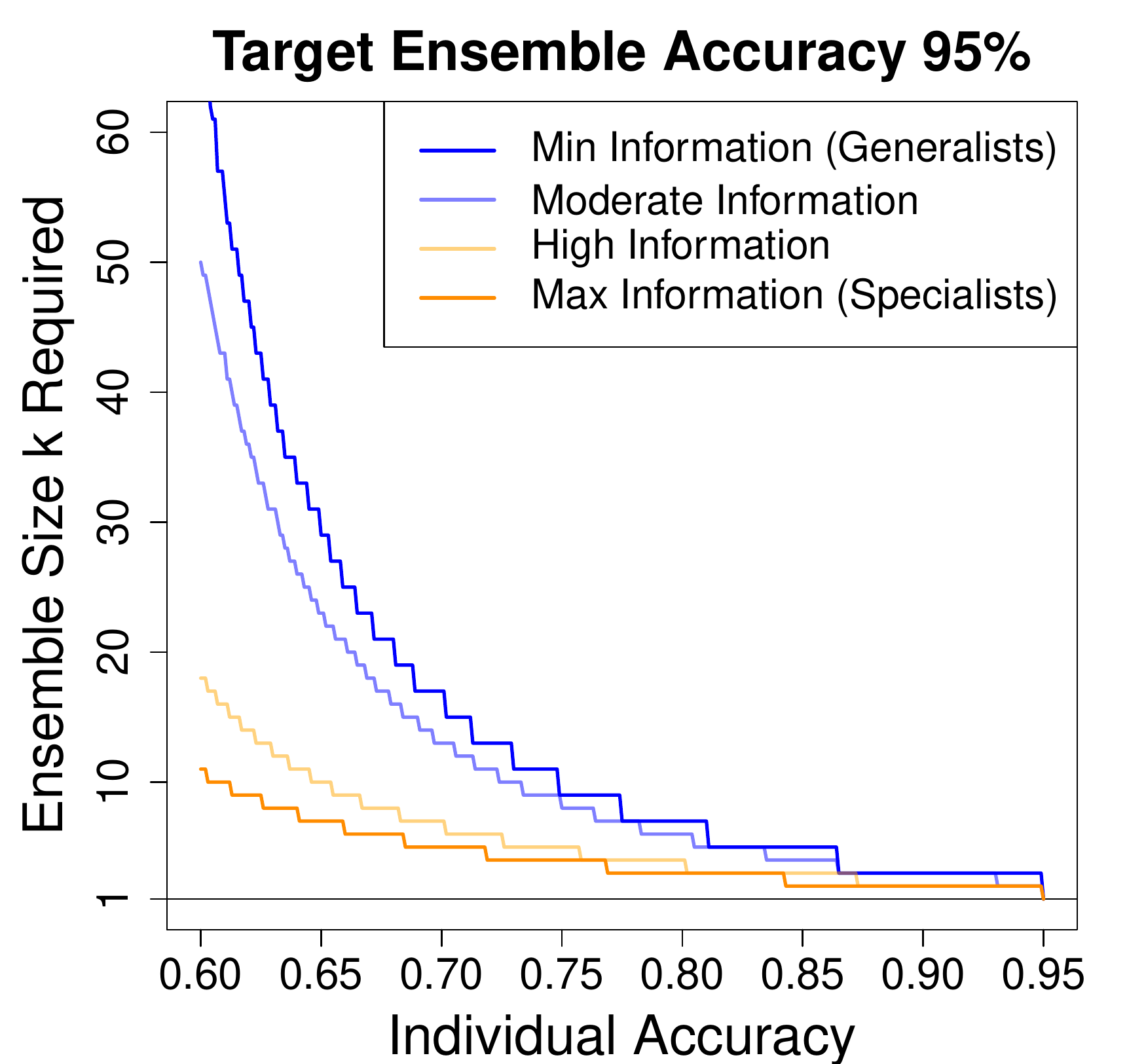}
\centering
\caption{\textbf{Ensemble size determination.} To achieve a target ensemble accuracy of  $\accuracy{f_e} = 95\%$ a certain ensemble size $k$ (y-axis) is required depending on the accuracy of the individual classifiers (y-axis). Across accuracies, we consider 4 different levels of individual classifiers' information: minimal (blue), moderate (50\% of the admissible information range; light blue), high (90\%; light orange) and maximal (orange) information. Low information classifiers (blue) require larger ensembles to reach the target ensemble accuracy than high information classifiers (orange) with the same individual accuracy.}
\label{fig:panel3}
\end{figure}

% ##     ##            ##       #### ##    ## ########  #######  
%  ##   ##              ##       ##  ###   ## ##       ##     ## 
%   ## ##                ##      ##  ####  ## ##       ##     ## 
%    ###       #######    ##     ##  ## ## ## ######   ##     ## 
%   ## ##                ##      ##  ##  #### ##       ##     ## 
%  ##   ##              ##       ##  ##   ### ##       ##     ## 
% ##     ##            ##       #### ##    ## ##        #######  

\subsection{Bounds on the ensemble mutual information}
\label{sec:IndInfGroupX}

Up to now, we have bounded the ensemble accuracy, $\accuracy{f_e}$. Since we introduced the information of an individual classifier as a second measure, we can also consider the bounds for the ensemble information, that is, the mutual information between the true label and the ensemble's output, $\information{f_e}$. The ensemble information behaves much like the ensemble accuracy and is bounded by the same confidence distributions as before.
\begin{proposition}
    \label{prop:IndAccGroupInfo}
    \textbf{(Specialists and generalists bound the ensemble information)}
    Consider $k$ classifiers with individual accuracies $\accuracy{i}$ and confidence distributions $f_i$ ($i \in \{1..k\}$). For each $i$, let 
    $f^\text{generalist}_{\accuracy{i}}$,
    $f^\downarrow_{\accuracy{i},\information{i}}$,
    $f^\uparrow_{\accuracy{i},\information{i}}$
    and $f^\text{specialist}_{\accuracy{i}}$ be as defined above. 
    Now consider the ensemble classifier based on the original classifiers, with ensemble confidence distribution $f_e~=~\bigotimes_{i = 1}^k\,f_i$ as well as ensembles with confidence distributions $f^\text{generalist}_e$, $f^\downarrow_e$, $f^\uparrow_e$, and $f^\text{specialist}_e$ as in Theorem~\ref{theorem:IndAccIndInfoGroupAcc}.
    The information of the ensemble classifier is bounded by
    $$\information{f^\text{generalist}_e} \leq \information{f^\downarrow_e} \leq \information{f_e} \leq \information{f^\uparrow_e} \leq \information{f^\text{specialist}_e}.$$     
\end{proposition}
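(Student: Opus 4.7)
The plan is to mirror the proof of Theorem~\ref{theorem:IndAccIndInfoGroupAcc} with the information functional $\information{\cdot}$ substituted for the accuracy functional $\accuracy{\cdot}$. Both quantities share the form $\int_{0.5}^{1} f(c)\,\phi(c)\,dc$: accuracy uses $\phi_{\mathrm{acc}}(c) = c$, which is linear (hence convex), while information uses $\phi_{\mathrm{info}}(c) = 1 - H_2(c)$, which is strictly convex on $[0.5, 1]$ because $\phi_{\mathrm{info}}''(c) = \frac{1}{c(1-c)\ln 2} > 0$. Both $\phi$'s are convex and non-decreasing on $[0.5, 1]$, corresponding to the Bayes utilities of proper scoring rules — the zero-one score and the log score, respectively.

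The key observation is that the proofs of Theorems~\ref{theorem:IndAccGroupAcc} and~\ref{theorem:IndAccIndInfoGroupAcc} proceed via a refinement argument: replacing an individual $f_i$ by a more refined (respectively coarser) distribution in the convex, or Blackwell, order yields an ensemble $f_e = \bigotimes_j f_j$ that is itself more refined (respectively coarser) in the convex order, because the log-odds convolution $\bigotimes$ preserves convex order (see the Supplementary discussion of refinement in Section~\ref{sup:refinement}). This propagation step is $\phi$-independent. By Strassen's theorem, convex-order refinement is characterised exactly by $\int \phi\,df_e \geq \int \phi\,df_e'$ for every convex $\phi$; evaluating with $\phi = \phi_{\mathrm{info}}$ rather than $\phi = \phi_{\mathrm{acc}}$ then delivers every inequality in the claimed chain $\information{f^\text{generalist}_e} \leq \information{f^\downarrow_e} \leq \information{f_e} \leq \information{f^\uparrow_e} \leq \information{f^\text{specialist}_e}$, one-for-one with the corresponding accuracy inequalities of Theorem~\ref{theorem:IndAccIndInfoGroupAcc}.

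Concretely, I would first extract the convex-order-preservation lemma for $\bigotimes$ from the Supplementary proof, then verify that each of the four relations $f^\text{generalist}_{\accuracy{i}} \preceq f^\downarrow_{\accuracy{i},\information{i}} \preceq f_i \preceq f^\uparrow_{\accuracy{i},\information{i}} \preceq f^\text{specialist}_{\accuracy{i}}$ holds in the convex order, not merely in the accuracy order. The step I expect to be the main obstacle is precisely this convex-order verification for $f^\downarrow$ and $f^\uparrow$: two distributions can share both $\accuracy{}$ and $\information{}$ and yet be convex-order incomparable, so the extremality established for the accuracy functional does not transfer automatically. However, the constructions $f^\downarrow$ and $f^\uparrow$ are two- and three-point mixtures that concentrate mass at the extreme confidences $\{0.5, 1\}$ or at the mean $\accuracy{i}$ — the canonical shape of convex-order extrema under moment constraints — so I expect the verification to succeed by an explicit mass-transport argument between any admissible $f_i$ and the two constructions, preserving both moments $\int f \cdot 1\,dc$, $\int f \cdot c\,dc$, and $\int f \cdot (1-H_2(c))\,dc$. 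Once this is in place, the proposition follows immediately by evaluating both sides of each convex-order inequality against $\phi_{\mathrm{info}}$.
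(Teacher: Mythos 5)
Your overall architecture (refinement ordering of the individual distributions plus a Jensen-type argument) is the same as the paper's, but there is a genuine gap at the step you treat as free: the claim that the combination operator $\otimes$ preserves the convex (Blackwell) order, so that the "propagation step is $\phi$-independent". No such lemma exists in the paper, and in the form you state it it cannot hold: the standard convex order requires equal means, yet the mean of $f_e$ is exactly the ensemble accuracy, which Theorem~\ref{theorem:IndAccGroupAcc} shows \emph{does} change when an individual classifier is refined (that is the whole point of the paper). What the paper actually proves — and what your proposal omits — is the functional-specific fact that the conditional expected ensemble information, viewed as a scoring function of one classifier's local confidence with the other classifier fixed, i.e.\ $c_1 \mapsto \sum_{c_2\in\Omega_{f_2}} f_2(c_2)\,\phi_{c_2}(c_1)$ with
\begin{equation*}
\phi_{c_2}(c_1) \;=\; \nu^\text{agree}_{c_1,c_2}\bigl(1-H_2(\eta^\text{agree}_{c_1,c_2})\bigr) + \nu^\text{disagree}_{c_1,c_2}\bigl(1-H_2(\eta^\text{disagree}_{c_1,c_2})\bigr),
\end{equation*}
is convex in $c_1$. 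This is the substance of the paper's proof: a computation showing $\frac{\mathrm{d}^2}{\mathrm{d}c_1^2}\phi_{c_2}(c_1) = \frac{c_2(1-c_2)}{c_1(1-c_1)\,\nu^\text{disagree}_{c_1,c_2}\nu^\text{agree}_{c_1,c_2}} \geq 0$, after which Lemma~\ref{lemma:jensens} and Corollary~\ref{corollary:genAndSpec} apply exactly as for accuracy, and induction over $k$ finishes. The convexity of $1-H_2(c)$ as a function of the \emph{individual} confidence (your $\phi_{\mathrm{info}}$) gives only Proposition~\ref{prop:IndAccIndInfo}, not the ensemble statement; without the convexity of the ensemble kernel above, your chain of inequalities is not justified.

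Conversely, the step you flag as the main obstacle — verifying that $f^\text{generalist}_{\accuracy{i}} \prec f^\downarrow_{\accuracy{i},\information{i}} \prec f_i \prec f^\uparrow_{\accuracy{i},\information{i}} \prec f^\text{specialist}_{\accuracy{i}}$ holds in more than the accuracy order — requires no new work: the proof of Theorem~\ref{theorem:IndAccIndInfoGroupAcc} already establishes these relations in the refinement order of Definition~\ref{def:refinement} (elementary mean-preserving merges/splits, via the intermediate $f^\searrow$ and $f^\nearrow$), and Lemma~\ref{lemma:jensens} shows that refinement increases the score of \emph{any} convex scoring function, so the same chain serves the information bound verbatim once the ensemble kernel is known to be convex. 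So the proposed mass-transport verification is unnecessary, while the derivative computation you skipped is where the actual proof lives.
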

The proof is in the Supplementary Materials, Section~\ref{sup:boundsOnInformation}. Classifiers with confidence distributions that improve the ensemble accuracy also tend to improve the ensemble information. Finally, we bound the ensemble information for when only the individual classifiers' information is known (but not their accuracies).
\begin{proposition}
    \label{prop:IndInfoGroupInfo}
    \textbf{(Information constrained specialists and generalists bound the ensemble information)}
    Consider $k$ classifiers with individual information $\information{i}$ and confidence distributions $f_i$ ($i \in \{1..k\}$). For each $i$, let the accuracies corresponding to the individual information be $\tilde{\accuracy{i}}~=~H_2^{-1}(1-\information{i})$. Let $f^\text{generalist}_{\overset{\sim}{\accuracy{i}}}$ and $f^\text{specialist}_{\overset{\sim}{\accuracy{i}}}$ as defined above. 
    Now consider the ensemble information based on the original classifiers, with ensemble confidence distribution $f_e~=~\bigotimes_{i = 1}^k\,f_i$ as well as ensembles with confidence distributions $\tilde{f}^\text{generalist}_e~=~\bigotimes_{i = 1}^k\,f^\text{generalist}_{\overset{\sim}{\accuracy{i}}}$ and  $\tilde{f}^\text{specialist}_e~=~\bigotimes_{i = 1}^k\,f^\text{specialist}_{\overset{\sim}{\accuracy{i}}}$ as in Theorem~\ref{theorem:IndAccIndInfoGroupAcc}. The information of the ensemble classifier is bounded by 
    $$\information{\tilde{f}^\text{generalist}_e} \leq \information{f_e} \leq \information{\tilde{f}^\text{specialist}_e}.$$  
\end{proposition}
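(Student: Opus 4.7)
My plan is to combine Proposition~\ref{prop:IndAccGroupInfo} with the accuracy/information relation of Proposition~\ref{prop:IndAccIndInfo}. Since each $f_i$ has information $\information{i}$, Proposition~\ref{prop:IndAccIndInfo} pins down its accuracy to $\accuracy{i} \in [(1+\information{i})/2,\,\tilde{\accuracy{i}}]$: the generalist clause $\information{i} \geq 1 - H_2(\accuracy{i})$ inverts (via the decreasing map $H_2^{-1}$ on $[0,1]$) to $\accuracy{i} \leq \tilde{\accuracy{i}}$, while the specialist clause $\information{i} \leq 2\accuracy{i} - 1$ gives $\accuracy{i} \geq (1+\information{i})/2$. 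Note in particular that the info-matched generalist $f^\text{generalist}_{\tilde{\accuracy{i}}}$ has information exactly $\information{i}$.

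\textbf{Upper bound.} By Proposition~\ref{prop:IndAccGroupInfo} applied to the actual accuracies, $\information{f_e} \leq \information{f^\text{specialist}_{\accuracy{1}} \otimes \cdots \otimes f^\text{specialist}_{\accuracy{k}}}$. An ensemble of specialists outputs a certain prediction whenever at least one individual is certain, so the right-hand side equals $1 - \prod_i 2(1-\accuracy{i})$, which is monotone non-decreasing in each $\accuracy{i}$. Since $\accuracy{i} \leq \tilde{\accuracy{i}}$, we obtain $\information{f^\text{specialist}_{\accuracy{1}} \otimes \cdots \otimes f^\text{specialist}_{\accuracy{k}}} \leq \information{\tilde{f}^\text{specialist}_e}$, and chaining yields the desired upper bound.

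\textbf{Lower bound.} The analogous chain fails: Proposition~\ref{prop:IndAccGroupInfo} only gives $\information{f^\text{generalist}_{\accuracy{i}} \otimes \cdots} \leq \information{f_e}$, and because the generalist ensemble information is also monotone non-decreasing in accuracy, the bound $\information{\tilde{f}^\text{generalist}_e}$ (built at $\tilde{\accuracy{i}} \geq \accuracy{i}$) sits \emph{above} rather than below that Proposition~\ref{prop:IndAccGroupInfo} bound. I would therefore argue by the same refinement/merging principle used in the proof idea of Theorem~\ref{theorem:IndAccGroupAcc}, but with the information constraint replacing the accuracy constraint. Both $f_i$ and $f^\text{generalist}_{\tilde{\accuracy{i}}}$ carry the same information, but the info-matched generalist concentrates its confidence mass at a single atom $\tilde{\accuracy{i}}$, whereas $f_i$ spreads mass over higher (more certain) and lower (less certain) atoms in a way that preserves the information integral. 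Since $l$CWMV ensembles benefit from such spreading by downweighting low-confidence predictions, replacing $f_i$ with the more concentrated generalist should not increase the ensemble information; iterating this swap for $i = 1, \ldots, k$ gives the lower bound.

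\textbf{Main obstacle.} The hard step is the single-classifier swap inequality. A clean Blackwell-dominance argument is blocked because $f^\text{generalist}_{\tilde{\accuracy{i}}}$ has strictly higher accuracy than $f_i$, so $f_i$'s joint distribution with $Y$ cannot be obtained from the generalist's via a stochastic post-processing in the usual sense. The proof must instead establish a convexity-type lemma stating that, for any fixed set of co-classifiers, the ensemble information is minimized—over the class of confidence distributions with prescribed information $\information{i}$—by the single-atom (generalist) distribution at $\tilde{\accuracy{i}}$. This is the main technical content behind the lower bound of Proposition~\ref{prop:IndInfoGroupInfo}, and is presumably handled via the refinement machinery developed in the proof of Proposition~\ref{prop:IndAccGroupInfo}.
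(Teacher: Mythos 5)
Your upper bound is correct, and it takes a genuinely different and more elementary route than the paper: you chain Proposition~\ref{prop:IndAccGroupInfo} (applied at the true accuracies) with the closed form $1-\prod_i 2(1-\accuracy{i})$ for the specialist-ensemble information, its monotonicity in each $\accuracy{i}$, and the inequality $\accuracy{i}\le\tilde{\accuracy{i}}$ obtained by inverting Proposition~\ref{prop:IndAccIndInfo}. That chain is sound for the statement as written; note that the paper instead obtains both bounds at once from a single Jensen-type argument in the information coordinate, which in fact yields the tighter upper bound given by specialists whose information equals $\information{i}$ (accuracy $(1+\information{i})/2$), from which the stated bound would follow by exactly your monotonicity step.

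The lower bound, however, contains a genuine gap. You correctly diagnose that naive chaining fails and that what is needed is a one-classifier swap lemma: with the co-classifiers fixed, the ensemble information, as a functional of $f_i$ subject only to $\information{f_i}=\information{i}$, is minimized by the single atom at $c=\tilde{\accuracy{i}}$. But you leave this lemma unproven (``presumably handled via the refinement machinery''), and it does not follow from anything already established: Lemma~\ref{lemma:jensens} and Corollary~\ref{corollary:genAndSpec}, as well as the convexity computed in the proof of Proposition~\ref{prop:IndAccGroupInfo}, all operate under a fixed-\emph{accuracy} constraint (fixed mean of $c$), whereas here the constraint fixes the mean of the local information $\iota(c)=1-H_2(c)$; since $\iota$ is a nonlinear (convex) function of $c$, convexity of the pairwise score in $c_1$ gives no Jensen inequality under the information constraint. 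Closing exactly this gap is the entire technical content of the paper's proof: it shows that the pairwise term $\phi_{c_2}(c_1)$ is convex \emph{as a function of} $\iota(c_1)$, by computing $\mathrm{d}^2\phi_{c_2}/\mathrm{d}\iota(c_1)^2=\bigl(\phi_{c_2}''\,\iota'-\phi_{c_2}'\,\iota''\bigr)/(\iota')^3$ and establishing nonnegativity of the resulting expression $\omega(c_1,c_2)$ via its boundary values at $c_2\in\{0.5,1\}$, the sign of its derivative at $c_2=0.5$, and a uniqueness-of-critical-point argument; only then does the $\iota$-coordinate version of Lemma~\ref{lemma:jensens} deliver the bounds (lower: merge all mass to the atom at $\iota=\information{i}$, i.e.\ $c=\tilde{\accuracy{i}}$; upper: split mass to $\iota\in\{0,1\}$), with the multi-classifier case following by the usual swap-and-induct step. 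Without this convexity-in-$\iota$ computation, your lower bound is an assertion rather than a proof.
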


The proof is in the Supplementary Materials, Section~\ref{sup:boundsOnInformation}. At first sight, this seems to be unsurprising: Again, specialists and generalists attain the upper resp. lower bounds. But specialists have a lower accuracy than generalists to the same information. Consider individual specialists with known information of $\information{i} = 0.4$~bit: In Figure~\ref{fig:proofIdea} (middle) they lie on the dashed line to the left of the crescent (orange dot, with an accuracy of $70\%$). Generalists with the same information lie on the solid curve to the right (with accuracy of around $85\%$, no dot is shown). Having the same information, specialists have a $15\%$-point lower accuracy than generalists but nevertheless produce a better ensemble information! The explanation for this can be found by applying information decomposition \citep{griffith2014quantifying,harder2013bivariate}: Specialists in our setting have a higher proportion of \textit{unique} and a smaller proportion of \textit{redundant} information as compared to generalists and are therefore more effective in ensembles. Thus, Proposition~\ref{prop:IndInfoGroupInfo} demonstrates another desirable property of specialists.

\section{Discussion}
\label{sec:discussion}

In a setting in which individual classifiers output predictions together with confidences (probability elicitation), we have shown that the accuracy of the ensemble depends on the exact confidence distributions. Classifiers that distinguish between high and low confidences perform better than those that always produce moderate confidences. We have provided bounds when (a) only the individual classifiers' accuracies are known and (b) when both, the individual classifiers' accuracies and mutual information, are known. These bounds can be used to determine how many classifiers must be included in an ensemble to guarantee a target accuracy, see Figure~\ref{fig:panel3}.

For our running example, this means that even if we know how often a classifier can predict correctly whether a patient has a disease or not based on a single retina image, we cannot uniquely determine the accuracy of an ensemble of such classifiers. However, we can provide bounds and improve on these bounds when we additionally know the transmitted information of these classifiers. 

Classifiers in an ensemble should ideally be constructed such that they specialize: For a given accuracy, ideal classifiers should sometimes predict with close to absolute certainty even if this comes at the cost of not learning on other parts of the input space. The advantage of specialists comes through despite the specialists not coordinating on which areas of the input space they specialize.

\subsubsection*{Acknowledgements}

This project is supported by the Deutsche Forschungsgemeinschaft (DFG,
German Research Foundation) through the CRC 1233 ``Robust Vision'',
project number 276693517; 
the Institutional Strategy of the University
of T{\"u}bingen (DFG, ZUK 63); and the Cluster of Excellence “Machine
Learning: New Perspectives for Science”, EXC 2064/1, project number
390727645.

% \subsubsection*{References}
% \begingroup

% Hide the normal reference heading which is \section instead of
% \subsubsection* as required by the template
% {
% \renewcommand{\section}[2]{}%

% \bibliographystyle{plainnat}
% \bibliography{bibliography.bib}
% }

% Journals should be italic, seems like a global issue.
% This is caused by \usepackage{times} on my machine.

% \endgroup

%  ######  ##     ## ########  ########  ##       ######## ##     ## ######## ##    ## ######## 
% ##    ## ##     ## ##     ## ##     ## ##       ##       ###   ### ##       ###   ##    ##    
% ##       ##     ## ##     ## ##     ## ##       ##       #### #### ##       ####  ##    ##    
%  ######  ##     ## ########  ########  ##       ######   ## ### ## ######   ## ## ##    ##    
%       ## ##     ## ##        ##        ##       ##       ##     ## ##       ##  ####    ##    
% ##    ## ##     ## ##        ##        ##       ##       ##     ## ##       ##   ###    ##    
%  ######   #######  ##        ##        ######## ######## ##     ## ######## ##    ##    ##    
%%%%%%%%%% Merge with supplemental materials %%%%%%%%%%
\newpage 

\onecolumn
\begin{center}
\textbf{\Large Specialists Outperform Generalists in Ensemble Classification} \\[0.3cm]
\textbf{\Large Supplementary Material}
\end{center}
%%%%%%%%%% Merge with supplemental materials %%%%%%%%%%
%%%%%%%%%% Prefix a "S" to all equations, figures, tables and reset the counter %%%%%%%%%%
\setcounter{equation}{0}
\setcounter{figure}{0}
\setcounter{theorem}{0}
\setcounter{table}{0}
\setcounter{section}{0}
\setcounter{subsection}{0}
\setcounter{page}{1}
\makeatletter
\renewcommand{\theequation}{S\arabic{equation}}
\renewcommand{\thefigure}{S\arabic{figure}}
\renewcommand{\bibnumfmt}[1]{[S#1]}
\renewcommand{\citenumfont}[1]{S#1}

\newtheorem{lemma}[theorem]{Lemma}
\newtheorem{corollary}[theorem]{Corollary}
\renewcommand{\thetheorem}{S\arabic{theorem}}

%%%%%%%%%% Prefix a "S" to all equations, figures, tables and reset the counter %%%%%%%%%%

% Alphabetic Section Counter
\renewcommand\thesection{\Alph{section}}
\renewcommand\thesubsection{\thesection.\arabic{subsection}}

%  ######   #######  ##    ## ##     ## 
% ##    ## ##     ## ###   ## ##     ## 
% ##       ##     ## ####  ## ##     ## 
% ##       ##     ## ## ## ## ##     ## 
% ##       ##     ## ##  ####  ##   ##  
% ##    ## ##     ## ##   ###   ## ##   
%  ######   #######  ##    ##    ###    

\section{Ensemble confidence distribution} 
\label{sup:convolution}

In this section, we show how two individual classifiers with confidence distributions $f_1$ and $f_2$ combine into an ensemble classifier with confidence distribution $f_e = f_1 \otimes f_2$. Throughout the Supplementary Materials (except for Section~\ref{sup:rearrangingMutualInformation} to remain consistent with the main text), we will consider only discrete confidence distributions $f(c) = P(C=c)$ with support $\Omega_f = \{c | f(c)>0 \}$ to keep notation simple.

% When the two individual classifiers output $O_1 = \left(\hat{Y}_1, C_1\right)$ and $O_2 = \left(\hat{Y}_2, C_2\right)$, the ensemble output is $O_e = \left(\hat{Y}_e, C_e\right)$ as defined in Equations~\eqref{eqn:CWMV_Y} and \eqref{eqn:CWMV_C}. 

To further simplify notation, we introduce an ad-hoc notation $f^*(C)$ to a given confidence distribution $f(C)$. It redistributes confidence mass from the range of $C\in[0.5,1]$ to $C^*\in[0,1]$ symmetrically: Half the probability mass of $f(C)$ goes to $f^*(C)$ and the other half to $f^*(1-C)$.
\begin{definition}
    \textbf{(Redistributed confidence distribution)}
    \label{def:redist}
    Let $f:[0.5,1]\rightarrow\mathbb{R}$ be a confidence distribution. Then $f^*:[0,1]\rightarrow\mathbb{R}$ is the corresponding redistributed confidence distribution such that
    \begin{align*}
        f^*(c) = 
            \begin{cases} 
                f(1-c)/2 & 0 \leq c < 0.5\\
                f(c) & c = 0.5\\
                f(c)/2 & 0.5 < c \leq 1
            \end{cases} \text{\  ~~~~~~~~~~and~~~~~ \ }
        f(c) = %inverse 
            \begin{cases} 
                f^*(c) & c = 0.5\\
                2f^*(c) & 0.5 < c \leq 1
            \end{cases}
    \end{align*}
\end{definition}

Let $g:(0, 1) \times (0,1) \rightarrow (0,1)$ be the function that determines which redistributed confidence, $c^*_2 \in (0,1)$, the second classifier has to produce such that together with the confidence of the first classifier, $c^*_1 \in (0,1)$, a given ensemble confidence, $c^*_e \in (0,1)$, is obtained, $c^*_2 = g(c^*_e, c^*_1)$. Then, the confidence distribution of the ensemble is given by Proposition~\ref{prop:combine}.
\begin{proposition}
    \textbf{(Combining confidence distributions)}
    \label{prop:combine}
    Given are two classifiers with confidence distributions $f_1$ and $f_2$. The ensemble confidence distribution $f_e = f_1 \otimes f_2$ is
    \begin{align*}
    f_e(c_e) = \big(f_1 \otimes f_2\big)(c_e) &= 
        \begin{cases}
            \sum_{c^*_1 \in \Omega_{f^*}\setminus\{0,1\}} \left( f^*_1(c^*_1) \cdot f^*_2(g(c_e, c^*_1)) \cdot \frac{2c^*_1(1-c^*_1)}{c^*_1 +c_e -2c^*_1 c_e  } \right) & c_e = 0.5 \\[0.3cm]
            2 \sum_{c^*_1 \in \Omega_{f^*}\setminus\{0,1\}} \left( f^*_1(c^*_1) \cdot f^*_2(g(c_e, c^*_1)) \cdot \frac{2c^*_1(1-c^*_1)}{c^*_1 +c_e -2c^*_1 c_e  } \right) & 0.5 < c_e < 1 \\[0.3cm]
            f_1(1) + f_2(1) - f_1(1)f_2(1) & c_e = 1
        \end{cases}
    \end{align*}
    where $g(c_e, c^*_1) = \frac{c_e (1-c^*_1) }{-2 c^*_1 c_e + c^*_1 +c_e}$.
\end{proposition}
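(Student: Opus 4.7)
The key idea is that under $l$CWMV the log-odds of the ensemble's posterior equal the sum of the individuals' log-odds, so the problem reduces to a convolution in log-odds space followed by a change of variables back to confidence.

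First I would pass from $(\hat Y_i, C_i)$ to the redistributed confidence $C^*_i := P(Y = +1 \mid X_i) \in [0,1]$ of Definition~\ref{def:redist}, which equals $C_i$ when $\hat Y_i = +1$ and $1 - C_i$ when $\hat Y_i = -1$. Since $P(Y = +1) = 1/2$, the prior log-odds are zero, so Bayes' rule applied to conditionally independent observations yields $\mathrm{logit}(C^*_e) = \mathrm{logit}(C^*_1) + \mathrm{logit}(C^*_2)$. I would then check that this additive rule reproduces Equations~\eqref{eqn:CWMV_Y}--\eqref{eqn:CWMV_C} by noting that $\hat Y_e = \mathrm{sgn}(C^*_e - 1/2)$ and $C_e = \max(C^*_e, 1 - C^*_e)$.

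Second, solving the additive log-odds equation for $c^*_2$ at fixed $c^*_1$ and $c_e$ gives by elementary algebra $c^*_2 = c_e(1 - c^*_1)/(c^*_1 + c_e - 2 c^*_1 c_e) = g(c_e, c^*_1)$, matching the formula in the proposition. I would then express the distribution of $C^*_e$ by combining the joint $f^*_1(c^*_1)\, f^*_2(c^*_2)$ over all pairs with $c^*_2 = g(c_e, c^*_1)$, pulling back from log-odds to confidence through the sigmoid Jacobian $dc^*/d\ell = c^*(1 - c^*)$. The identity $c^*_2(1 - c^*_2) = c^*_1(1 - c^*_1)\, c_e(1 - c_e) / (c^*_1 + c_e - 2 c^*_1 c_e)^2$, itself a direct consequence of the expression for $g$, is what combines the three Jacobian factors into the claimed multiplier $\tfrac{2 c^*_1 (1 - c^*_1)}{c^*_1 + c_e - 2 c^*_1 c_e}$ inside the sum.

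Third, I would fold $C^*_e$ back into $(\hat Y_e, C_e)$ and handle the boundaries. Because $P(Y = +1) = P(Y = -1)$, each $f^*_i$ is symmetric about $1/2$, so for $c_e \in (1/2, 1)$ the confidence $C_e = c_e$ receives equal contributions from $C^*_e = c_e$ and $C^*_e = 1 - c_e$, producing the outer factor of $2$; at $c_e = 1/2$ the two contributions coincide and that factor is absent. The atoms $c^*_i \in \{0, 1\}$ sit at $\pm\infty$ in log-odds and are therefore excluded from the sum; they contribute only to the atom $c_e = 1$, whose mass follows by inclusion--exclusion as $f_1(1) + f_2(1) - f_1(1) f_2(1)$. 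The main obstacle I expect is the bookkeeping of the Jacobian through the sigmoid and the careful isolation of the $\{0, 1\}$ atoms; once those conversions are set up, the remaining steps are algebraic verification.
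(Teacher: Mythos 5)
Your derivation of $g$ via additive log-odds, the folding factor of $2$ for $c_e\in(0.5,1)$, and the inclusion--exclusion mass at $c_e=1$ all match the paper. But the central step---where the multiplier $\tfrac{2c^*_1(1-c^*_1)}{c^*_1+c_e-2c^*_1c_e}$ comes from---is wrong in your proposal. You treat the problem as an independent convolution of $f^*_1$ and $f^*_2$ in log-odds space and attribute the multiplier to the sigmoid Jacobian $dc^*/d\ell=c^*(1-c^*)$. Two problems: (i) the proposition (and the whole supplementary) is stated for \emph{discrete} confidence distributions, i.e.\ sums of point masses over $\Omega_{f^*}$, so no change-of-variables Jacobian enters at all; and (ii) more fundamentally, $f^*_1(c^*_1)f^*_2(c^*_2)$ is \emph{not} the joint distribution of $(C^*_1,C^*_2)$. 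The local confidences $C_1,C_2$ are independent of each other and of $Y$, but the signed (redistributed) posteriors $C^*_1,C^*_2$ are only conditionally independent given $Y$, since both predictions are correlated with the true label. The paper's proof obtains the multiplier exactly from this point: by calibration $P(Y=+1\mid C^*_i=c^*_i)=c^*_i$ and $P(Y=\pm1)=\tfrac12$, the joint mass is $f^*_1(c^*_1)f^*_2(c^*_2)\cdot 2\bigl(c^*_1c^*_2+(1-c^*_1)(1-c^*_2)\bigr)$, and on the fiber $c^*_2=g(c_e,c^*_1)$ this correction factor simplifies (using $1-g=\tfrac{c^*_1(1-c_e)}{c^*_1+c_e-2c^*_1c_e}$) to precisely $\tfrac{2c^*_1(1-c^*_1)}{c^*_1+c_e-2c^*_1c_e}$; summing over $y$ and over $c^*_1$ then gives $P(C^*_e=c^*_e)$.

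Your Jacobian bookkeeping also does not reproduce the claimed factor even on its own terms: carrying the identity $g(1-g)=\tfrac{c^*_1(1-c^*_1)c_e(1-c_e)}{(c^*_1+c_e-2c^*_1c_e)^2}$ through an independent log-odds convolution yields a weight $\tfrac{c^*_1(1-c^*_1)}{(c^*_1+c_e-2c^*_1c_e)^2}$, i.e.\ the denominator appears squared and the factor $2$ is missing, so the proposed "three Jacobian factors combine into the claimed multiplier" step would fail if written out. To repair the proof, drop the convolution/Jacobian picture and instead compute $P(C^*_1=c^*_1,\,C^*_2=g(c_e,c^*_1))$ by conditioning on $Y$, applying Bayes with the calibration assumption, and summing over $y$ and over the support of $f^*_1$ (excluding the atoms at $0$ and $1$, which you already handle correctly via the $c_e=1$ case).
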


\begin{proof}

First, we show that $c^*_1\in(0, 1)$ together with $c^*_2 = g(c^*_e, c^*_1)\in(0, 1)$ produces $c^*_e \in (0, 1)$. We rearrange
\begin{align*}
    c^*_e = \frac{1}{1 + \exp\left( - \left( \log\left(\frac{c^*_1}{1- c^*_1}\right) + \log\left(\frac{c^*_2}{1 - c^*_2}\right) \right) \right) }
\end{align*}
so that
\begin{align*}
    c^*_2 = \frac{1}{1 + \exp\left( - \left( \log\left(\frac{c^*_e}{1- c^*_e}\right) - \log\left(\frac{c^*_1}{1 - c^*_1}\right) \right) \right) }
    = \frac{c^*_e (1-c^*_1) }{-2 c^*_1 c^*_e + c^*_1 +c^*_e} = g(c^*_e, c^*_1) \text{.}
\end{align*}
We now show for the two cases, $c_e \in [0.5, 1)$ and $c_e = 1$, that the ensemble confidence distribution returns the probability that the ensemble prediction is correct.

(1) Case $c_e \in (0.5, 1)$: We remove $c^*_1 = 0$ and $c^*_1 = 1$ from the support $\Omega_{f^*}$ because, by convention, they produce $c^*_e = 0$ and $c^*_e = 1$ and therefore $c_e = 1$, which is excluded in this case.
\begin{align*}
    f^*_e(c^*_e) 
        & = 
        \sum_{c^*_1 \in \Omega_{f^*}\setminus\{0,1\}} f^*_1(c^*_1) \cdot
        f^*_2( g(c^*_e, c^*_1)) \cdot \frac{2c^*_1(1-c^*_1)}{-2c^*_1 c_e + c^*_1 +c_e } \\
        & = 
        \sum_{c^*_1 \in \Omega_{f^*}\setminus\{0,1\}} f^*_1(c^*_1) \cdot
        f^*_2( g(c^*_e, c^*_1)) \cdot 2\left( \frac{c^*_e c^*_1(1-c^*_1)}{-2c^*_1 c_e + c^*_1 +c_e } + \frac{(1-c^*_e)c^*_1(1-c^*_1)}{-2c^*_1 c_e + c^*_1 +c_e } \right) \\
        &= 
        \sum_{c^*_1 \in \Omega_{f^*}\setminus\{0,1\}}
        f^*_1(c^*_1) \cdot f^*_2(g(c^*_e, c^*_1)) \cdot 2(c^*_1 g(c^*_e, c^*_1) + (1-c^*_1)(g(1-c^*_e, 1-c^*_1)) \\
        &\overset{(1)}{=} 
        \sum_{c^*_1 \in \Omega_{f^*}\setminus\{0,1\}}
        f^*_1(c^*_1) \cdot f^*_2(g(c^*_e, c^*_1)) \cdot 2(c^*_1 g(c^*_e, c^*_1) + (1-c^*_1)(1-g(c^*_e, c^*_1)) \\
        &=
        \sum_{c^*_1 \in \Omega_{f^*}\setminus\{0,1\}} 
        2 \bigg( c^*_1 \cdot f^*_1(c^*_1) \cdot g(c^*_e, c^*_1) \cdot f^*_2(g(c^*_e, c^*_1)) + 
        (1-c^*_1) \cdot f^*_1(c^*_1) \cdot (1-g(c^*_e, c^*_1)) \cdot f^*_2(g(c^*_e, c^*_1)) \bigg) \\
        &=
        \sum_{c^*_1 \in \Omega_{f^*}\setminus\{0,1\}}  \sum_{y \in \{-1, +1\}} \left( 
        \frac{P(Y=y|C^*_1 = c^*_1)P(C^*_1 = c^*_1)}{P(Y=y)} \frac{P(Y=y|C^*_2 = g(c^*_e, c^*_1))P(C^*_2 = g(c^*_e, c^*_1))}{P(Y=y)} P(Y=y) \right) \\
        &=
        \sum_{c^*_1 \in \Omega_{f^*}\setminus\{0,1\}}  \sum_{y \in \{-1, +1\}} 
        P(C^*_1 = c^*_1|Y=y) P( C^*_2 = g(c^*_e, c^*_1) = g(c^*_e, c^*_1)|Y=y) P(Y=y) \\
        &= 
        \sum_{c^*_1 \in \Omega_{f^*}\setminus\{0,1\}} \sum_{y \in \{-1, +1\}} 
        P(C^*_1 =c^*_1, C^*_2 = g(c^*_e, c^*_1),Y=y)  \\
        &= 
        \sum_{c^*_1 \in \Omega_{f^*}\setminus\{0,1\}} P(C^*_1 =c^*_1, C^*_2 = g(c^*_e, c^*_1)) \\
        &= 
        P(C^*_e = c^*_e) \text{}
\end{align*}
In (1) we use the symmetry, $g(1-c^*_e, 1-c^*_1) = 1-g(c^*_e, c^*_1))$. Plugging these values into Definition~\ref{def:redist} yields the desired result.

(2) Case $c_e = 1$: We solve the edge case using the convention, $c_e = 1 \iff$ $c_1 = 1$ or $c_2 = 1$. Then
\begin{align*}
    f_e(c_e) 
    &= f_1(1) + f_2(1) - f_1(1)f_2(1) \\
    &= P(C_1 = 1) + P(C_2 = 1) - P(C_1 = 1 \land C_2 = 1) \\
    &= P(C_1 = 1 \lor C_2 = 1) \\
    &= P(C_e = 1) \text{.} 
\end{align*}

\end{proof}

The operator $\otimes$ is closed on the space of confidence distributions (probability distributions over $C\in[0.5, 1]$). Its associativity and commutativity follow from associativity and commutativity of addition and multiplication. The neutral element is  $f^\text{generalist}_{0.5}$. Together, this makes the operator $\otimes$ a commutative monoid.

\section{Mutual Information between true label and classifier output}
\label{sup:rearrangingMutualInformation}

Here, we show that the mutual information between true label and a classifier's output is a function that only depends on the classifier's confidence distribution.

\begin{proposition}
    \textbf{(Information is a function of local confidences)}
    Given is a classifier as defined in the main paper that produces the predictions and confidences as output, $O = (\hat{Y}, C)$. The mutual information $I$ between the true label $Y$ and the classifier's output $O$ is
    $$I\left(Y; O\right) = \int_{0.5}^1 f(c) \left(1-H_2(c)\right) ~dc \text{.}$$
\end{proposition}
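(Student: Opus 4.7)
The plan is to expand the mutual information in the standard way and reduce everything to the calibration property. First I would write $I(Y; O) = H(Y) - H(Y \mid O)$. Since the label prior is uniform, $H(Y) = 1$ bit. The task then reduces to showing that $H(Y \mid O) = \int_{0.5}^{1} f(c)\, H_2(c)\, dc$, because combining this with $1 = \int_{0.5}^{1} f(c)\, dc$ yields $I(Y; O) = \int_{0.5}^{1} f(c)\,(1 - H_2(c))\, dc$ as claimed.

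The central step is computing the pointwise conditional entropy $H(Y \mid O = (\hat{y}, c))$ for a fixed realization of the output. By the calibration assumption $C(X) = P(Y = \hat{Y} \mid X)$, conditioning on $\hat{Y} = \hat{y}$ and $C = c$ forces the distribution of $Y$ to be Bernoulli with $P(Y = \hat{y} \mid \hat{Y} = \hat{y}, C = c) = c$ and $P(Y = -\hat{y} \mid \hat{Y} = \hat{y}, C = c) = 1 - c$. The entropy of this two-point distribution is exactly $H_2(c)$. Crucially, this value depends only on $c$ and not on the sign $\hat{y}$, which is what lets the subsequent marginalization collapse onto $f$.

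I then take expectation over $O$ to obtain $H(Y \mid O) = \mathbb{E}_{O}[H(Y \mid O = o)]$. Using the label symmetry assumed in the paper ($f(c \mid Y) = f(c)$ together with $P(Y = \pm 1) = 0.5$), summing the joint density of $O = (\hat{Y}, C)$ over the two possible signs $\hat{y}$ recovers the marginal confidence density $f(c)$. Since the integrand $H_2(c)$ is independent of $\hat{y}$, the marginalization yields $\int_{0.5}^{1} f(c)\, H_2(c)\, dc$, and substituting back produces the formula.

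No step is genuinely hard; the main thing to be careful about is the boundary point $c = 0.5$, where the map from $X$ to $(\hat{Y}, C)$ is not injective because both label predictions are equally valid given $C = 0.5$. Fortunately $H_2(0.5) = 1$, so any tie-breaking convention contributes the same value to $H(Y \mid O)$ at this atom, and the formula remains valid regardless of how ties are resolved. Integrability against $1 - H_2$ is automatic since $1 - H_2(c) \in [0, 1]$ on $[0.5, 1]$, so no additional regularity on $f$ is required beyond being a probability density.
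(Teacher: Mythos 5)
Your proposal is correct and takes essentially the same route as the paper's proof: both decompose $I(Y;O) = H(Y) - H(Y\mid O)$ with $H(Y) = 1$ bit, use the calibration assumption to show that the conditional entropy given $(\hat{Y}, C=c)$ equals $H_2(c)$ independently of the predicted sign, and then integrate against $f$. The only cosmetic difference is that you compute the pointwise conditional law of $Y$ given the output directly, whereas the paper expands $H(Y\mid \hat{Y}, C=c)$ as a double sum over $(Y,\hat{Y})$ using $P(\hat{Y}\mid Y, C=c)$; the content is identical.
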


\begin{proof}
    \begin{align*}
        I\left(Y; O\right)
        = H(Y) - H\left(Y|O\right)
        = H(Y) - H\left(Y| (\hat{Y}, C) \right)
        = \int_{0.5}^1 f(c) \left( H(Y) - H(Y|\hat{Y}, C=c)\right) ~dc
    \end{align*}
    By assumption, $Y$ is binary and equally weighted so that $H(Y) = H_2(0.5) = 1$~bit. To complete the proof, we have to show that $H(Y|\hat{Y}, C=c) = H_2(c)$:
    \begin{align*}
        &H(Y|\hat{Y}, C=c) \\
        &= - \sum_{Y\in\{\pm1\}} \sum_{\hat{Y}\in\{\pm1\}} P(Y,\hat{Y}|C=c) \log_2 P(Y|\hat{Y}, C=c) \\
        &= - \sum_{Y\in\{\pm1\}} \sum_{\hat{Y}\in\{\pm1\}} P(Y)P(\hat{Y}|Y,C=c) \log_2 P(Y|\hat{Y}, C=c) \\
        &= - \sum_{Y\in\{\pm1\}} \sum_{\hat{Y}\in\{\pm1\}} \frac{1}{2}P(\hat{Y}|Y,C=c) \log_2 P(Y|\hat{Y}, C=c) \\
        &= -\frac{1}{2} \left( \underbrace{c \log_2 c}_{Y=\hat{Y} = +1} + \underbrace{c \log_2 c}_{Y=\hat{Y} = -1} + \underbrace{(1-c) \log_2 (1-c)}_{Y=+1\neq\hat{Y} = -1} + \underbrace{(1-c) \log_2 (1-c)}_{Y=-1\neq\hat{Y} = +1} \right) \\
        &= - \left( c \log_2 c + (1-c) \log_2 (1-c) \right) \\
        &= H_2(c) \text{.}\\
    \end{align*}
\end{proof}

%       ## ######## ##    ##  ######  ######## ##    ## 
%       ## ##       ###   ## ##    ## ##       ###   ## 
%       ## ##       ####  ## ##       ##       ####  ## 
%       ## ######   ## ## ##  ######  ######   ## ## ## 
% ##    ## ##       ##  ####       ## ##       ##  #### 
% ##    ## ##       ##   ### ##    ## ##       ##   ### 
%  ######  ######## ##    ##  ######  ######## ##    ## 
\section{Refinement and Jensen's inequality}
\label{sup:refinement}

For all remaining proofs, we will use a partial ordering on the classifiers, called refinement \citep{degroot1983comparison}. In general, we will show here that more refined classifiers have higher scores on so called scoring functions. The remaining sections of the Supplementary Materials then only aim to show that certain functions (the ensemble accuracy, the individual information etc.) are a convex scoring function.

Intuitively, we say a classifier with confidence distribution $f$ is more refined than a classifier with confidence distribution $f'$ if $f'$, instead of producing different confidences $c_1$ and $c_2$, produces an intermediate confidence $c^\text{center} = tc_1 + (1-t)c_2$.
\begin{definition}
    \textbf{(Refinement)}
    \label{def:refinement}
    A classifier with confidence distribution $f$ is more refined than a classifier $f'$, $f\succ f'$, if there exist $c_1$, $c_2 \in[0.5, 1]$, and $\epsilon_1, \epsilon_2\in\mathbb{R}$ such that $0 \leq \epsilon_1 \leq f(c_1)$, $0 \leq \epsilon_2 \leq f(c_2)$ and
    \begin{align*}
        f'(c)  =
        \begin{cases}
          f(c) & c \neq c_1, c \neq c_2, c \neq c^\text{center}\\
          f(c)-\epsilon_1 & c = c_1 \\
          f(c)-\epsilon_2 & c = c_2 \\
          f(c) + \epsilon_1 + \epsilon_2 & c = c^\text{center} \text{.}
        \end{cases} 
    \end{align*}
    where $c^\text{center}$ is the weighted mean, $c^\text{center} = \frac{\epsilon_1 c_1 + \epsilon_2 c_2}{\epsilon_1 +\epsilon_2}$.
    Furthermore, if $f\succ f'$ and $f' \succ f''$ then $f \succ f''$ (transitivity).
\end{definition}
In the main paper, we have considered four particular classifiers: Specialist, more specialized classifier, less specialized classifier and generalist. For a given accuracy, $\accuracy{}$, and information, $\information{}$, these classifiers are in a refinement ordering, $f^\text{specialist}_{\accuracy{}} \succ f^\uparrow_{\accuracy{},\information{}} \succ f^\downarrow_{\accuracy{},\information{}} \succ f^\text{generalist}_{\accuracy{},\information{}}$. For example, it is straight forward to see that a specialist is more refined than a generalist by choosing $c_1 = 0.5$, $c_2 = 1$, $\epsilon_1 = w_{0.5}$ and $\epsilon_2 = w_{1}$ in Definition~\ref{def:refinement} to obtain $f^\text{generalist}_{\accuracy{}}$.

We evaluate confidence distributions, for example, by computing the accuracy or information. These evaluations are based on scoring functions, $\phi:[0.5, 1]\rightarrow \mathbb{R}$, that translate local confidences into values $\phi(c)$.
\begin{definition}
    \textbf{(Score)}
    Given a scoring function $\phi(c):[0.5, 1]\rightarrow \mathbb{R}$, the score of a confidence distribution $f$ is
    \begin{align*}
        \Phi(f) = \sum_{c \in \Omega_c} f(c)\phi(c) ~dc \text{.}
    \end{align*}
\end{definition}
For example, when we choose the scoring function $\phi(c) = 1-H_2(c)$ to evaluate a classifier's confidence distribution $f$, the score is the information, $\Phi(f) = \information{f} = \int_{0.5}^1 f(c)(1-H_2(c)) ~dc$. When we chose the identity scoring function $\phi(c) = c$, the score is the accuracy $\Phi(f) = \accuracy{f} = \int_{0.5}^1 f(c)c ~dc$.

For convex scoring functions, we can apply apply Jensen's inequality,
$$\phi(tc_1 + (1-t)c_2) \leq t\phi(c_1) + (1-t)\phi(c_2) \text{,}$$
to show that less refined confidence distributions (generalists) produce lower scores while more refined confidence distributions (specialists) produce higher scores.

\begin{lemma}
    \textbf{(Jensen's inequality for confidence distributions)}
    \label{lemma:jensens}
    Let $\phi$ be a convex scoring function with score $\Phi_f = \sum_{c\in\Omega_f} f(c)\phi(c)$. If $f$ is more refined than $f'$ then $\accuracy{f} = \accuracy{f'}$ and $\Phi_f \geq \Phi_f'$. % 
\end{lemma}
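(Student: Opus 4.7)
The plan is to reduce the lemma to the single-step case of Definition~\ref{def:refinement}, apply Jensen's inequality at that atomic level, and then lift to general refinement chains via the transitivity clause. In a single refinement step, $f'$ differs from $f$ only by removing mass $\epsilon_1$ at $c_1$ and $\epsilon_2$ at $c_2$ and placing combined mass $\epsilon_1+\epsilon_2$ at the weighted midpoint $c^{\text{center}} = (\epsilon_1 c_1 + \epsilon_2 c_2)/(\epsilon_1+\epsilon_2)$. Since both the accuracy $\accuracy{f} = \sum_c f(c)\, c$ and the score $\Phi_f = \sum_c f(c)\phi(c)$ are linear in $f$, it suffices to compute the local change induced at these three points.

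For the accuracy claim, I would write
\begin{align*}
\accuracy{f'} - \accuracy{f} = -\epsilon_1 c_1 - \epsilon_2 c_2 + (\epsilon_1+\epsilon_2)\, c^{\text{center}},
\end{align*}
which vanishes by the very definition of $c^{\text{center}}$, so accuracy is preserved under a single refinement step. For the score inequality, set $t = \epsilon_1/(\epsilon_1+\epsilon_2)$ so that $c^{\text{center}} = t c_1 + (1-t)c_2$; convexity of $\phi$ then gives
\begin{align*}
\phi(c^{\text{center}}) \leq t\,\phi(c_1) + (1-t)\,\phi(c_2).
\end{align*}
Multiplying by $\epsilon_1+\epsilon_2$ and rearranging yields
\begin{align*}
\Phi_{f'} - \Phi_f = -\epsilon_1 \phi(c_1) - \epsilon_2 \phi(c_2) + (\epsilon_1+\epsilon_2)\,\phi(c^{\text{center}}) \leq 0,
\end{align*}
so $\Phi_f \geq \Phi_{f'}$ for a single refinement step.

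The final step invokes the transitivity clause of Definition~\ref{def:refinement}: any general refinement $f \succ f'$ decomposes into a finite chain of single-step refinements $f = f_0 \succ f_1 \succ \cdots \succ f_n = f'$, and preservation of accuracy together with monotonicity of $\Phi$ follow by induction along the chain. I expect the main subtlety to lie not in the computation but in being careful about how the refinement relation is parsed: the inequality only holds because Jensen is applied at the atomic two-point-to-one-point collapse, so the write-up should explicitly flag that arbitrary refinements are handled by iterating this atomic step rather than by a single direct application of Jensen to a general mixing distribution.
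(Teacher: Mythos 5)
Your proposal is correct and follows essentially the same route as the paper's proof: reduce to a single refinement step, observe that the mass reallocation at $c_1$, $c_2$, and $c^{\text{center}}$ leaves the accuracy unchanged by the definition of $c^{\text{center}}$, and apply Jensen's inequality with $t = \epsilon_1/(\epsilon_1+\epsilon_2)$ to get the score inequality. Your direct difference computation and the explicit remark that general refinements are handled by induction along a chain of atomic steps (via the transitivity clause) are just a slightly cleaner write-up of the same argument, which the paper leaves implicit.
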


\begin{proof}
    First, $f'$ has the same accuracy as the original $f$:
    \begin{align*}
        \accuracy{f'} &= \accuracy{f} \\
        \sum_{c\in\Omega_f'} f'(c)c &= \sum_{c\in\Omega_f} f(c)c \\
        f'(c^\text{center})c^\text{center}
        &= 
        \epsilon_1c_1 + \epsilon_2c_2 + f(c^\text{center})c^\text{center} \\
        f'(c^\text{center})c^\text{center}
        &= 
        \epsilon_1c_1 + \epsilon_2c_2 + f(c^\text{center})c^\text{center}\\
        f'(c^\text{center})c^\text{center}
        &= 
        (\epsilon_1 + \epsilon_2)\frac{\epsilon_1c_1 + \epsilon_2c_2}{\epsilon_1 + \epsilon_2} + f(c^\text{center})c^\text{center} \\
        f'(c^\text{center})c^\text{center}
        &= 
        (\epsilon_1 + \epsilon_2)c^\text{center} + f(c^\text{center})c^\text{center} \\
        f'(c^\text{center})c^\text{center}
        &= 
        (\epsilon_1 + \epsilon_2 + f(c^\text{center}))c^\text{center} \\
        f'(c^\text{center})c^\text{center}
        &= 
        f'(c^\text{center})c^\text{center} \text{.}
    \end{align*}
    Second, $f'$ has a smaller (or equal) score $\Phi$ as $f$.
    \begin{align*}
        \Phi_f' &\leq \Phi_f\\
        \sum_{c\in\Omega_f'} f'(c)\phi(c)
        &\leq 
        \sum_{c\in\Omega_f} f(c)\phi(c)   
        \\
        f'(c^\text{center})\phi(c^\text{center})
        &\leq 
        \epsilon_1\phi(c_1) + \epsilon_2\phi(c_2) + f(c^\text{center})\phi(c^\text{center})   
        \\
        (f(c^\text{center}) + \epsilon_1 + \epsilon_2)\phi(c^\text{center})
        &\leq 
        (\epsilon_1 + \epsilon_2)\frac{\epsilon_1\phi(c_1) + \epsilon_2\phi(c_2)}{\epsilon_1 + \epsilon_2} + f(c^\text{center})\phi(c^\text{center})  
        \\
        \phi(c^\text{center})
        &\leq 
        \frac{\epsilon_1\phi(c_1) + \epsilon_2\phi(c_2)}{\epsilon_1 + \epsilon_2}  
        \\
        \phi\left(\frac{\epsilon_1c_1 + \epsilon_2c_2}{\epsilon_1 + \epsilon_2}\right)
        &\leq
        \frac{\epsilon_1\phi(c_1) + \epsilon_1f(c_2)\phi(c_2)}{\epsilon_1 + \epsilon_2} 
        \\
        \phi\left(tc_1 + (1-t)c_2\right)
        &\leq
        t\phi(c_1) + (1-t)\phi(c_2)
    \end{align*}
    The last inequality holds due to Jensen's inequality for convex $\phi$.
\end{proof}

The immediate consequence is that generalists produce the lowest score and specialists produce the highest score for any convex scoring function.
\begin{corollary}
    \textbf{(Generalist and specialist produce minimal and maximal value of convex scoring functions)}
    \label{corollary:genAndSpec}
    Let $f(c)$ be a confidence distribution with fixed accuracy $\accuracy{f}$ and $\phi(c)$ be a convex scoring function with score $\Phi_f$. The score is minimized by the generalist and maximized by the specialist.
    \begin{align*}
        \min_f \Phi_f = \Phi_f^\text{generalist} \text{~~~~ and ~~~~} \max_f \Phi_f = \Phi_f^\text{specialist}
    \end{align*}
\end{corollary}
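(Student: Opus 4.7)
The plan is to deduce Corollary S3 from Lemma S2 plus transitivity of the refinement order by exhibiting, for an arbitrary confidence distribution $f$ with accuracy $\accuracy{f}$, the chain $f^{\text{specialist}}_{\accuracy{f}} \succ f \succ f^{\text{generalist}}_{\accuracy{f}}$. Once both endpoints of the chain are in place, Lemma S2 immediately yields $\Phi_{f^{\text{specialist}}_{\accuracy{f}}} \geq \Phi_f \geq \Phi_{f^{\text{generalist}}_{\accuracy{f}}}$, and since the specialist and generalist are themselves admissible choices of $f$ with accuracy $\accuracy{f}$, these inequalities are attained, which is exactly the $\min/\max$ statement.

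For the lower link $f \succ f^{\text{generalist}}_{\accuracy{f}}$ I would iterate the one-step merge of Definition S1: pick any two distinct support points $c_1,c_2 \in \Omega_f$, set $\epsilon_i = f(c_i)$, and absorb both atoms into a single atom at their weighted mean $(\epsilon_1 c_1 + \epsilon_2 c_2)/(\epsilon_1+\epsilon_2)$. Each such step strictly shrinks $|\Omega|$ by one and, by the first half of Lemma S2, preserves the overall accuracy. Iterating, the support collapses to a single atom carrying total mass one; accuracy conservation forces that atom to sit at $\accuracy{f}$, which is precisely $f^{\text{generalist}}_{\accuracy{f}}$. Transitivity of $\succ$ closes this half.

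For the upper link $f^{\text{specialist}}_{\accuracy{f}} \succ f$ I would run the merges in the opposite direction inside the specialist. For each interior point $c \in \Omega_f \cap (0.5,1)$, apply one merge of Definition S1 with $c_1=0.5$, $c_2=1$, $\epsilon_1 = 2f(c)(1-c)$ and $\epsilon_2 = f(c)(2c-1)$: the weighted mean $(0.5\epsilon_1 + \epsilon_2)/(\epsilon_1+\epsilon_2)$ is $c$, and a mass of $\epsilon_1+\epsilon_2=f(c)$ is deposited there. The main obstacle is a bookkeeping check that the running residues at the atoms $0.5$ and $1$ stay nonnegative and finish exactly at $f(0.5)$ and $f(1)$. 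Using $\sum_c f(c)=1$ and $\sum_c f(c)c=\accuracy{f}$, the total mass drained from the $0.5$ atom is $2\sum_{c\in(0.5,1)} f(c)(1-c) = 2(1-\accuracy{f}) - f(0.5) = w_{0.5} - f(0.5)$, and symmetrically $w_1 - f(1)$ is drained from the $1$ atom, so the chain terminates at $f$ and never overdraws. Atoms of $f$ already at $0.5$ or $1$ are simply never touched. The generalist half is routine; the specialist half is where these two identities do all the work.
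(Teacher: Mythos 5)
Your proposal is correct and follows essentially the same route as the paper: it sandwiches an arbitrary $f$ in the refinement order between $f^\text{generalist}_{\accuracy{f}}$ and $f^\text{specialist}_{\accuracy{f}}$ via iterated merges and then applies Lemma~\ref{lemma:jensens}, which is exactly the paper's induction argument. Your version merely makes explicit the merge weights ($\epsilon_1 = 2f(c)(1-c)$, $\epsilon_2 = f(c)(2c-1)$) and the mass bookkeeping that the paper leaves implicit.
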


\begin{proof}
    For any confidence distribution $f$ with accuracy $\accuracy{} = \accuracy{f}$ and probability mass at different confidences, $c_1 \neq c_2$ with $f(c_1)>0$ and $f(c_2)>0$, $\exists: f': f'\prec f$ so that $\Phi(f') \leq \Phi(f)$. By induction, $\min_f \Phi_f$ is obtained by the least refined confidence distribution, $f^\text{generalist}_{\accuracy{}}$.
    
    For any confidence distribution $f'$ with accuracy $\accuracy{} = \accuracy{f'}$ and probability mass at confidence $0.5<c^\text{center}<1$ with $f(c^\text{center})>0$, $\exists f: f'\prec f$ such that $\Phi(f')~\leq~\Phi(f)$. By induction, $\max_f \Phi_f$ is obtained by the most refined confidence distribution, $f^\text{specialist}_{\accuracy{}}$.
\end{proof}

%    ###     ######   ######             ##          ###     ######   ######  
%   ## ##   ##    ## ##    ##             ##        ## ##   ##    ## ##    ## 
%  ##   ##  ##       ##                    ##      ##   ##  ##       ##       
% ##     ## ##       ##          #######    ##    ##     ## ##       ##       
% ######### ##       ##                    ##     ######### ##       ##       
% ##     ## ##    ## ##    ##             ##      ##     ## ##    ## ##    ## 
% ##     ##  ######   ######             ##       ##     ##  ######   ######  

\section{Ensemble accuracy bounds for given individual accuracies}

We now prove the bounds on the ensemble accuracy. Confidence distributions $f$ in the following proofs will be discrete probability distributions with support $\Omega_f = \{c|f(c) > 0\}$ to simplify notation. The continuous case follows by generalizing Jensen's inequality to the continuous functions. The tricky part of the proofs is not handling the continuous case; the tricky part is to show convexity of several functions so that we can apply Corollary~\ref{corollary:genAndSpec}.

\label{sup:IndAccEnsembleAcc}

% Restore old reference number
\setcounterref{theorem}{theorem:IndAccGroupAcc}
\addtocounter{theorem}{-1}
\renewcommand{\thetheorem}{\arabic{theorem}}

\begin{theorem}
    \textbf{(Specialists and generalists bound the ensemble accuracy)}
    Consider $k$ classifiers with individual accuracies $\accuracy{i}$ and confidence distributions $f_i$ ($i \in \{1..k\}$). For each $i$, let 
    $f^\text{generalist}_{\accuracy{i}}$ and $f^\text{specialist}_{\accuracy{i}}$ be a generalist resp. specialist classifier that has the same accuracy as classifier $i$. 
    Now consider the ensemble classifier based on the original classifiers with ensemble confidence distribution $f_e~=~\bigotimes_{i = 1}^k\,f_i$ according to $l$CWMV as well as the ensemble of generalists and ensemble of specialists with ensemble confidence distributions $f^\text{generalist}_e~=~\bigotimes_{i = 1}^k\,f^\text{generalist}_{\accuracy{i}}$ and $f^\text{specialist}_e~=~\bigotimes_{i = 1}^k\,f^\text{specialist}_{\accuracy{i}}$. Then the accuracy of the original ensemble is lower and upper bounded by the accuracies of the generalist and specialist ensembles: 
    $$\accuracy{f^\text{generalist}_e} \leq \accuracy{f_e} \leq \accuracy{f^\text{specialist}_e}~.$$
\end{theorem}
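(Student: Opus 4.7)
The plan is to reduce the theorem to a pointwise replacement argument based on the refinement framework of Section~\ref{sup:refinement}. Specifically, I aim to exhibit, for each classifier $i$, a convex scoring function $\phi_i$ such that $\accuracy{f_e} = \sum_{c} f_i(c)\,\phi_i(c)$ whenever the remaining $f_j$ ($j\neq i$) are held fixed, and then invoke Corollary~\ref{corollary:genAndSpec}: among confidence distributions of accuracy $\accuracy{i}$, the score is minimized by $f^\text{generalist}_{\accuracy{i}}$ and maximized by $f^\text{specialist}_{\accuracy{i}}$. Iterating the replacement over $i = 1,\ldots,k$ then converts $(f_1,\ldots,f_k)$ into the all-generalist tuple (respectively the all-specialist tuple) while monotonically decreasing (respectively increasing) $\accuracy{f_e}$, yielding the theorem.

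The representation $\accuracy{f_e} = \sum_c f_i(c)\phi_i(c)$ is immediate from two linearities: Proposition~\ref{prop:combine} shows that $\otimes$ is linear in each argument (every occurrence of $f_i$ in the formula is multiplicative), and $\accuracy{f_e}$ is linear in $f_e$, so $f_i \mapsto \accuracy{f_e}$ is a linear functional. The natural choice is $\phi_i(c) = P(\hat Y_e = Y \mid C_i = c)$, i.e., the ensemble accuracy one would obtain if classifier $i$ were replaced by a generalist of accuracy $c$. The substantive task is therefore to prove that $c \mapsto \phi_i(c)$ is convex on $[0.5,1]$.

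For the convexity, I would introduce the signed log-odds contributed by the other classifiers, $T := Y\sum_{j\neq i}\hat Y_j\,W(C_j)$, and rewrite the scoring function as
\[
    \phi_i(c) \;=\; c\,\widetilde G\!\bigl(-W(c)\bigr) \;+\; (1-c)\,\widetilde G\!\bigl(W(c)\bigr),
\]
where $\widetilde G(w) := P(T>w) + \tfrac12 P(T=w)$ accounts for uniform tiebreaking. The crucial input is a calibration-driven \emph{tilted} identity: each summand $\hat Y_j W(C_j)$ satisfies $P(T_j{=}{+}w) = e^w\,P(T_j{=}{-}w)$, a property preserved under convolution of independent variables, so $T$ itself obeys $p_T(t) = e^t\,p_T(-t)$ both in the continuous part and at atoms. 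Differentiating $\phi_i$ and using $c\,e^{-W(c)} = 1-c$, the $p_T(\pm W(c))$ contributions cancel exactly, leaving the generalized derivative
\[
    \phi_i'(c) \;=\; P\bigl(-W(c) < T < W(c)\bigr) \;+\; \frac{1}{2c}\,P\bigl(T = W(c)\bigr),
\]
which is non-decreasing in $c$ (the first term grows monotonically as $W(c)$ widens the window, and the second is a non-negative atomic contribution). Hence $\phi_i$ is convex.

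The main obstacle is the careful bookkeeping at atoms, since in this paper the confidence distributions of interest (specialists, generalists, and their mixtures) are Dirac sums, making $T$ atomic and $\widetilde G$ discontinuous. The tilted identity is precisely what forces the jumps of $c\,\widetilde G(-W)$ and $(1-c)\widetilde G(W)$ to cancel one another at each atom of $T$, so $\phi_i$ is in fact continuous and the derivative formula above is well defined everywhere. Once convexity is in hand, Corollary~\ref{corollary:genAndSpec} delivers the pointwise replacement, and the iteration over $i$ is valid because calibration and independence of the remaining classifiers are preserved after replacing any $f_j$ by its generalist or specialist, so the convexity argument applies at every step, completing the chain $\accuracy{f^\text{generalist}_e} \leq \accuracy{f_e} \leq \accuracy{f^\text{specialist}_e}$.
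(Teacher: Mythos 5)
Your proposal is correct, and it shares the paper's overall skeleton: write the ensemble accuracy as $\sum_c f_i(c)\,\phi_i(c)$ for a per-classifier convex scoring function, invoke Corollary~\ref{corollary:genAndSpec}, and replace the classifiers one at a time (independence and the accuracies being preserved under replacement, so the end points are exactly $f^\text{generalist}_e$ and $f^\text{specialist}_e$). Where you genuinely diverge is the convexity step. The paper uses associativity of $\otimes$ to reduce to two classifiers and observes that $P(\hat{y}_e\text{ correct}\mid c_1,c_2)=c_1c_2+\max\{c_1(1-c_2),(1-c_1)c_2\}=\max\{c_1,c_2\}$, so its scoring function $\phi(c_1)=\sum_{c_2}f_2(c_2)\max\{c_1,c_2\}$ is a mixture of manifestly convex functions, with induction handling general $k$. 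You instead keep the other $k-1$ classifiers in play, write $\phi_i(c)=c\,\widetilde G(-W(c))+(1-c)\,\widetilde G(W(c))$ via the signed log-odds $T$, and use the tilting identity $p_T(t)=e^t p_T(-t)$ (the calibration property in log-odds form, preserved under convolution, as you note) to cancel the density contributions, leaving the widening-window probability $P(-W(c)<T<W(c))$ plus the half-atom correction; this is nondecreasing in $c$, and your jump-cancellation argument at atoms of $T$ is exactly what is needed for continuity of $\phi_i$, so convexity follows. Your route buys a direct $k$-classifier argument (no two-classifier reduction or induction) and an interpretable reason for convexity — the ensemble's sensitivity to classifier $i$ grows with $|W(c)|$ — at the cost of heavier bookkeeping: the atomic case, the edge case $C_j=1$ (infinite weight, which does occur once specialists enter the iteration and is handled in the paper only by convention), and ties at $T=W(c)$, all of which you treat or can treat consistently, whereas the paper's $\max\{c_1,c_2\}$ identity absorbs them for free. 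For $k=2$ your $\phi_1(c)$ coincides with $\sum_{c_2}f_2(c_2)\max\{c,c_2\}$, so your argument is a valid, self-contained reproof of the paper's key convexity lemma in the general case rather than a different theorem-level strategy.
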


\begin{proof}

The ensemble accuracy for two classifiers is
\begin{align*}
    \accuracy{f_e} &= \sum_{c\in\Omega_{f_e}} f_e(c)c \\
            &= \sum_{c_1\in\Omega_{f_1}} \sum_{c_2\in\Omega_{f_2}} f_1(c_1)f_2(c_2) P(\hat{y}_e\text{ correct}|c_1, c_2)  \text{.}\\
\end{align*}
Expanding $P(\hat{y}_e\text{ correct}|c_1, c_2)$ yields
\begin{align*}
    P(\hat{y}_e\text{ correct}|c_1, c_2) 
    &= P(\hat{y}_e\text{ correct},\hat{y}_1=\hat{y}_2|c_1, c_2) + P(\hat{y}_e\text{ correct},\hat{y}_1 \neq \hat{y}_2| c_1, c_2) \\
    &=c_1c_2 + \max\{c_1(1-c_2),(1-c_1)c_2\} \\
    &=\max\{c_1,c_2\} \text{}
\end{align*}
In continuation, the ensemble accuracy is
\begin{align*}
    \accuracy{f_e} = \sum_{c_1\in\Omega_{f_1}} \sum_{c_2\in\Omega_{f_2}} f_1(c_1)f_2(c_2) \max\{c_1, c_2\}  \text{.}\\
\end{align*}
The function $\phi(c_1) = \sum_{c_2\in\Omega_{f_2}} f_2(c_2)\max\{c_1, c_2\}$ is convex in $c_1$ for any $c_2$ because $\max$ is convex. The sum (over $c_2$) of convex functions remains convex. Thus, $\phi(c_1)$ is a convex scoring function for $f_1$. Corollary~\ref{corollary:genAndSpec} yields that generalists vs. specialists minimize vs. maximize the score, proving the desired statement for two classifiers. By induction, we obtain the desired result.

\end{proof}

% #### ##    ## ########  #######  
%  ##  ###   ## ##       ##     ## 
%  ##  ####  ## ##       ##     ## 
%  ##  ## ## ## ######   ##     ## 
%  ##  ##  #### ##       ##     ## 
%  ##  ##   ### ##       ##     ## 
% #### ##    ## ##        #######  

\section{Ensemble accuracy bounds for given individual accuracies and information}
\label{sup:IndAccBoundsIndInfo}

Since not only the ensemble accuracy is convex in individual confidences but also the mutual information, generalists and specialists yield minimal and maximal values in both cases.

\begin{proposition}
    \textbf{(Specialists and generalists bounds the individual information)}
    A classifier with confidence distribution $f$ and accuracy $\accuracy{}$ has an information between
    \begin{equation*}
        \information{f^\text{generalist}_{\accuracy{}}} \leq \information{f} \leq \information{f^\text{specialist}_{\accuracy{}}}.
    \end{equation*}
\end{proposition}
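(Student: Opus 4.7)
The plan is to recognize this proposition as a direct corollary of the refinement machinery already developed in Section~\ref{sup:refinement}. Specifically, the information functional $\information{f} = \int_{0.5}^{1} f(c)(1 - H_2(c))\,dc$ has exactly the form of a score $\Phi_f = \sum_{c \in \Omega_f} f(c)\phi(c)$ with scoring function $\phi(c) = 1 - H_2(c)$. If this $\phi$ is convex on $[0.5, 1]$, then Corollary~\ref{corollary:genAndSpec} immediately delivers both bounds: among all confidence distributions with fixed accuracy $\accuracy{}$, the generalist $f^\text{generalist}_{\accuracy{}}$ minimizes $\Phi_f$ and the specialist $f^\text{specialist}_{\accuracy{}}$ maximizes it.

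First I would therefore verify that $\phi(c) = 1 - H_2(c)$ is convex on $[0.5, 1]$. This reduces to the standard fact that the binary entropy $H_2$ is concave. A one-line verification via the second derivative suffices:
\begin{equation*}
H_2''(c) = -\frac{1}{\ln 2}\left(\frac{1}{c} + \frac{1}{1-c}\right) < 0 \quad \text{for } c \in (0,1),
\end{equation*}
so $-H_2$ is convex on $(0,1)$, and hence $\phi(c) = 1 - H_2(c)$ is convex on $[0.5, 1]$.

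With convexity in hand, I would invoke Corollary~\ref{corollary:genAndSpec} with this particular $\phi$. Since $\Phi_f = \information{f}$ in this case, the corollary yields
\begin{equation*}
\information{f^\text{generalist}_{\accuracy{}}} = \min_{f:\,\accuracy{f}=\accuracy{}} \information{f} \leq \information{f} \leq \max_{f:\,\accuracy{f}=\accuracy{}} \information{f} = \information{f^\text{specialist}_{\accuracy{}}},
\end{equation*}
which is exactly the claim.

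The proof is essentially immediate once the refinement framework is in place, so there is no real obstacle beyond the convexity check. If anything, the only minor subtlety is the continuous-vs-discrete formalism: the surrounding supplementary material adopts discrete confidence distributions for notational simplicity, while the statement is phrased in terms of an integral. I would either restrict attention to discrete $f$ (consistent with Definitions~\ref{def:specialist} and \ref{def:generalist}, whose supports are finite) or note that the continuous generalization of Jensen's inequality and Corollary~\ref{corollary:genAndSpec} applies verbatim to any $f$ on $[0.5, 1]$ with fixed mean $\accuracy{f}$, since a convex functional on a probability simplex constrained by a linear equality is extremized at the extreme points of the feasible set, which for the accuracy constraint are precisely the specialist and generalist distributions.
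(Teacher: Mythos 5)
Your proposal is correct and matches the paper's own proof: the paper likewise treats $\phi(c) = 1 - H_2(c)$ as a scoring function, checks convexity via the second derivative $\frac{1}{\log_e(2)\,c(1-c)} > 0$, and concludes by Corollary~\ref{corollary:genAndSpec}. The only cosmetic difference is that you phrase the convexity check as concavity of $H_2$, which is the same computation.
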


\begin{proof}
The information is
\begin{align*}
    \information{f} &= I(Y; (\hat{Y}, C)) = \sum_{c\in\Omega_{f}} f(c) \left( 1-H_2(c) \right) \text{.} \\
\end{align*}
We derive the second derivative.
\begin{align*}
    1-H_2(c)~~&= 1 - \left( c\log_2\left(\frac{1}{c}\right) + (1-c)\log_2\left(\frac{1}{1-c}\right) \right) \\
    \frac{\mathrm{d}}{\mathrm{d}c} \left( 1-H_2(c) \right) &= \log_2\left( \frac{c}{1-c} \right) \\
    \frac{\mathrm{d}^2}{\mathrm{d}c^2} \left( 1-H_2(c) \right) &= \frac{1}{\log_e(2)c(1-c)}
\end{align*}
The second derivative is strictly larger than 0 in $c\in[0.5, 1)$ so that $\phi(c) = 1-H_2(c)$ is convex. Corollary~\ref{corollary:genAndSpec} yields the desired statement.
\end{proof}

\section{Ensemble accuracy bounds based on both, individual accuracies and information}
\label{sup:IndAccIndInfoEnsembleAcc}

Individual accuracy and information of a classifier still do not determine its confidence distribution. For the proof of Theorem~\ref{theorem:IndAccIndInfoGroupAcc}, we follow this strategy: For each classifier $f$ with given accuracy and information, we construct a more refined classifier $f^\nearrow$. Since there are multiple $f$ that satisfy the two constraints, there are multiple $f^\nearrow$. We then construct one unique $f^\uparrow$ (more specialized classifier) that is more refined than any $f^\nearrow$. By transitivity, $f_i \prec f^\nearrow_{\accuracy{f_i},\information{f_i}} \prec f^\uparrow_{\accuracy{f_i},\information{f_i}}$. Therefore, $f^\uparrow$ improves the ensemble accuracy.

Analogously, $f$ is less refined than $f^\searrow$ and the unique $f^\downarrow$ (less specialized classifier) is even less refined than $f^\searrow$. Thus $f^\downarrow_{\accuracy{f_i},\information{f_i}} \prec f^\searrow_{\accuracy{f_i},\information{f_i}} \prec f_i$ and $f^\downarrow_{\accuracy{f_i},\information{f_i}}$ makes the ensemble accuracy worse.

We will consider the left conditional confidence distribution and the right conditional confidence distribution. By that we mean the confidence distribution $f(C)$ conditioned on $C<\accuracy{f}$ resp. $C\geq\accuracy{f}$. The probabilities to obtain a below or above average confidence are $p^\text{left} = P(C<\accuracy{f})$ resp. $p^\text{right} = P(C\geq\accuracy{f})$.
The left and right conditional accuracies are
\begin{align*}
    \accuracy{f}^\text{left} = \sum_{c\in\Omega_f, c<\accuracy{f}} \frac{f(c)}{p^\text{left}} \cdot c
    \quad\quad\quad\quad\text{and}\quad\quad\quad\quad
    \accuracy{f}^\text{right} = \sum_{c\in\Omega_f, c\geq\accuracy{f}} \frac{f(c)}{p^\text{left}} \cdot c \text{.}
\end{align*}
These are the accuracies of the classifier $f$ when only counting below average (left) or above average (right) confidences. Analogously, the left and right conditional information are
\begin{align*}
    \information{f}^\text{left} = \sum_{c\in\Omega_f, c<\accuracy{f}} \frac{f(c)}{p^\text{left}}\cdot (1-H_2(c))
    \quad\quad\quad\quad\text{and}\quad\quad\quad\quad
    \information{f}^\text{right} = \sum_{c\in\Omega_f, c\geq\accuracy{f}} \frac{f(c)}{p^\text{right}}\cdot (1-H_2(c)) \text{.}
\end{align*}

\begin{figure}[!ht]
\vspace{1in}
\centering
\includegraphics[width=.6\linewidth]{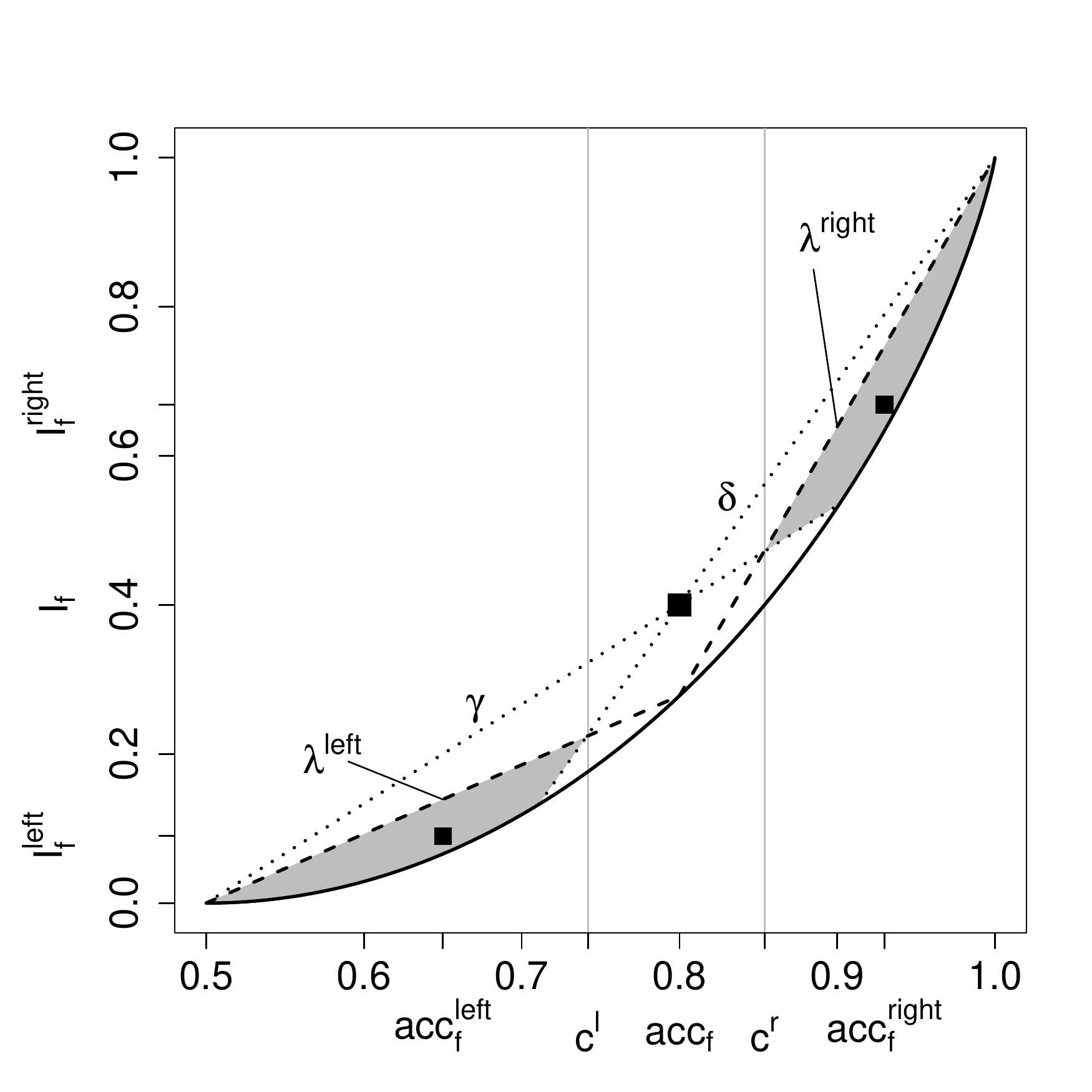}
\caption{\textbf{Proof idea of Theorem~\ref{theorem:IndAccIndInfoGroupAcc}} An individual classifier's confidence distribution $f$ is characterized by a point in the accuracy-information plot. It has accuracy $\accuracy{f}$ (x-axis coordinate) and information $\information{f}$ (y-axis coordinate). Consider the left and right conditional distributions that conditioned on $C<\accuracy{f}$ and $C\geq \accuracy{f}$, respectively. These confidence distributions have the (accuracy, information)-pairs: ($\accuracy{}^\text{left}$, $\information{}^\text{left}$), and ($\accuracy{}^\text{right}$, $\information{}^\text{right}$), which must lie in the shaded areas.}
\label{fig:leftRightVisualization}
\end{figure}

Now, we can define the more and the less refined classifier we need for the proof. The idea for the more refined classifier is to split confidences below average ($C<\accuracy{f}$) to $C=0.5$ and $C=\accuracy{f}$ and to split confidences above average ($C\geq\accuracy{f}$) to $C=\accuracy{f}$ and $C=1$. We end up with a classifier that outputs only three possible confidences: $C=0.5$, $C=\accuracy{f}$ and $C=1$. The probability masses for these cases depend on the original classifier's confidence distribution $f$.
\renewcommand{\thetheorem}{S\arabic{theorem}}
\begin{definition}
    \label{def:moreRefined}
    \textbf{(More refined classifier)}
    A binary black-box classifier to the accuracy $\accuracy{}$ and information $\information{}$ is called more refined if its confidence distribution is given by
    \begin{align*}
        f^\nearrow_{\accuracy{f},\information{f}} = w^\nearrow_{0.5}\delta_{\accuracy{}} + w^\nearrow_{\accuracy{f}}\delta_{\accuracy{f}} + w^\nearrow_{1}\delta_{1} \text{}
    \end{align*}
    with constants 
    $w^\nearrow_{0.5} = \frac{\accuracy{f} - \accuracy{f}^\text{left}}{\accuracy{f} - 0.5}$, 
    $w^\nearrow_{\accuracy{f}} = \frac{\accuracy{f}^\text{left} - 0.5}{\accuracy{f} - 0.5} + 
    \frac{1 - \accuracy{f}^\text{right}}{1 - \accuracy{f}}$, and
    $w^\nearrow_{1} = \frac{\accuracy{f}^\text{right} - \accuracy{f}}{1-\accuracy{f}}$.
\end{definition}

In analogy, the less refined classifier does not split the left and right conditional confidence distributions but fully merges them into $C = \accuracy{f}^\text{left}$ and $C = \accuracy{f}^\text{right}$.
\begin{definition}
    \label{def:lessRefined}
    \textbf{(Less refined classifier)}
    A binary black-box classifier to the accuracy $\accuracy{}$ and information $\information{}$ is called less refined if its confidence distribution is given by
    \begin{align*}
        f^\searrow_{\accuracy{},\information{}} =
        w^\searrow_{\accuracy{f}^\text{left}}\delta_{\accuracy{f}^\text{left}} +
        w^\searrow_{\accuracy{f}^\text{right}}\delta_{\accuracy{f}^\text{right}} \text{,}
    \end{align*}
    with constants $w^\searrow_{\accuracy{f}^\text{left}} = \frac{\accuracy{f}^\text{right} - \accuracy{f}}{\accuracy{f}^\text{right} - \accuracy{f}^\text{left}}$, and
    $w^\searrow_{\accuracy{f}^\text{left}} = \frac{\accuracy{f} - \accuracy{f}^\text{left}}{\accuracy{f}^\text{right} - \accuracy{f}^\text{left}}$.
\end{definition}
These two classifiers, even though similar, are different to the more specialized resp. less specialized classifier. We exploit that they are in a refinement relation to them, which allows us to prove Theorem~\ref{theorem:IndAccIndInfoGroupAcc}.

% Restore old reference number
\setcounterref{theorem}{theorem:IndAccIndInfoGroupAcc}
\addtocounter{theorem}{-1}
\renewcommand{\thetheorem}{\arabic{theorem}}

\begin{theorem}
    \textbf{(More and less specialized classifiers bound the ensemble accuracy)}
    Consider $k$ classifiers with individual accuracies $\accuracy{i}$, individual information $\information{i}$ and confidence distributions $f_i$ ($i \in \{1..k\}$). For each $i$, let 
    $f^\downarrow_{\accuracy{i},\information{i}}$ and $f^\uparrow_{\accuracy{i},\information{i}}$ be the less resp. more specialized classifier constructed to the accuracy and information of classifier $i$. 
    Now consider the ensemble classifier based on the original classifiers with ensemble confidence distribution $f_e~=~\bigotimes_{i = 1}^k\,f_i$ according to $l$CWMV as well as the ensemble of less and more specialized classifiers with ensemble confidence distributions $f^\downarrow_e~=~\bigotimes_{i = 1}^k\,f^\downarrow_{\accuracy{i},\information{i}}$ and $f^\uparrow_e~=~\bigotimes_{i = 1}^k\,f^\uparrow_{\accuracy{i},\information{i}}$. Then the accuracy of the original ensemble is lower and upper bounded by the accuracies of the less and more specialized ensembles: 
    $$\accuracy{f^\text{generalist}_e} \leq \accuracy{f^\downarrow_e} \leq \accuracy{f_e} \leq \accuracy{f^\uparrow_e} \leq \accuracy{f^\text{specialist}_e}.$$    
\end{theorem}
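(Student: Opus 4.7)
My approach combines the refinement partial order with the convexity argument used to prove Theorem~\ref{theorem:IndAccGroupAcc}, together with a worst-case replacement that turns $f$-dependent refinements of each individual classifier into the universal distributions $f^\uparrow_{\accuracy{i},\information{i}}$ and $f^\downarrow_{\accuracy{i},\information{i}}$. The starting observation is that, with $k-1$ of the individual confidence distributions held fixed, the ensemble accuracy $\accuracy{f_e}$ is a linear functional of the remaining one against a convex scoring function $\phi$ (the multi-classifier generalization of $\phi(c_1)=\sum_{c_2}f_2(c_2)\max\{c_1,c_2\}$ from the proof of Theorem~\ref{theorem:IndAccGroupAcc}, verified by induction on $k$). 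By Jensen's inequality for confidence distributions, refining a single coordinate $f_i$ while keeping the others fixed can only increase $\accuracy{f_e}$. Hence it is enough to produce, for each $i$, a refinement chain $f^\downarrow_{\accuracy{i},\information{i}}\prec f_i\prec f^\uparrow_{\accuracy{i},\information{i}}$ and iterate the refinement step coordinate by coordinate.

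Building such a chain in a single step looks hard, so I would route it through the two auxiliary distributions $f^\nearrow_i$ and $f^\searrow_i$ defined in the supplementary material. The inner links $f^\searrow_i\prec f_i\prec f^\nearrow_i$ are essentially by inspection: $f^\nearrow_i$ is obtained from $f_i$ by splitting every atom below $\accuracy{i}$ into two atoms at $0.5$ and $\accuracy{i}$ with weights preserving the local mean, and every atom above $\accuracy{i}$ into two atoms at $\accuracy{i}$ and $1$; each such split is the reverse of a merge in the sense of the refinement definition. Conversely, $f^\searrow_i$ collapses the whole below-average and above-average parts of $f_i$ onto the single points $\accuracy{i}^\text{left}$ and $\accuracy{i}^\text{right}$, which is just a sequence of merges.

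The crux is to bridge from these $f$-dependent intermediates to the universal distributions, and this is what I expect to be the main obstacle. For the upper bound, observe that $f^\nearrow_i$ and $f^\uparrow_{\accuracy{i},\information{i}}$ share the common support $\{0.5,\accuracy{i},1\}$ and the common mean $\accuracy{i}$, so they form a one-parameter family indexed by the weight at the middle atom. On this family the information is a strictly monotone function of that single weight (since $1-H_2$ is convex), so the refinement relation on this family is equivalent to a scalar inequality between weights. It therefore suffices to choose the constant $g$ in Definition~\ref{def:moreSpecialized} so that $\information{f^\uparrow_{\accuracy{i},\information{i}}}=\information{i}+g$ equals the supremum of $\information{f^\nearrow_i}$ taken over all $f_i$ compatible with $(\accuracy{i},\information{i})$. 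Computing this supremum is the main technical hurdle: it is an extremal problem for the information of a three-atom distribution as a function of the left and right conditional accuracies and the left/right mass split, subject to the aggregate accuracy-and-information constraints imposed by $(\accuracy{i},\information{i})$. The symmetric extremal problem for the lower bound yields the explicit $c^\text{l},c^\text{r}$ in Definition~\ref{def:lessSpecialized}, and reduces $f^\downarrow_{\accuracy{i},\information{i}}\prec f^\searrow_i$ to the same kind of scalar weight comparison.

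Assembling the pieces by transitivity and iterating over $i$ gives the two middle inequalities $\accuracy{f^\downarrow_e}\leq\accuracy{f_e}\leq\accuracy{f^\uparrow_e}$. The outer inequalities $\accuracy{f^\text{generalist}_e}\leq\accuracy{f^\downarrow_e}$ and $\accuracy{f^\uparrow_e}\leq\accuracy{f^\text{specialist}_e}$ are then immediate from Theorem~\ref{theorem:IndAccGroupAcc} applied to the single-classifier distributions $f^\downarrow_{\accuracy{i},\information{i}}$ and $f^\uparrow_{\accuracy{i},\information{i}}$, both of which have individual accuracy $\accuracy{i}$.
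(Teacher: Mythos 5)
Your plan follows essentially the same route as the paper's proof: a coordinate-wise refinement/Jensen argument with the convex scoring function from Theorem~\ref{theorem:IndAccGroupAcc}, the chain $f^\downarrow_{\accuracy{i},\information{i}} \prec f^\searrow_i \prec f_i \prec f^\nearrow_i \prec f^\uparrow_{\accuracy{i},\information{i}}$, the constant $g$ taken as the maximal information gain of $f^\nearrow$ over admissible $f$, and the outer inequalities from Theorem~\ref{theorem:IndAccGroupAcc}. The only point where you are looser than the paper is the bridge $f^\downarrow_{\accuracy{i},\information{i}} \prec f^\searrow_i$: since the two supports differ, it is not a single scalar weight comparison; the paper instead argues geometrically in the accuracy-information plane that $\accuracy{f}^\text{left} \leq c^\text{l} \leq \accuracy{f} \leq c^\text{r} \leq \accuracy{f}^\text{right}$ and then exhibits two explicit merges, which is the concrete form of the ``symmetric extremal problem'' you defer.
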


\begin{proof}

First, we show the upper bound in (1) and then the lower bound in (2). Our strategy will be to prove that there is a refinement ordering, $\forall i\in\{1..k\}: f^\downarrow_{\accuracy{f_i},\information{f_i}} \prec f^\searrow_{\accuracy{f_i},\information{f_i}} \prec f_i \prec f^\nearrow_{\accuracy{f_i},\information{f_i}} \prec f^\uparrow_{\accuracy{f_i},\information{f_i}}$. Because more refined classifiers also produce higher ensemble accuracies, Lemma~\ref{lemma:jensens} produces the desired result.

(1) By construction, $f_i \prec f^\nearrow_{\accuracy{f_i},\information{f_i}} $. Let the information gain be $g_{f_i} = \information{f^\nearrow_{\accuracy{f_i},\information{f_i}}} - \information{f_i}$. Let $f^*$ be the confidence distribution that produces the maximal gain, $f^* = \argmax_f g_f$ s.t. $\accuracy{f_i} = \accuracy{i}$ and $\information{f_i} = \information{i}$. Let the maximal gain be $g = g_{f^*}$ (this is the constant in Definition~\ref{def:moreSpecialized}). It remains to show that $\forall f: f^\nearrow_{\accuracy{f},\information{f}} \prec f^\uparrow_{\accuracy{f},\information{f}}$.

For all $f$, the more refined classifier distribution $f^\nearrow_{\accuracy{f},\information{f}}$ is defined by constants $w^\nearrow_{0.5}~=~ \frac{2(1-\accuracy{f})(\information{f}+g_f - 1 + H_2(\accuracy{f}))}{2\accuracy{f} - 2 + H_2(\accuracy{f})}$, 
$w^\nearrow_{\accuracy{f}}~=~\frac{2\accuracy{f} - 1 - (\information{f}+g_f)}{2\accuracy{f} - 2 + H_2(\accuracy{f})}$ and $w^\nearrow_{1}~=~\frac{2(\accuracy{f}-0.5)(\information{f}+g_f - 1 + H_2(\accuracy{f}))}{2\accuracy{f} - 2 + H_2(\accuracy{f})}$. The more specialized classifier distribution is $f^\nearrow_{\accuracy{f},\information{f}}$ with constants $w_{0.5}$, $w_{\accuracy{f}}$ and $w_1$ as in Definition~\ref{def:moreSpecialized}. To prove $f^\nearrow_{\accuracy{f},\information{f}} \prec f^\uparrow_{\accuracy{f},\information{f}}$, we apply Lemma~\ref{lemma:jensens} with $c_1 = 0.5$, $c_2 = 1$, $\epsilon_1 = w_{0.5} - w^\nearrow_{0.5}$, $\epsilon_2 = w_{1} - w^\nearrow_{1}$. It remains to show that these constants transform the more specialized classifier into the more refined classifier: $\epsilon_1 + \epsilon_2 + w_{\accuracy{f}} = w^\nearrow_{\accuracy{f}}$ and that $c^\text{center} = \accuracy{f}$:

\begin{align*}
    \epsilon_1 + \epsilon_2 + w_{\accuracy{f}} &= w_{0.5} - w^\nearrow_{0.5} + w_{1} - w^\nearrow_{1} + w_{\accuracy{f}} \\
    &= 1 - w^\nearrow_{0.5} - w^\nearrow_{1} \\
    &= w^\nearrow_{\accuracy{f}}\\
    c^\text{center} &= \frac{\epsilon_1 c_1 + \epsilon_2 c_2}{\epsilon_1 + \epsilon_2} \\
    &= \frac{2(1-\accuracy{f})\cdot 0.5 + 2(\accuracy{f}-0.5) \cdot 1}{2(1-\accuracy{f}) + 2(\accuracy{f}-0.5)} \\
    &= 2(1-\accuracy{f})\cdot 0.5 + 2(\accuracy{f}-0.5) \\
    &= \accuracy{f}
\end{align*}

Taken together, $f_i \prec f^\nearrow_{\accuracy{f_i},\information{f_i}} \prec f^\uparrow_{\accuracy{f_i},\information{f_i}}$. By Lemma~\ref{lemma:jensens} follows the desired result for (1).

(2) By construction, $f^\searrow_{\accuracy{f_i},\information{f_i}} \prec f_i$. The left conditional accuracy and information pair, ($\accuracy{f}^\text{left}$, $\information{f}^\text{left}$), must lie below the line $\lambda^\text{left}$ that runs through points $(0.5, 0)$ and $(\pi, 1-H_2(\pi))$ because of Proposition~\ref{prop:IndAccIndInfo}. See Figure~\ref{fig:leftRightVisualization} for a visualization. The right conditional information pair, ($\accuracy{f}^\text{right}$, $\information{f}^\text{right}$), must lie below the line $\lambda^\text{right}$ that runs through $(\pi, 1-H_2(\pi))$ and $(1,1)$. In consequence, the left conditional pair must lie above the line $\delta$ running through $(\accuracy{f}, \information{f})$ and $(1,1)$: Assuming for the sake of contradiction that this was not the case entails that the right conditional pair would have to lie above $\lambda^\text{right}$. Analogously, the right conditional pair must lie below line $\gamma$ that runs through $(0.5, 0)$ and $(\accuracy{f}, \information{f})$. Thus, the innermost left and right conditional accuracies are the intersections of these lines, $c^\text{l}$ resp. $c^\text{r}$, see Definition~\ref{def:lessSpecialized}.

For all $f$, the less refined classifier's confidence distribution is $f^\searrow_{\accuracy{f},\information{f}}$ with left and right conditional accuracies $\accuracy{f}^\text{left}$ and $\accuracy{f}^\text{right}$. The less specialized classifier's confidence distribution is $f^\downarrow_{\accuracy{f},\information{f}}$ with left and right conditional accuracies $c^\text{l}$ and $c^\text{r}$. To prove $f^\downarrow_{\accuracy{f},\information{f}} \prec f^\searrow_{\accuracy{f},\information{f}}$, we apply Lemma~\ref{lemma:jensens} with constants $c_1 = \accuracy{f}^\text{left}$, $c_2 = \accuracy{f}^\text{right}$ twice: (a) with
$\epsilon_1' = \frac{(\accuracy{f}^\text{right}-c^\text{l})(c^\text{r}-\accuracy{f})}{(\accuracy{f}^\text{right} - \accuracy{f}^\text{left})(c^\text{right} - c^\text{left})}$ and
$\epsilon_2' = \frac{(\accuracy{f}^\text{left}-c^\text{l})(c^\text{r} - \accuracy{f})}{(\accuracy{f}^\text{right} - \accuracy{f}^\text{left})(c^\text{right} - c^\text{left})}$; and (b) with 
$\epsilon_1'' = \frac{(c^\text{r} - \accuracy{f}^\text{left})(\accuracy{f} - c^\text{l})}{(\accuracy{f}^\text{right} - \accuracy{f}^\text{left})(c^\text{right} - c^\text{left})}$ and 
$\epsilon_2'' = \frac{(c^\text{r} - \accuracy{f}^\text{left})(\accuracy{f} - c^\text{l})}{(\accuracy{f}^\text{right} - \accuracy{f}^\text{left})(c^\text{right} - c^\text{left})}$.
It remains to show that with these constants transform the less refined classifier into the less specialized classifier: 
$\epsilon_1' + \epsilon_1'' = w_{\accuracy{f}^\text{left}}$,
$\epsilon_2' + \epsilon_2'' = w_{\accuracy{f}^\text{right}}$, 
$\epsilon_1' + \epsilon_2' = w_{c^\text{l}}$ and 
$\epsilon_1'' + \epsilon_2'' = w_{c^\text{r}}$.
${c^\text{center}}' = c^\text{l}$, and
${c^\text{center}}'' = c^\text{r}$.

The two deviations on the left hand side deviations add up to the total weight of the left side of the less refined classifier.
\begin{align*}
    \epsilon_1' + \epsilon_1'' 
    &= 
    \frac{(\accuracy{f}^\text{right}-c^\text{l})(c^\text{r}-\accuracy{f})}{(\accuracy{f}^\text{right} - \accuracy{f}^\text{left})(c^\text{r} - c^\text{l})}
    {(\accuracy{f}^\text{right} - \accuracy{f}^\text{left})(c^\text{r} - c^\text{l}) } \\
    % & = \frac{
    % \accuracy{f}^\text{right} c^\text{r} 
    % + c^\text{l} \accuracy{f}
    % - \accuracy{f}^\text{right} c^\text{l}
    % - c^\text{r} \accuracy{f}
    % } 
    % {(\accuracy{f}^\text{right} - \accuracy{f}^\text{left})(c^\text{r} - c^\text{l})} \\
    & = \frac{(\accuracy{f}^\text{right} - \accuracy{f})(c^\text{r} - c^\text{l})}{(\accuracy{f}^\text{right} - \accuracy{f}^\text{left})(c^\text{r} - c^\text{l})} \\
    & = \frac{\accuracy{f}^\text{right} - \accuracy{f}}
    {\accuracy{f}^\text{right} - \accuracy{f}^\text{left}} \\
    & = w_{\accuracy{f}^\text{left}}\\[0.4cm]
\end{align*}
The deviations produce the left weight of the less specialized classifier.
\begin{align*}
    \epsilon_1' + \epsilon_2' 
    &= \frac{(\accuracy{f}^\text{right}-c^\text{l})(c^\text{r}-\accuracy{f})}{(\accuracy{f}^\text{right} - \accuracy{f}^\text{left})(c^\text{right} - c^\text{left})} + \frac{(\accuracy{f}^\text{left}-c^\text{l})(c^\text{r} - \accuracy{f})}{(\accuracy{f}^\text{right} - \accuracy{f}^\text{left})(c^\text{right} - c^\text{left})}\\
    &= \frac{(\accuracy{f}^\text{right}-\accuracy{f}^\text{left})(c^\text{r}-\accuracy{f})}{(\accuracy{f}^\text{right} - \accuracy{f}^\text{left})(c^\text{right} - c^\text{left})}\\
    &= \frac{c^\text{r}-\accuracy{f}}{c^\text{right} - c^\text{left}}\\
    &= w_{c^\text{l}} \\[0.4cm]
\end{align*}
The left hand side accuracy is kept constant.
\begin{align*}
    {c^{\text{center}}}'
    & = \frac{\epsilon_1' c_1 + \epsilon_2' c_2}{\epsilon_1' +\epsilon_2'} \\
    & = \frac{\frac{(\accuracy{f}^\text{right}-c^\text{l})(c^\text{r}-\accuracy{f})}{(\accuracy{f}^\text{right} - \accuracy{f}^\text{left})(c^\text{r} - c^\text{l})} \accuracy{f}^\text{left} 
    + \frac{(c^\text{l} - \accuracy{f}^\text{left})(c^\text{r} - \accuracy{f})}{(\accuracy{f}^\text{right} - \accuracy{f}^\text{left})(c^\text{r} - c^\text{l})} \accuracy{f}^\text{right}}
    {\frac{(\accuracy{f}^\text{right}-c^\text{l})(c^\text{r}-\accuracy{f})}{(\accuracy{f}^\text{right} - \accuracy{f}^\text{left})(c^\text{r} - c^\text{l})}
    + \frac{(c^\text{l} - \accuracy{f}^\text{left})(c^\text{r} - \accuracy{f})}{(\accuracy{f}^\text{right} - \accuracy{f}^\text{left})(c^\text{r} - c^\text{l})}} \\
    % &= 
    % \frac{\accuracy{f}^\text{right}-c^\text{l}}{\accuracy{f}^\text{right} - \accuracy{f}^\text{left}} \accuracy{f}^\text{left} + \frac{c^\text{l}-\accuracy{f}^\text{left}}{\accuracy{f}^\text{right} - \accuracy{f}^\text{left}} \accuracy{f}^\text{right} \\
    & = \frac{\accuracy{f}^\text{right} \accuracy{f}^\text{left} - c^\text{l} \accuracy{f}^\text{left} + c^\text{l}\accuracy{f}^\text{right} - \accuracy{f}^\text{left}\accuracy{f}^\text{right}   }{\accuracy{f}^\text{right} - \accuracy{f}^\text{left}} \\
    & = \frac{ c^\text{l} (\accuracy{f}^\text{right} - \accuracy{f}^\text{left}) }{\accuracy{f}^\text{right} - \accuracy{f}^\text{left}} \\
    & = c^\text{l}
\end{align*}

The rest follows analogously. Taken together, $f^\downarrow_{\accuracy{f_i},\information{f_i}} \prec f^\searrow_{\accuracy{f_i},\information{f_i}} \prec f_i$. By Lemma~\ref{lemma:jensens} follows the desired result (2).

\end{proof}

\section{Ensemble mutual information bounds}
\label{sup:boundsOnInformation}

Here, we proof the bounds on the ensemble information. To avoid clutter, we now use the natural logarithm $\log_e$ instead of $\log_2$ as in the main text and drop the subscript. Results are transferable because convexity does not change with the base of the logarithm.

\begin{proposition}
    \textbf{(Specialists and generalists bound the ensemble information)}
    Consider $k$ classifiers with individual accuracies $\accuracy{i}$ and confidence distributions $f_i$ ($i \in \{1..k\}$). For each $i$, let 
    $f^\text{generalist}_{\accuracy{i}}$,
    $f^\downarrow_{\accuracy{i},\information{i}}$,
    $f^\uparrow_{\accuracy{i},\information{i}}$
    and $f^\text{specialist}_{\accuracy{i}}$ be as defined above. 
    Now consider the ensemble classifier based on the original classifiers, with ensemble confidence distribution $f_e~=~\bigotimes_{i = 1}^k\,f_i$ as well as ensembles with confidence distributions $f^\text{generalist}_e$, $f^\downarrow_e$, $f^\uparrow_e$, and $f^\text{specialist}_e$ as in Theorem~\ref{theorem:IndAccIndInfoGroupAcc}.
    The information of the ensemble classifier is bounded by
    $$\information{f^\text{generalist}_e} \leq \information{f^\downarrow_e} \leq \information{f_e} \leq \information{f^\uparrow_e} \leq \information{f^\text{specialist}_e}.$$     
\end{proposition}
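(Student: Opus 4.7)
The strategy mirrors that of Theorem~\ref{theorem:IndAccIndInfoGroupAcc}: reuse the refinement orderings already established there, and replace the convex accuracy argument with an information-theoretic analogue. Concretely, the plan is first to establish the auxiliary lemma: \emph{if $f_i \prec f_i'$ for one index $i$ while all other individual distributions are held fixed, then $\information{f_e} \leq \information{f_e'}$, where $f_e = \bigotimes_j f_j$ and $f_e' = f_i' \otimes \bigotimes_{j \neq i} f_j$.} Once this monotonicity is in hand, the proposition follows by routine chaining.

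To prove the lemma I would invoke the data processing inequality. Refinement merges two masses $f_i'(c_a) = \epsilon_a$, $f_i'(c_b) = \epsilon_b$ into one mass $\epsilon_a + \epsilon_b$ at $c_m = (\epsilon_a c_a + \epsilon_b c_b)/(\epsilon_a + \epsilon_b)$. Define a label-independent, deterministic garbling $G$ that sends $(\hat{Y}_i', c)$ to $(\hat{Y}_i', c_m)$ whenever $c \in \{c_a, c_b\}$ and acts as the identity otherwise. Using $C_i' \perp Y$ together with the calibration identity $\epsilon_a c_a + \epsilon_b c_b = (\epsilon_a + \epsilon_b) c_m$, a direct check shows that $G(O_i')$ has the same joint distribution with $Y$ as a fresh draw from $f_i$ would. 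Combining this with the pairwise independence of the classifiers assumed in Section~\ref{sec:notationEnsemble}, one obtains the Markov chain
\[
Y \;\longrightarrow\; (O_1,\dots,O_i',\dots,O_k) \;\longrightarrow\; (O_1,\dots,O_i,\dots,O_k).
\]
Since $l$CWMV is the Bayes-optimal combination, its output $O_e = (\hat{Y}_e, C_e)$ is a sufficient statistic for $Y$ given the joint individual outputs (indeed $C_e$ equals $P(Y = \hat{Y}_e \mid O_1,\dots,O_k)$). Hence $I(Y; O_e) = I(Y; O_1,\dots,O_k)$ and analogously for the primed ensemble, so the data processing inequality immediately gives $\information{f_e} \leq \information{f_e'}$.

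Given the lemma, the proposition follows by chaining the refinement orderings already produced in the proof of Theorem~\ref{theorem:IndAccIndInfoGroupAcc}. For each $i$ one has
\[
f^{\text{generalist}}_{\accuracy{i}} \;\prec\; f^\downarrow_{\accuracy{i},\information{i}} \;\prec\; f_i \;\prec\; f^\uparrow_{\accuracy{i},\information{i}} \;\prec\; f^{\text{specialist}}_{\accuracy{i}},
\]
where the outer two refinements follow from Corollary~\ref{corollary:genAndSpec} (generalist and specialist are the least and most refined confidence distributions of a given accuracy) and the inner two from the proof of Theorem~\ref{theorem:IndAccIndInfoGroupAcc} via transitivity through the intermediate $f^\searrow$ and $f^\nearrow$ constructions. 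Applying the monotonicity lemma one classifier at a time along this chain produces the desired five-term inequality.

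\emph{Main obstacle.} The non-routine step is verifying that $G$ reproduces exactly the calibrated $f_i$-distribution of $O_i$ and that $O_e$ is a sufficient statistic under $l$CWMV. The former reduces to the accuracy-preservation calculation already present in Lemma~\ref{lemma:jensens}; the latter is standard but requires a consistent tiebreaker convention when $\sum_j W_j \hat{Y}_j = 0$. A more style-consistent alternative would be to write $\information{f_e} = \sum_{c_i} f_i(c_i)\,\Psi(c_i)$ via Proposition~\ref{prop:combine} and show directly that $\Psi$ is convex in $c_i$, so that Corollary~\ref{corollary:genAndSpec} closes the argument; however, verifying convexity through the logistic transform $W_i(c_i) = \log(c_i/(1-c_i))$ appears considerably more tedious than the DPI route.
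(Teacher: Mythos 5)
Your argument is essentially correct, but it takes a genuinely different route from the paper. The paper stays inside its refinement/Jensen framework: it writes the pairwise ensemble information as $\sum_{c_1}\sum_{c_2} f_1(c_1)f_2(c_2)\,\phi_{c_2}(c_1)$ with $\phi_{c_2}(c_1)$ the conditional information contribution split into agreement/disagreement cases, computes first and second derivatives explicitly, finds $\frac{\mathrm{d}^2}{\mathrm{d}c_1^2}\phi_{c_2}(c_1) = \frac{c_2(1-c_2)}{c_1(1-c_1)\nu^{\text{disagree}}_{c_1,c_2}\nu^{\text{agree}}_{c_1,c_2}} \geq 0$, and then closes via Corollary~\ref{corollary:genAndSpec} and Lemma~\ref{lemma:jensens} together with the refinement orderings from the proof of Theorem~\ref{theorem:IndAccIndInfoGroupAcc} --- i.e.\ exactly the ``tedious'' convexity route you set aside. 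Your data-processing argument replaces that calculus entirely: once you know (a) a refinement step in one coordinate turns the less refined output into a garbling of the more refined one that is independent of $Y$, and (b) $O_e$ is sufficient for $Y$ given $(O_1,\dots,O_k)$ because $C_e$ is the Bayes posterior, monotonicity of $\information{f_e}$ under refinement is immediate, and chaining the orderings $f^{\text{generalist}}_{\accuracy{i}} \prec f^\downarrow_{\accuracy{i},\information{i}} \prec f_i \prec f^\uparrow_{\accuracy{i},\information{i}} \prec f^{\text{specialist}}_{\accuracy{i}}$ (the inner two via $f^\searrow, f^\nearrow$ from Theorem~\ref{theorem:IndAccIndInfoGroupAcc}, the outer two as stated in the supplement's refinement section) gives all five terms. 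What your route buys is conceptual economy and the stronger observation $\information{f_e} = I(Y; O_1,\dots,O_k)$; what the paper's route buys is uniformity, since the same convex-scoring machinery handles accuracy and information alike, whereas DPI is specific to mutual information. Two small repairs are needed in your write-up: the garbling cannot in general be deterministic, because a refinement step merges only fractions $\epsilon_1 \leq f(c_1)$, $\epsilon_2 \leq f(c_2)$ of the masses (Definition~\ref{def:refinement}), so $G$ must move an output at $c_a$ to $c_m$ only with probability $\epsilon_a/f_i'(c_a)$, randomized independently of $Y$ and of the other classifiers --- calibration of the merged mass and the DPI still go through unchanged; and the Markov-chain step needs conditional independence of the full outputs $(\hat{Y}_j, C_j)$ given $Y$, which is what the $l$CWMV combination implicitly assumes, not merely the pairwise independence of the confidence distributions you cite.
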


\begin{proof}
    The ensemble information is
    \begin{align*}
        \information{f_e} 
                &= \sum_{c\in\Omega_{f_e}} f_e(c)\left(H_2(0.5) - H_2(c)\right) \\
                &= \sum_{c_1\in\Omega_{f_1}} \sum_{c_2\in\Omega_{f_2}} f_1(c_1)f_2(c_2) \Big( 
                P(\hat{y_1} = \hat{y_2}|c_1, c_2) (H_2(0.5) - H_2(P(\hat{y}_e\text{ correct}|c_1, c_2, \hat{y_1} = \hat{y_2})))
                \\ &~~~~~~~~~~~~~~~~~~~~~~~~~~~~~~~~~~~~~~~~~~~~~~~~ + P(\hat{y_1} \neq \hat{y_2}|c_1, c_2) (H_2(0.5) - H_2(P(\hat{y}_e\text{ correct}|c_1, c_2, \hat{y_1} \neq \hat{y_2}))) \Big)  \\
    \end{align*}
    
    We will use the following notation to simplify the term in the big brackets. \\[0.5cm]
    \begin{tabular}{p{1cm}lp{1cm}l}
        & $\nu^\text{agree}_{c_1, c_2} = c_1 \cdot c_2 +  (1-c_1) \cdot (1 - c_2)$
        & & (conditional probability to agree) \\[0.2cm]
        & $\nu^\text{disagree}_{c_1, c_2} = c_1 \cdot (1 - c_2) +  c_1 \cdot (1 - c_2)$
        & & (conditional probability to disagree) \\[0.2cm]
        & $\eta^\text{agree}_{c_1, c_2} = \frac{c_1 \cdot c_2}{\nu^\text{agree}}$
        & & (conditional confidence upon agreement) \\[0.2cm]
        & $\eta^\text{disagree}_{c_1, c_2} = \frac{\max\{ c_1 \cdot (1 - c_2),  c_1 \cdot (1 - c_2) \}}{\nu^\text{disagree}_{c_1, c_2}}$
        & & (conditional confidence upon disagreement)
    \end{tabular} \\[0.5cm]
    With this notation, the term in the big brackets is
    \begin{align*}
        \phi_{c_2}(c_1) = \Big( 
                \nu^\text{agree}_{c_1, c_2} (H_2(0.5) - H_2(\eta^\text{agree}_{c_1, c_2}))
                + \nu^\text{disagree}_{c_1, c_2} (H_2(0.5) - H_2(\eta^\text{disagree}_{c_1, c_2})) \Big)
    \end{align*}
    and we will show that it is convex in $c_1$. We do so by showing that its second derivative is non-negative. First, we rearrange.
    \begin{align*}
        \phi_{c_2}(c_1)
                &= \Big( 
                \nu^\text{agree}_{c_1, c_2} (H_2(0.5) - H_2(\eta^\text{agree}_{c_1, c_2}))
                + \nu^\text{disagree}_{c_1, c_2} (H_2(0.5) - H_2(\eta^\text{disagree}_{c_1, c_2})) \Big)  \\
                &= H_2(0.5) - \Big( 
                \nu^\text{agree}_{c_1, c_2} H_2(\eta^\text{agree}_{c_1, c_2}
                + \nu^\text{disagree}_{c_1, c_2} H_2(\eta^\text{disagree}_{c_1, c_2}) \Big)  \\
                &= H_2(0.5) - \Big( 
                \nu^\text{agree}_{c_1, c_2} H_2\left(\frac{c_1c_2}{\nu^\text{agree}_{c_1, c_2}}\right)
                + \nu^\text{disagree}_{c_1, c_2} H_2\left(\frac{c_1(1-c_2)}{\nu^\text{disagree}_{c_1, c_2}}\right) \Big)  \\
                &= H_2(0.5) - \Big( 
                \nu^\text{agree}_{c_1, c_2} H_2\left(\frac{c_1c_2}{\nu^\text{agree}_{c_1, c_2}}\right)
                + \nu^\text{disagree}_{c_1, c_2} H_2\left(\frac{c_1(1-c_2)}{\nu^\text{disagree}_{c_1, c_2}}\right) \Big)  \\
                &= H_2(0.5) - \Big( 
                \nu^\text{agree}_{c_1, c_2} \frac{c_1c_2}{\nu^\text{agree}_{c_1, c_2}} \log \left(\frac{\nu^\text{agree}_{c_1, c_2}}{c_1c_2}\right) + \nu^\text{agree}_{c_1, c_2} \frac{(1-c_1)(1-c_2)}{\nu^\text{agree}_{c_1, c_2}} \log \left(\frac{\nu^\text{agree}_{c_1, c_2}}{(1-c_1)(1-c_2)}\right) \\
                &\hspace{2.2cm} + \nu^\text{disagree}_{c_1, c_2} \frac{c_1(1-c_2)}{\nu^\text{disagree}_{c_1, c_2}} \log\left(\frac{\nu^\text{disagree}_{c_1, c_2}}{c_1(1-c_2)}\right) + \nu^\text{disagree}_{c_1, c_2} \frac{(1-c_1)c_2}{\nu^\text{disagree}_{c_1, c_2}} \log\left(\frac{\nu^\text{disagree}_{c_1, c_2}}{(1-c_1)c_2}\right) \Big)  \\
                &= H_2(0.5) - \Big( 
                c_1c_2 \log \left(\frac{\nu^\text{agree}_{c_1, c_2}}{c_1c_2}\right) + (1-c_1)(1-c_2) \log \left(\frac{\nu^\text{agree}_{c_1, c_2}}{(1-c_1)(1-c_2)}\right) \\
                &\hspace{2.2cm} + c_1(1-c_2) \log\left(\frac{\nu^\text{disagree}_{c_1, c_2}}{c_1(1-c_2)}\right) + (1-c_1)c_2 \log\left(\frac{\nu^\text{disagree}_{c_1, c_2}}{(1-c_1)c_2}\right) \Big)  \\
    \end{align*}
    
    The first derivative is 
    \begin{align*}
        \frac{\mathrm{d}}{\mathrm{d}c_1} \phi_{c_2}(c_1) &= 0 - \Big( 
        c_2 \log \left(\frac{\nu^\text{agree}_{c_1, c_2}}{c_1c_2}\right) - \frac{c_2(1-c_2)}{\nu^\text{agree}_{c_1,c_2}}
        -(1-c_2) \log \left(\frac{\nu^\text{agree}_{c_1, c_2}}{(1-c_1)(1-c_2)}\right) + \frac{c_2(1-c_2)}{\nu^\text{agree}_{c_1,c_2}} \\
        &\hspace{1.7cm} + (1-c_2) \log\left(\frac{\nu^\text{disagree}_{c_1, c_2}}{c_1(1-c_2)}\right) - \frac{c_2(1-c_2)}{\nu^\text{disagree}_{c_1,c_2}}
        - c_2 \log\left(\frac{\nu^\text{disagree}_{c_1, c_2}}{(1-c_1)c_2}\right) + \frac{c_2(1-c_2)}{\nu^\text{disagree}_{c_1,c_2}} \Big)  \\
        &= - \Big( 
        c_2 \log \left(\frac{\nu^\text{agree}_{c_1, c_2}}{c_1c_2}\right) 
        -(1-c_2) \log \left(\frac{\nu^\text{agree}_{c_1, c_2}}{(1-c_1)(1-c_2)}\right) \\
        &\hspace{1.7cm} + (1-c_2) \log\left(\frac{\nu^\text{disagree}_{c_1, c_2}}{c_1(1-c_2)}\right) 
        - c_2 \log\left(\frac{\nu^\text{disagree}_{c_1, c_2}}{(1-c_1)c_2}\right) \Big)  \\
        &= - \Big( c_2\log(\nu^\text{agree}_{c_1, c_2}) - c_2 \log(c_1) -c_2 \log(1-c_2) \\
        & ~~~~~~~~~~ - (1-c_2)\log(\nu^\text{agree}_{c_1, c_2}) + (1-c_2)\log(1-c_1) + (1-c_2)\log(1-c_2) \\
        & ~~~~~~~~~~ + (1-c_2)\log(\nu^\text{disagree}_{c_1, c_2}) - (1-c_2)\log(c_1) - (1-c_2)\log(1-c_2) \\
        & ~~~~~~~~~~ - c_2\log(\nu^\text{disagree}_{c_1, c_2}) + c_2 \log(1-c_1) + c_2 \log(c_2) \Big)  \\
        &= \log\left(\frac{c_1}{1-c_1}\right) + (2c_2 - 1)\log\left( \frac{\nu^\text{disagree}_{c_1,c_2}}{\nu^\text{agree}_{c_1,c_2}} \right) \text{.}
    \end{align*}
    
    The second derivative is 
    \begin{align*}
        \frac{\mathrm{d}^2}{\mathrm{d}^2c_1} \phi_{c_2}(c_1) &= 
        \frac{1}{c_1(1-c_1)} + (2c_2-1)\frac{\nu^\text{agree}_{c_1,c_2}}{\nu^\text{disagree}_{c_1,c_2}} \cdot \frac{(2c_2-1)\nu^\text{disagree}_{c_1,c_2} - \nu^\text{agree}_{c_1,c_2}(1-2c_2)}{(\nu^\text{agree}_{c_1,c_2})^2} \\
        &= \frac{1}{c_1(1-c_1)} + (2c_2-1)^2\frac{1}{\nu^\text{disagree}_{c_1,c_2}} \cdot \frac{\nu^\text{disagree}_{c_1,c_2} + \nu^\text{agree}_{c_1,c_2}}{\nu^\text{agree}_{c_1,c_2}} \\
        &= \frac{1}{c_1(1-c_1)} + \frac{(2c_2-1)^2}{\nu^\text{disagree}_{c_1,c_2}\nu^\text{agree}_{c_1,c_2}} \\
        &= \frac{\nu^\text{disagree}_{c_1,c_2}\nu^\text{agree}_{c_1,c_2} + (2c_2-1)^2c_1(1-c_1) }{c_1(1-c_1)\nu^\text{disagree}_{c_1,c_2}\nu^\text{agree}_{c_1,c_2}} \\
        &= \frac{c_2(1-c_2)}{c_1(1-c_1)\nu^\text{disagree}_{c_1,c_2}\nu^\text{agree}_{c_1,c_2}} \text{.}
    \end{align*}

The second derivative is non-negative for $c_2 \in [0.5,1], c_1 \in [0.5,1)$.
Thus, $\sum_{c_2\in\Omega_{f_2}} f_2(c_2)\phi_{c_2}(c_1)$ is a convex scoring function. With Corollary~\ref{corollary:genAndSpec} follows the desired statement.
\end{proof}

In the last proposition, we assume that only the individual classifier's information is constraint but not their accuracy and look at the resulting ensemble information.

\begin{proposition}
    \textbf{(Information constrained specialists and generalists bound the ensemble information)}
    Consider $k$ classifiers with individual information $\information{i}$ and confidence distributions $f_i$ ($i \in \{1..k\}$). For each $i$, let the accuracies corresponding to the individual information be $\tilde{\accuracy{i}}~=~H_2^{-1}(1-\information{i})$. Let $f^\text{generalist}_{\overset{\sim}{\accuracy{i}}}$ and $f^\text{specialist}_{\overset{\sim}{\accuracy{i}}}$ as defined above. 
    Now consider the ensemble information based on the original classifiers, with ensemble confidence distribution $f_e~=~\bigotimes_{i = 1}^k\,f_i$ as well as ensembles with confidence distributions $\tilde{f}^\text{generalist}_e~=~\bigotimes_{i = 1}^k\,f^\text{generalist}_{\overset{\sim}{\accuracy{i}}}$ and  $\tilde{f}^\text{specialist}_e~=~\bigotimes_{i = 1}^k\,f^\text{specialist}_{\overset{\sim}{\accuracy{i}}}$ as in Theorem~\ref{theorem:IndAccIndInfoGroupAcc}. The information of the ensemble classifier is bounded by 
    $$\information{\tilde{f}^\text{generalist}_e} \leq \information{f_e} \leq \information{\tilde{f}^\text{specialist}_e}.$$  
\end{proposition}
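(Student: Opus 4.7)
The plan is to reduce, as in the proofs of Propositions~\ref{prop:IndAccGroupInfo} and \ref{theorem:IndAccIndInfoGroupAcc}, to a single-classifier replacement: fix an index $i$, collect the other $k-1$ distributions into $f_{\mathrm{rest}} = \bigotimes_{j \neq i} f_j$, and show that swapping $f_i$ for $f^{\text{generalist}}_{\tilde{\pi}_i}$ (for the lower bound) or $f^{\text{specialist}}_{\tilde{\pi}_i}$ (for the upper bound) moves $\information{f_i \otimes f_{\mathrm{rest}}}$ in the right direction. Iterating over $i$ then closes the statement. Throughout I use $\tilde{\pi}_i = H_2^{-1}(1 - I_i)$ and $\pi_i = \accuracy{f_i}$, and note that Proposition~\ref{prop:IndAccIndInfo} gives $1 - H_2(\pi_i) \leq I_i \leq 2\pi_i - 1$, hence $(I_i+1)/2 \leq \pi_i \leq \tilde{\pi}_i$.

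For the upper bound I would proceed in two substeps. First, Proposition~\ref{prop:IndAccGroupInfo} applied at the original accuracy $\pi_i$ yields $\information{f_i \otimes f_{\mathrm{rest}}} \leq \information{f^{\text{specialist}}_{\pi_i} \otimes f_{\mathrm{rest}}}$. Second, I pass from $f^{\text{specialist}}_{\pi_i}$ to $f^{\text{specialist}}_{\tilde{\pi}_i}$: conditioning on whether the specialist outputs $C = 0.5$ (contributing no information and leaving the ensemble to $f_{\mathrm{rest}}$) or $C = 1$ (making the ensemble certain) gives the closed form
\begin{equation*}
    \information{f^{\text{specialist}}_\pi \otimes f_{\mathrm{rest}}} \;=\; (2\pi - 1) + 2(1-\pi)\,\information{f_{\mathrm{rest}}},
\end{equation*}
whose derivative in $\pi$ is $2(1 - \information{f_{\mathrm{rest}}}) \geq 0$. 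Raising $\pi$ from $\pi_i$ up to $\tilde{\pi}_i$ therefore only increases the ensemble information.

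The lower bound is the main obstacle. The naive two-step scheme fails: going from $f^{\text{generalist}}_{\tilde{\pi}_i}$ to $f^{\text{generalist}}_{\pi_i}$ lowers the ensemble information (a generalist's ensemble contribution is increasing in its accuracy, by a Blackwell-garbling argument), while Proposition~\ref{prop:IndAccGroupInfo} then ascends from $f^{\text{generalist}}_{\pi_i}$ to $f_i$, so the two inequalities point in opposite directions and do not compose. My proposed route is to combine Proposition~\ref{prop:IndAccGroupInfo}, which yields $\information{f^\downarrow_{\pi_i, I_i} \otimes f_{\mathrm{rest}}} \leq \information{f_i \otimes f_{\mathrm{rest}}}$, with a separate single-step claim
\begin{equation*}
    \information{f^{\text{generalist}}_{\tilde{\pi}_i} \otimes f_{\mathrm{rest}}} \;\leq\; \information{f^\downarrow_{\pi_i, I_i} \otimes f_{\mathrm{rest}}},
\end{equation*}
which says that among distributions with information $I_i$ the maximum-accuracy one (the generalist) is the least effective in an ensemble.

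I would try to establish this remaining inequality by convex analysis. Holding $f_{\mathrm{rest}}$ fixed makes $f \mapsto \information{f \otimes f_{\mathrm{rest}}}$ a linear functional on the polytope $\{f \geq 0 : \int f = 1,\ \int f(1 - H_2) = I_i\}$, so its minimum over this set is attained at an extreme point. The unique one-point extreme is $f^{\text{generalist}}_{\tilde{\pi}_i}$, and a chord comparison to any two-point extreme shows that the generalist is the minimizer exactly when the per-confidence contribution $\phi(c)$ used in the proof of Proposition~\ref{prop:IndAccGroupInfo} is convex not merely in $c$ but in $\psi(c) = 1 - H_2(c)$, i.e.\ when $\phi''(c)\,\psi'(c) \geq \phi'(c)\,\psi''(c)$ pointwise on $[0.5, 1)$. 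Verifying this stronger convexity-in-information property, where $\psi$ is itself convex and increasing and $\phi$ depends on the fixed $f_{\mathrm{rest}}$, is the step I expect to be the hardest of the entire proof, and carrying it through (or replacing it by a direct two-vertex comparison) is the real work.
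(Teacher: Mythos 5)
Your overall architecture (single-classifier replacement plus induction) matches the paper, and your upper-bound chain is correct and actually more elementary than the paper's: the closed form $\information{f^{\text{specialist}}_{\pi}\otimes f_{\mathrm{rest}}}=(2\pi-1)+2(1-\pi)\,\information{f_{\mathrm{rest}}}$ is valid (a specialist either forces certainty or drops out of the weighted vote with zero weight), it is nondecreasing in $\pi$, and $\accuracy{i}\leq\tilde{\accuracy{i}}$ does follow from Proposition~\ref{prop:IndAccIndInfo}.

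The lower bound, however, has a genuine gap, and it is exactly the part that constitutes the bulk of the paper's proof. You correctly reduce everything to the claim that the per-pair ensemble-information contribution $\phi_{c_2}(c_1)$ is convex as a function of $\iota(c_1)=1-H_2(c_1)$, i.e.\ that $\phi''_{c_2}(c_1)\,\iota'(c_1)\geq\phi'_{c_2}(c_1)\,\iota''(c_1)$ on $(0.5,1)$ --- but you then declare this "the real work" and leave it unproven. This is not routine: the paper computes $\mathrm{d}^2\phi_{c_2}/\mathrm{d}\iota(c_1)^2=(\phi''_{c_2}\iota'-\phi'_{c_2}\iota'')/(\iota')^3$ and reduces nonnegativity of the numerator to showing that an auxiliary function $\omega(c_1,c_2)$, built from the agreement/disagreement probabilities $\nu^{\text{agree}}_{c_1,c_2}$ and $\nu^{\text{disagree}}_{c_1,c_2}$, is nonnegative; this in turn requires checking that $\omega$ vanishes at $c_2\in\{0.5,1\}$, that $\partial\omega/\partial c_2\geq 0$ at $c_2=0.5$ (itself a sub-argument in $c_1$), and that $\partial\omega/\partial c_2$ has at most one interior zero, obtained from a quadratic equation in $c_2$. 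Without this computation your argument only establishes the upper half of the proposition. Two further remarks: once this convexity-in-information is in hand, Jensen's inequality applied to the distribution of $\iota(C_1)$ gives the generalist lower bound for $f_i$ directly, so the detour through $f^{\downarrow}_{\accuracy{i},\information{i}}$ and Proposition~\ref{prop:IndAccGroupInfo} is unnecessary (and the same convexity also yields the specialist upper bound in one stroke, which is how the paper handles both sides simultaneously via Lemma~\ref{lemma:jensens}, with the two-point extreme in the information coordinate being the specialist of information $\information{i}$).
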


\begin{proof}
    The ensemble information is
    \begin{align*}
        \information{f} = I\left(Y; O_e\right) &= \sum_{c\in\Omega_{f_e}} f_e(c)\left(H_2(0.5) - H_2(c)\right) \\
                &= \sum_{c_1\in\Omega_{f_1}} \sum_{c_2\in\Omega_{f_2}} f_1(c_1)f_2(c_2) \Big( 
                P(\hat{y_1} = \hat{y_2}|c_1, c_2) (1- H_2(P(\hat{y}_e\text{ correct}|\hat{y}_1 = \hat{y}_2, c_1, c_2 )))
                \\ &~~~~~~~~~~~~~~~~~~~~~~~~~~~~~~~~~~~~~~~~~~~~~~~~ + P(\hat{y_1} \neq \hat{y_2}|c_1, c_2) (1- H_2(P(\hat{y}_e\text{ correct}|\hat{y}_1 \neq \hat{y}_2, c_1, c_2))) \Big)  \\
    \end{align*}
    Our main work in this proof is to show that the function in the double sum is convex in $1-H_2(c_1)$ so that we can apply Corollary~\ref{corollary:genAndSpec} again. Let \\
    
    \begin{tabular}{p{1cm}lp{1cm}l}
        & $\nu^\text{agree}_{c_1, c_2} = c_1 \cdot c_2 +  (1-c_1) \cdot (1 - c_2)$
        & & (conditional probability to agree) \\
        & $\nu^\text{disagree}_{c_1, c_2} = c_1 \cdot (1 - c_2) +  c_1 \cdot (1 - c_2)$
        & & (conditional probability to disagree) \\
        & $\eta^\text{agree}_{c_1, c_2} = \frac{c_1 \cdot c_2}{\nu^\text{agree}}$
        & & (conditional confidence upon agreement) \\
        & $\eta^\text{disagree}_{c_1, c_2} = \frac{\max\{ c_1 \cdot (1 - c_2),  c_1 \cdot (1 - c_2) \}}{\nu^\text{disagree}_{c_1, c_2}}$
        & & (conditional confidence upon disagreement)
    \end{tabular} \\[0.5cm]
    Also denote the local information by $\iota(c) = H_2(0.5)-H_2(c)$. The relevant term in the big brackets is 
    \begin{align*}
        \phi_{c_2}(c_1) = \Big( 
                \nu^\text{agree}_{c_1, c_2} \iota(\eta^\text{agree}_{c_1, c_2})
                + \nu^\text{disagree}_{c_1, c_2} \iota(\eta^\text{disagree}_{c_1, c_2}) \Big)  \\
    \end{align*}
    % %
    We will show that $\phi_{c_2}(c_1)$ is convex in $\iota(c_1)$ by showing that the second derivative is non-negative.

    \begin{align*}
        \frac{\mathrm{d}^2 \phi_{c_2}(c_1)}{ \mathrm{d} \iota(c_1)^2}
        &= \frac{\mathrm{d}}{ \mathrm{d} \iota(c_1)} \frac{\mathrm{d} \phi_{c_2}(c_1)}{ \mathrm{d} \iota(c_1)} \\
        &= \frac{\mathrm{d}}{ \mathrm{d} \iota(c_1)} \left( \frac{ \frac{\mathrm{d} \phi_{c_2}(c_1)}{\mathrm{d} c_1} }{ \frac{ \mathrm{d} \iota(c_1) }{ \mathrm{d} c_1 }} \right) \\
        &= \frac{\phi''_{c_2} \iota' - \phi'_{c_2} \iota'' }{ ( \iota' ) ^3} \\
    \end{align*}    
    We had already derived $\iota'$, $\iota''$, $\phi'_{c_2}{c_1}$ and $\phi''_{c_2}{c_1}$ in the proof of Proposition~\ref{prop:IndAccGroupInfo}:
    \begin{align*}
        \iota'(c) = \frac{\mathrm{d}}{\mathrm{d}c} \iota(c) &= \log\left( \frac{c}{1-c} \right) \\
        \iota''(c) = \frac{\mathrm{d}^2}{\mathrm{d}c^2} \iota(c) &= \frac{1}{c(1-c)} \\
        \phi'_{c_2}{c_1} = \frac{\mathrm{d}}{\mathrm{d}c_1} \phi_{c_2}(c_1) 
        &= \log\left(\frac{c_1}{1-c_1}\right) + (2c_2 - 1)\log\left( \frac{\nu^\text{disagree}_{c_1,c_2}}{\nu^\text{agree}_{c_1,c_2}} \right) \\
        \phi''_{c_2}{c_1} = \frac{\mathrm{d}^2}{\mathrm{d}^2c_1} \phi_{c_2}(c_1) 
         &= \frac{c_2(1-c_2)}{c_1(1-c_1)\nu^\text{disagree}_{c_1,c_2}\nu^\text{agree}_{c_1,c_2}} \text{.}
    \end{align*}
    
    Now we can put together the second derivative of $\phi_{c_2}(c_1)$ with respect to $\iota(c_1)$. For convexity, we want to show that this is non-negative.
    \begin{align*}
        &\frac{\mathrm{d}^2 \phi_{c_2}(c_1)}{ \mathrm{d} \iota(c_1)^2} \geq 0 \\
        &\iff \frac{\phi''_{c_2} \iota' - \phi'_{c_2} \iota'' }{ ( \iota' ) ^3} \geq 0 \\
        &\overset{(1)}{\iff} \phi''_{c_2} \iota' - \phi'_{c_2} \iota'' \geq 0 \\
        &\iff \frac{c_2(1-c_2)}{c_1(1-c_1)\nu^\text{disagree}_{c_1,c_2}\nu^\text{agree}_{c_1,c_2}} \log\left(\frac{c_1}{1-c_1}\right) - \left(\log\left(\frac{c_1}{1-c_1}\right) + (2c_2 - 1)\log\left( \frac{\nu^\text{disagree}_{c_1,c_2}}{\nu^\text{agree}_{c_1,c_2}} \right) \right)\frac{1}{c_1(1-c_1)} \geq 0 \\
        &\overset{(2)}{\iff} \frac{c_2(1-c_2)}{\nu^\text{disagree}_{c_1,c_2}\nu^\text{agree}_{c_1,c_2}} \log\left(\frac{c_1}{1-c_1}\right) - \left(\log\left(\frac{c_1}{1-c_1}\right) + (2c_2 - 1)\log\left( \frac{\nu^\text{disagree}_{c_1,c_2}}{\nu^\text{agree}_{c_1,c_2}} \right) \right) \geq 0 \\
        &\iff \log\left(\frac{c_1}{1-c_1}\right) \left( \frac{c_2(1-c_2)}{\nu^\text{disagree}_{c_1,c_2}\nu^\text{agree}_{c_1,c_2}}  - 1 \right) - (2c_2 - 1)\log\left( \frac{\nu^\text{disagree}_{c_1,c_2}}{\nu^\text{agree}_{c_1,c_2}} \right)  \geq 0 \\
        &\iff \log\left(\frac{c_1}{1-c_1}\right) \left( \frac{c_2(1-c_2) - \nu^\text{disagree}_{c_1,c_2}\nu^\text{agree}_{c_1,c_2}}{\nu^\text{disagree}_{c_1,c_2}\nu^\text{agree}_{c_1,c_2}} \right) - (2c_2 - 1)\log\left( \frac{\nu^\text{disagree}_{c_1,c_2}}{\nu^\text{agree}_{c_1,c_2}} \right)  \geq 0 \\
        &\iff \log\left(\frac{c_1}{1-c_1}\right) \left( \frac{(2c_2 -1 )^2c_1(c_1-1)}{\nu^\text{disagree}_{c_1,c_2}\nu^\text{agree}_{c_1,c_2}} \right) - (2c_2 - 1)\log\left( \frac{\nu^\text{disagree}_{c_1,c_2}}{\nu^\text{agree}_{c_1,c_2}} \right)  \geq 0 \\
        &\overset{(3)}{\iff} \log\left(\frac{c_1}{1-c_1}\right) \left( \frac{(2c_2 -1 )c_1(c_1-1)}{\nu^\text{disagree}_{c_1,c_2}\nu^\text{agree}_{c_1,c_2}} \right) - \log\left( \frac{\nu^\text{disagree}_{c_1,c_2}}{\nu^\text{agree}_{c_1,c_2}} \right)  \geq 0 \\
        &\iff \log\left( \frac{\nu^\text{agree}_{c_1,c_2}}{\nu^\text{disagree}_{c_1,c_2}} \right) - \log\left(\frac{c_1}{1-c_1}\right) \left( \frac{(2c_2 -1 )c_1(1-c_1)}{\nu^\text{disagree}_{c_1,c_2}\nu^\text{agree}_{c_1,c_2}} \right)  \geq 0 \\
    \end{align*}
    In (1), (2) and (3) we multiply with $(\iota')^3$, $\frac{1}{c_1(1-c_1)}$ and $\frac{1}{2c_2 - 1}$, respectively. These terms are larger than 0 in $c_1, c_2 \in (0.5, 1)$ so that the inequality sign does not change. It remains to show that
    \begin{align*}
        \omega(c_1, c_2) := \log\left( \frac{\nu^\text{agree}_{c_1,c_2}}{\nu^\text{disagree}_{c_1,c_2}} \right) - \log\left(\frac{c_1}{1-c_1}\right) \left( \frac{(2c_2 -1 )c_1(1-c_1)}{\nu^\text{disagree}_{c_1,c_2}\nu^\text{agree}_{c_1,c_2}} \right) \geq 0 \text{.}
    \end{align*}
    We will do so by switching to the partial derivative with respect to $c_2$ (instead of $c_1$ as above) and demonstrate that (i) $\omega(c_1,c_2) = 0$ for $c_2 = 0.5$ and $c_2 = 1$, (ii) $\frac{\partial \omega(c_1, c_2)}{\partial c_2}\Bigr|_{\substack{c_2=0.5}} \geq 0$, and (iii) $\frac{\partial \omega(c_1, c_2)}{\partial c_2} = 0$ for only one $c_2\in(0.5,1)$. 
    
    (i) 
    \begin{align*}
        \omega(c_1, 0.5) &= \log\left( \frac{0.5}{0.5} \right) - \log\left(\frac{c_1}{1-c_1}\right) \left( \frac{(2\cdot 0.5 -1 )c_1(1-c_1)}{0.5\cdot 0.5} \right) = 0 - 0 = 0 \\
        \omega(c_1, 1) &= \log\left( \frac{c_1}{1-c_1} \right) - \log\left(\frac{c_1}{1-c_1}\right) \left( \frac{(2\cdot 1 -1 )c_1(1-c_1)}{c_1(1-c_1) } \right) = \log\left( \frac{c_1}{1-c_1} \right) - \log\left(\frac{c_1}{1-c_1}\right) = 0 
    \end{align*}
    
    (ii) 
    \begin{align*}
        % \frac{\partial \omega(c_1, c_2)}{\partial c_2}\Bigr|_{\substack{c_1=0.5}}
        \frac{\partial \omega(c_1, c_2)}{\partial c_2}
        &= \frac{\nu^\text{disagree}_{c_1,c_2}}{\nu^\text{agree}_{c_1,c_2}} \cdot \frac{(2c_1 - 1)\nu^\text{disagree}_{c_1,c_2} - \nu^\text{agree}_{c_1,c_2}(-1)(2c_1-1)}{(\nu^\text{disagree}_{c_1,c_2})^2} \\
        &~~~~ - 
        \log\left(\frac{c_1}{1-c_1}\right)c_1(1-c_1) \left( \frac{2\nu^\text{agree}_{c_1,c_2}\nu^\text{disagree}_{c_1,c_2} - (2c_2-1)(-1)(2c_2-1)(2c_1-1)^2}{(\nu^\text{agree}_{c_1,c_2}\nu^\text{disagree}_{c_1,c_2})^2} \right) \\
        &= \frac{2c_1 - 1}{\nu^\text{agree}_{c_1,c_2}\nu^\text{disagree}_{c_1,c_2}} - 
        \log\left(\frac{c_1}{1-c_1}\right)c_1(1-c_1) \left( \frac{2\nu^\text{agree}_{c_1,c_2}\nu^\text{disagree}_{c_1,c_2} + (2c_2-1)^2(2c_1-1)^2}{(\nu^\text{agree}_{c_1,c_2}\nu^\text{disagree}_{c_1,c_2})^2} \right) \\
        \frac{\partial \omega(c_1, c_2)}{\partial c_2}\Bigr|_{\substack{c_2=0.5}}
        &= \frac{2c_1 - 1}{0.5\cdot 0.5} - \log\left(\frac{c_1}{1-c_1}\right)c_1(1-c_1) \left( \frac{2\cdot 0.5\cdot 0.5 + (2\cdot 0.5-1)^2(2c_1-1)^2}{(0.5\cdot 0.5)^2} \right) \\
        &= (8c_1 - 4) - 8c_1(1-c_1)\log\left( \frac{c_1}{1-c_1} \right) \\
    \end{align*}
    To finish (ii) we have to show that $\overset{\sim}{\omega}(c_1) := \frac{\partial \omega(c_1, c_2)}{\partial c_2}\Bigr|_{\substack{c_2=0.5}} \geq 0$. We use a similar strategy as before: We show that (ii.i) $\overset{\sim}{\omega}(0.5) = 0$ and that (ii.ii) $\frac{\mathrm{d}\overset{\sim}{\omega}(c_1)}{\mathrm{d}c_1} \geq 0$.
    
    (ii.i)
    \begin{align*}
        \overset{\sim}{\omega}(0.5) = (8\cdot0.5 - 4) - 8\cdot 0.5\cdot(1-0.5) \log\left( \frac{0.5}{1-0.5} \right) = 0 - 2\cdot 0 = 0
    \end{align*}
    
    (ii.ii)
    \begin{align*}
        \frac{\mathrm{d} \overset{\sim}{\omega}(c_1)}{\mathrm{d} c_1} 
        &= 8 - 8\left( (1-2c_1)\log\left( \frac{c_1}{1-c_1} \right) + c_1(1-c_1)\frac{1-c_1}{c_1}\frac{1\cdot(1-c_1) - (-c_1)}{(1-c_1)^2} \right) \\
        &= 8 - 8\left( (1-2c_1)\log\left( \frac{c_1}{1-c_1} \right) +1 \right) \\
        &= 8 (2c_1 - 1) \log \left( \frac{c_1}{1-c_1}  \right) \geq 0
    \end{align*}
    
    Taken together, $\overset{\sim}{\omega}(c_1)$ starts non-negative at $c_1= 0.5$ (ii.i) and only increases for larger $c_1$ (ii.ii). This finishes (ii) showing that
    \begin{align*}
        \frac{\partial \omega(c_1, c_2)}{\partial c_2}\Bigr|_{\substack{c_2=0.5}}
        = (8c_1 - 4) - 8c_1(1-c_1)\log\left( \frac{c_1}{1-c_1} \right) \geq 0 \\
    \end{align*}
    
    (iii) 
    \begin{align*}
        &\frac{\partial \omega(c_1, c_2)}{\partial c_2} \overset{!}{=} 0 \\
        \iff& \frac{2c_1 - 1}{\nu^\text{agree}_{c_1,c_2}\nu^\text{disagree}_{c_1,c_2}} - 
        \log\left(\frac{c_1}{1-c_1}\right)c_1(1-c_1) \left( \frac{2\nu^\text{agree}_{c_1,c_2}\nu^\text{disagree}_{c_1,c_2} + (2c_2-1)^2(2c_1-1)^2}{(\nu^\text{agree}_{c_1,c_2}\nu^\text{disagree}_{c_1,c_2})^2} \right) \overset{!}{=} 0 \\
        \overset{(1)}{\iff}& (2c_1 - 1)\nu^\text{agree}_{c_1,c_2}\nu^\text{disagree}_{c_1,c_2} - 
        \log\left(\frac{c_1}{1-c_1}\right)c_1(1-c_1) \left( 2\nu^\text{agree}_{c_1,c_2}\nu^\text{disagree}_{c_1,c_2} + (2c_2-1)^2(2c_1-1)^2 \right) \overset{!}{=} 0 \\
        \iff& \left( (-8c_1^3 + 12c_1^2 - 6 c_1 + 1) - c_1(1-c_1)\log\left(\frac{c_1}{1-c_1}\right)(8c_1^2 - 8c_1 + 2) \right)c_2^2 \\
        & -\left( (-8c_1^3 + 12c_1^2 - 6 c_1 + 1) - c_1(1-c_1)\log\left(\frac{c_1}{1-c_1}\right)(8c_1^2 - 8c_1 + 2) \right)c_2 \\
        & + \left( (-2c_1^3 + 3c_1^2 - c_1) - c_1(1-c_1)\log\left(\frac{c_1}{1-c_1}\right) (2 c_1^2 -2c_1 +1) \right)  \overset{!}{=} 0 \\
        \overset{(2)}{\iff}& c_2^2 - c_2 + \frac{\left( (-2c_1^3 + 3c_1^2 - c_1) - c_1(1-c_1)\log\left(\frac{c_1}{1-c_1}\right) (2 c_1^2 -2c_1 +1) \right)}{\left( (-8c_1^3 + 12c_1^2 - 6 c_1 + 1) - c_1(1-c_1)\log\left(\frac{c_1}{1-c_1}\right)(8c_1^2 - 8c_1 + 2) \right)} \overset{!}{=} 0 \\
    \end{align*}
    
    At (1) we multiply with $(\nu^\text{agree}_{c_1,c_2}\nu^\text{disagree}_{c_1,c_2})^2$ and in (2) we divide by the constant in the denominator. Both are larger than 0 in $c_1, c_2 \in(0.5, 1)$. The result is a quadratic equation that has at most one zero in $c_2\in(0.5, 1)$. This completes (iii).
    
    Taken together, for any $c_1 \in (0.5, 1)$, $\omega(c_1, c_2)$ is zero at the corner cases $c_2 = 0.5$ and $c_2 = 1$ (i), increases from $c_2=0.5$ on (ii) and only changes monotonicity once (iii) so that $\omega(c_1, c_2)$ is non-negative.
    
    With that, $\phi_{c_2}$ is convex in $\iota(c_1)$. Lemma~\ref{lemma:jensens} yields the desired result.
    
\end{proof}

\end{document}